\newtheorem{remark}{Remark}
\newtheorem{myprop}{Proposition}
\newtheorem{definition}{Definition}[section]
\renewcommand{\bm}{\boldsymbol}
\newcommand\be{\begin{equation}}
\newcommand\ee{\end{equation}}
\newcommand\btheta{\boldsymbol{\theta}}
\newcommand\bphi{\boldsymbol{\phi}}
\newcommand\bz{\boldsymbol{z}}
\newcommand\bX{\boldsymbol{X}}
\newcommand\bx{\boldsymbol{x}}
\newcommand\bxi{\boldsymbol{x}^{(i)}}
\newcommand\bzi{\boldsymbol{z}^{(i)}}
\newcommand\inth{_{\btheta}}
\newcommand\inphi{_{\bphi}}
\def\ps@pprintTitle{%
  \let\@oddhead\@empty
  \let\@evenhead\@empty
  \let\@oddfoot\@empty
  \let\@evenfoot\@oddfoot
}
\begin{document}

\begin{frontmatter}


\title{Bayesian  multiscale deep generative model for the solution of high-dimensional inverse problems}

\author[label1,label2,label3,label4]{Yingzhi Xia}
\ead{xiayzh@shanghaitech.edu.cn}
 \address[label1]{School of Information Science and Technology, ShanghaiTech University, Shanghai, China}

\author[label2]{Nicholas Zabaras\corref{cor1}}
\ead{nzabaras@gmail.com}
\ead[url]{https://www.zabaras.com/}

\address[label2]{Scientific Computing and Artificial Intelligence (SCAI) Laboratory, University of Notre Dame, 311 Cushing Hall, Notre Dame, IN 46556, USA}
\address[label3]{Shanghai Institute of Microsystem and Information Technology, Chinese Academy of Sciences, Shanghai, China} \address[label4]{University of Chinese Academy of Sciences, Beijing, China}

\cortext[cor1]{Corresponding author}

\begin{abstract}
Estimation of spatially-varying parameters for computationally expensive forward models governed by partial differential equations is addressed. A novel multiscale Bayesian inference approach is introduced based on deep probabilistic generative models. Such generative models provide a flexible representation by inferring on each scale a low-dimensional latent encoding while allowing  hierarchical parameter generation from coarse- to fine-scales. Combining the multiscale generative model with Markov Chain Monte Carlo (MCMC), inference across scales is achieved enabling us to efficiently obtain  posterior parameter samples at various scales. The estimation of coarse-scale parameters using a low-dimensional latent embedding  captures global and notable parameter features using an inexpensive but inaccurate solver. MCMC sampling of the fine-scale parameters is enabled by  utilizing the posterior information in the immediate coarser-scale. In this way, the global features are identified in the coarse-scale with inference of  low-dimensional variables and inexpensive forward computation, and the local features are refined and corrected in the  fine-scale. The developed method is demonstrated with two types of permeability estimation for flow in heterogeneous media. One is a Gaussian random field (GRF) with uncertain length scales, and the other is channelized permeability with the two regions defined by different GRFs. The obtained results indicate that the  method  allows  high-dimensional parameter estimation while exhibiting stability, efficiency and accuracy.

\end{abstract}

\begin{keyword}
Bayesian Inference \sep Inverse Problems \sep Deep Generative Model \sep High-dimensionality \sep  Multiscale Estimation  \sep Markov Chain Monte Carlo  
\end{keyword}

\end{frontmatter}


\section{Introduction}
\label{section_intro}
Inverse problems are important but challenging in many fields like geophysics, medical imaging, groundwater flows, and other. They address the estimation of model parameters from partial and noisy observations~\cite{tarantola2005inverse}.  Two  approaches for addressing  inverse problems are typically employed. The deterministic methods convert parameter identification to an optimization problem that involves minimizing the misfit between model predictions and observations. Since limited observations are insufficient to identify  the underlying parameters, regularization methods~\cite{engl1996regularization, zhdanov2002geophysical, hansen2010discrete} are used to address this ill-posed problem. On the other hand, Bayesian inference approaches play a fundamental role in inverse problems allowing us to quantify the uncertainty of the solution and providing natural regularization via  prior knowledge~\cite{stuart2010inverse}. They treat parameters as random variables to highlight the uncertainty in their estimation. The non-uniqueness of the solution is addressed by computing the posterior of the parameters rather than a single point estimate. Variational inference (VI) and Monte Carlo (MC) methods are two main approximation methods to deal with the computation of the intractable posterior distribution. VI~\cite{blei2017variational, bruder2018beyond, jin2012variational,atkinson2019structured} is easy to implement but limited by the family of variational 
distributions. Most Bayesian approaches emphasize Markov Chain Monte Carlo (MCMC) methods that aim to generate samples from the posterior distribution that subsequently are used to produce statistics of the quantities of interest. 
\par
  
MCMC methods have  the appealing property that they are asymptotically exact. Thus, many previous works have studied the MCMC method or its variants for Bayesian inverse problems (BIPs). However, there are two main difficulties for these methods. First, the dimensionality of the spatially-varying parameters can be high (e.g. equal to  the number of grid points)  leading to the so called  curse of dimensionality. Second, these sampling-based approaches require multiple evaluations of the forward model (likelihood evaluation). Each evaluation involves a full forward simulation, which is computationally prohibiting for many practical problems governed by partial differential equations (PDEs).
\par
 
For the first problem identified above, given prior information, parameterization methods are often used to provide a low-dimensional embedding of the unknown spatially-varying parameter. The common method in BIPs is the truncated  Karhunen-Lo\`{e}ve expansion (KLE) for the estimation of Gaussian random fields (GRFs)~\cite{liao2019adaptive, mo2019deep}, where inference is performed over a small number of expansion coefficients. With limitations and strong assumptions on the mean and covariance functions, the KLE cannot reflect the true prior information, and is not a good choice for fields with nontrivial correlation structure. To address this, sparse grid interpolation~\cite{ma2009efficient,wan2011bayesian} and wavelet-based~\cite{ellam2016bayesian} methods have been proposed. However, such methods still have difficulties in the  parameterization of  complex parameters such as multi-modal or non-Gaussian random fields~\cite{laloy2017inversion,mo2019integration}. 
 
 Deep generative models (DGM)~\cite{goodfellow2014generative,kingma2013auto,rezende2015variational} provide a good choice for  parameterization. DGMs are much more flexible and scalable, where the prior information is naturally incorporated into the training data without strong assumptions. Once the DGM is trained, one can sample latent variables from a low-dimensional simple distribution (like a Gaussian), and then generate the spatially-varying parameter using the pre-trained neural network. Many recent studies integrated the DGM-based parameterization method with various inference methods to tackle  non-Gaussian parameter estimation problems, including conditional invertible neural networks~\cite{padmanabha2020solving}, variational autoencoder (VAE) with MCMC or ensemble smoother~\cite{laloy2017inversion,canchumuni2019towards}, generative adversarial network (GAN) with MCMC or Metropolis-adjusted Langevin algorithm (MALA)~\cite{laloy2018training,mosser2020stochastic}, adversarial autoencoder (AAE) with iterative local updating ensemble smoother (ILUES)~\cite{mo2019integration} and so on.
 \par
 
A potential remedy of the requirement of MCMC methods for multiple calls to the forward model solver is to build a surrogate forward model, such as polynomial chaos~\cite{li2014adaptive,marzouk2007stochastic}, Gaussian process \cite{chen2020anova,bilionis2013multi}, or deep neural networks~\cite{li2019hierarchical,zhu2018bayesian,mo2018deep}. However, the surrogate model often introduces epistemic uncertainty that will result in broadening of the posterior for parameter estimation~\cite{bilionis2013solution}. Furthermore, it is still a difficult task to construct an accurate surrogate for forward models with high-dimensional input  using limited data. To reduce the computational burden of the simulation,  multiscale~\cite{ferreira2007multiscale,koutsourelakis2009multi,higdon2002bayesian} and multi-fidelity methods~\cite{peherstorfer2018survey,yan2019adaptive} have been applied to accelerate the Bayesian computation without sacrificing accuracy. The two-stage MCMC~\cite{efendiev2006preconditioning} designed a preconditioned Metropolis-Hastings algorithm to improve the acceptance rate in the fine-scale model. Inspired by the multilevel Monte Carlo, Multilevel MCMC methods~\cite{dodwell2015hierarchical, hoang2013complexity, beskos2017multilevel, cui2019multilevel} are proposed for BIPs to accelerate the estimation of the posterior distribution. All these methods are indeed promising for BIPs  by leveraging the advantages of the accurate fine-scale model and the efficiency of the coarse-scale model.
\par

In this work,  we propose a multiscale deep generative model (MDGM) exploiting the multiscale nature of the parameter of 
interest. This extends  existing DGMs and allows us to generate parameters on various scales with different discretization/resolution. Since GANs are notorious on training stability and mode collapse, and flow-based models~\cite{dinh2014nice,tang2020dde} require an identical-dimensional latent space to the parameter, we derive the MDGM based on VAE. In the MDGM, we design a specific latent space that includes two latent variables, a low-dimensional latent variable that controls global and salient features and a higher-dimensional latent variable that defines  local and detailed features. Utilizing the hierarchical representation of the parameter and latent spaces, the multiscale inference is performed in the low-dimensional latent space rather than the original parameter space. This allows us to explore the posterior of the parameter from coarse- to fine-scales with a significant computational saving. Once most of the salient features are identified in the coarse-scale  using a computationally inexpensive coarse-solver, the fine-scale estimation requires only few fine-scale simulations to refine the coarse-scale parameter estimation.

The main contributions of this work are summarized as follows. 
(1) Based on the vanilla VAE, we extend and derive the MDGM, which can generate spatial parameters at various scales with an appropriately designed latent space. (2) The proposed multiscale inference method performs efficiently inference across scales based on the MDGM. 
(3) A flexible scheme allows efficient estimation of rough/global parameter features with  coarse-scale inference and parameter refinement with fine-scale inference. 
(4) The proposed method is demonstrated in Gaussian and non-Gaussian inversion tasks.
\par

The rest of the paper is organized as follows. Section~\ref{sec:Definition} provides the definition of the inverse problem and addresses the limitations of standard Bayesian approaches for distributed parameter estimation.  Section~\ref{sec:Hierarchicalparameterization} introduces the  generation of the multiscale training datasets.  The big picture of the multiscale estimation problem using hierarchical generative models is addressed in  Section~\ref{sec:Model_def}.  The one-scale and multiscale generative models are 
derived in Sections~\ref{sec:vdgm} and~\ref{sec:MDGM}, respectively. The Bayesian inversion using the multiscale generative model is discussed in Sections~\ref{sec:MH} and~\ref{sec:MultiscaleMH}. Section~\ref{sec:Examples} presents the results of various numerical examples in the estimation of Gaussian and channelized permeability in porous media flows and Section~\ref{sec:Conclusions} summarizes this work.

\section{Problem Definition}
\label{sec:Definition}
\subsection{Bayesian inverse problems}
In this section, we introduce the inverse problems of interest and  briefly discuss the limitations of standard Bayesian inference approaches to inverse problems.  
We consider a spatially-varying parameter $\bx(\bm{s})$   usually represented as a random field $\bx(\bm{s},\omega)$, where $\bm{s}$ is spatial location in the domain $\mathcal{S}$ and $\omega$ is a random event in the sample space $\bm{\Omega}$. This random field is discretized by a random vector $\bx\in \mathbb{R}^{M}$ using standard finite element or finite difference discretization approaches. In our inverse problem setting, $\bx(\bm{s})$, will be considered as our primary quantity of interest.  
\par 

Let us consider a physical system governed by PDEs in a given spatial domain. We assume $\bx(\bm{s},\omega)$ to be an input parameter (e.g. material property) of this model.   Of interest to this work are distributed properties with multiscale features. The forward model concerning  this physical system is usually considered as a function $\mathcal{F} : \mathbb{R}^{M} \rightarrow  \mathbb{R}^{D}$, which maps the unknown parameters $\bx$ to the observable output $\mathcal{D}_{obs} \in \mathbb{R}^{D}$ with a measurement  noise $\bm{\xi} \in \mathbb{R}^{D}$:

\begin{equation}
    \mathcal{D}_{obs} = \mathcal{F}(\bx) + \bm{\xi}.
\end{equation}
The inverse problem is to infer the unknown parameters $\bx$ based on   these noisy data $\mathcal{D}_{obs}$.  In the particular problem we will focus in 
Section~\ref{sec:Examples}, our goal is to estimate the permeability
field in a porous media flow using pressure measurements.

\par

Without prior information about the measurement system and/or model evaluation, we assume that  $\bm{\xi}$ is a zero-mean Gaussian noise with covariance matrix $\bm{\Sigma}$, i.e.,  $\bm{\xi} \sim \mathcal{N}\left(\bm{0}, \bm{\Sigma}\right)$. Since often $dim(\mathbb{R}^{D})\ll dim(\mathbb{R}^{M})$, the inverse problem  is highly ill-posed and  identification of the parameter $\bx$ is  highly-sensitive to this noise. The Bayesian paradigm~\cite{stuart2010inverse} provides a general and natural way to treat  the unknown parameter $\bx$ as random variable to highlight the uncertainty in the inference process. Given the observation data $\mathcal{D}_{obs}$, one calculates the posterior probability $\pi(\bx | \mathcal{D}_{obs})$ via  Bayes' formula as follows: 

\begin{equation}
    \pi(\bx| \mathcal{D}_{obs})=\frac{\mathcal{L}_e(\mathcal{D}_{obs} | \bx) \pi(\bx)}{\int \mathcal{L}_e(\mathcal{D}_{obs} | \bx) \pi(\bx) \mathrm{d} \bx},\label{eq: Bayes rule}
\end{equation}
where $\pi(\bx)$ is the prior distribution, and $\mathcal{L}_e(\mathcal{D}_{obs} | \bx)$ is the likelihood function which evaluates the discrepancy between the forward predictions and observations. For the assumed case of Gaussian noise, we can define the likelihood function as

\begin{equation}
   \mathcal{L}_e(\mathcal{D}_{obs} | \bx) \propto \text{exp}\left(-\frac{1}{2}\left(\mathcal{D}_{obs}-\mathcal{F}(\bx)\right)^T \bm{\Sigma}^{-1} \left(\mathcal{D}_{obs}-\mathcal{F}(\bx)\right)\right).
\end{equation}
As the parameter  $\bx$ of interest is  high-dimensional,  the normalization constant in Eq.~\eqref{eq: Bayes rule} involves computing a high-dimensional integral that is often an intractable process. Thus approximate inference for the posterior $\pi(\bx | \mathcal{D}_{obs})$ is performed using the unnormalized density, i.e.,

\begin{equation}
    \pi(\bx | \mathcal{D}_{obs}) \propto \mathcal{L}_e(\mathcal{D}_{obs} | \bx) \pi(\bx).
    \label{eq:PosteriorOfX}
\end{equation}

\subsection{Multiscale inference with MDGM}
Without a closed-form expression, the posterior distribution in Eq.~\eqref{eq:PosteriorOfX} must be computed numerically.  To this end, MCMC~\cite{liao2019adaptive,xu2020gaussian} or other approximation methods like Ensemble Kalman filter (EnKF)~\cite{mo2019integration,mo2018deep} are often employed. However,  there are still two main difficulties for these methods. MCMC and EnKF implementations will often fail to directly approximate the posterior of the high-dimensional spatially-varying parameter $\bx$. Moreover, for complex parameters (e.g. channelized permeability), the prior information cannot be easily cast as an explicit probability distribution. However, one often has access to a historical dataset $\bX\equiv\{\bxi\}_{i=1}^{N}$~\cite{mo2019integration,liu2019deep,tang2020deep} where $\bxi$ can be seen as samples from the underlying prior distribution $\pi(\bx)$. One could use $\bX$ to approximate $\pi(\bx)$ with its empirical measure. However, in this work, we will use this dataset to approximate the prior distribution with a generative model as follows:

 \begin{equation}
 p(\bx|\btheta) = \int p_{ \btheta}(\bx | \bz) p(\bz) d \bz,
 \label{eq:marginal_target}
 \end{equation}
 where $p(\bz)$ is a simple distribution (e.g. Gaussian) for the latent variable $\bz \in \mathbb{R}^{d}$, and $p_{ \btheta}(\bx|\bz)$ is a generative model parameterized by $ \btheta$ (decoder). In a DGM like a VAE, one can choose a Gaussian distribution $\mathcal{N}(\bm{0}, \bm{I})$ for $p(\bz)$, and $\mathcal{N}(\mu_{ \btheta}(\bz), \sigma^2 \bm{I})$ for $p_{ \btheta}(\bx|\bz)$, where $\mu_{ \btheta}(\bz)$ is the  output of the decoder neural network, and $\sigma$ is a  hyperparameter that does not depend on the latent variable $\bz$.   It is common practice in the literature~\cite{laloy2017inversion,mo2019integration,dorta2018structured,dorta2018training} to  ignore the noise and approximate the density
$p_{ \btheta}(\bx | \bz)$ with the point estimate $\mu_{ \btheta}(\bz)$.   Once the model is trained, we can thus define a mapping from the latent space to the original parameter space, i.e. $\bx = \mu_{ \btheta}(\bz)$.
 
The  parameters $\btheta$ of the generative model $p_{ \btheta}(\bx|\bz)$ can be computed using the given training dataset $\bX$ by minimizing the Kullback-–Leibler (KL) divergence $ D_{KL}\left(\pi(\bx)||p(\bx| \btheta)\right)$~\cite{schoberl2019predictive}, where $\bxi \stackrel{i.i.d}{\sim} \pi(\bx)$. This leads to the equivalent problem of maximizing the marginal log-likelihood:
 \begin{eqnarray}
 \log p(\bX| \btheta) &=& \sum_{i=1}^N \log p(\bxi| \btheta) \nonumber\\
 &=&\sum_{i=1}^N \log \ \int p_{ \btheta}(\bxi | \bz^{(i)}) p(\bz^{(i)}) d \bz^{(i)}.
 \label{eq:maginal log_likelihood}
 \end{eqnarray}
The above marginalization is potentially very difficult to compute involving an intractable integration. Using Expectation-Maximization is also intractable as that will require the posterior $p_{ \btheta}(\bz|\bx)$ that is also computationally intractable. The marginal likelihood for the dataset $\{\bxi\}_{i=1}^{N}$ can be reformulated using a variational density $q_{\bphi}(\bz^{(i)}|\bxi)$ (encoder) parameterized by $\bphi$. The details of these calculations will be given in Section~\ref{sec:vdgm}.
 
 To approximate the posterior distribution in Eq.~\eqref{eq:PosteriorOfX} using the MCMC method, we are interested to generate realizations sampled from the underlying prior distribution $\pi(\bx)$. 
 To sample realizations from $\pi(\bx)$ using the generative model, one can sample $\bz_i$ from the simple and low-dimensional distribution $p(\bz)$ and then obtain the realization $\bxi$ using the decoder model $\mu_{ \btheta}(\bz)$. 
 Alternatively, instead of approximating the posterior $\pi(\bx | \mathcal{D}_{obs})$ in Eq.~\eqref{eq:PosteriorOfX}, one can instead evaluate the low-dimensional posterior $p(\bz | \mathcal{D}_{obs})$ 
using the following unnormalized density:

\begin{equation}
    p(\bz | \mathcal{D}_{obs}) \propto \mathcal{L}_e(\mathcal{D}_{obs} | \bz) p(\bz),
    \label{eq:PosteriorOfZ}
\end{equation}
where $p(\bz)$ is an explicit distribution, and the likelihood can be evaluated using the decoder model $\mu_{ \btheta}(\bz)$ and the forward model $\mathcal{F}(\bx)$. The evaluation of  the posterior of the low-dimensional latent variable $\bz$ using MCMC or EnKF is computationally tractable~\cite{laloy2017inversion,mo2019integration}.

To further improve the efficiency of the inference process, we will introduce a multiscale version of the above highlighted generative model to perform inference in each scale $l=1,2,\ldots,L$ from the coarsest-scale ($l=1$) to the desired finest-scale ($l=L$). This multiscale scheme based on the MDGM  contains a hierarchical simple distribution $p(\bz_l)$ at each scale $l$ and  a conditional distribution $p_{ \btheta_l}(\bx_l | \bz_l)$ that can generate the spatially-varying parameter $\bx$ in each scale. Correspondingly, one can assess the posterior $p(\bz_l | \mathcal{D}_{obs})$ using MCMC with $p(\bz_l)$, $\mu_{ \btheta_l}(\bz_l)$, and $\mathcal{F}_l(\bm{x_l})$. The details of this multiscale model are given next.
 
\section{Methodology}
\label{sec:Methodology}
\subsection{Multiscale dataset}
\label{sec:Hierarchicalparameterization}
Our physical systems of interest are governed by a system of PDEs, and the spatially-varying property of interest is a material property appearing e.g. in the constitutive equations. The forward problem defines the well-posed solution of the PDEs (with some boundary conditions) given appropriate material properties. Such problems are often solved in a discretized fashion with  finite element or finite difference or spectral  approximations for different levels of discretization of the spatial domain $\mathcal{S}$. In this work, we are interested in a hierarchical parameterization of the spatially-varying parameters $\bx_l$  with different spatial discretization or resolutions at each scale $l$. If the forward model is performed in the $2$-D space, the parameter random fields $\bx_l$ at the $l$-th scale are treated as images, e.g.  $\bx_l \in \mathbb{R}^{H_l \times W_l}$ ($M_l = H_l \times W_l$), where $H_l, W_l$ denote the number of the pixels in the horizontal and vertical directions, respectively.

For notational convenience, we assume that the finest scale parameters $\bx_L$ represent our ``true parameter model".
The noisy observations $\mathcal{D}_{obs}$ in our numerical studies are taken from this discretization level. In the inverse problem of interest, our task is to compute $\bx_L$ given a finite number of observations. For the solution of this inverse problem,  prior knowledge can provide useful information for $\bx_L$ before any observations. As prior information for our model, we assume that we are given a dataset $\bX=\{\bxi_{L}\}_{i=1}^{N}$. To obtain images for training the generative model at different discretization levels, we will need to obtain a multiscale training dataset. 

This can be accomplished by upscaling the fine-scale training dataset~\cite{lu2016improved,wen2003upscaling,wen1996upscaling}. 
For example, with an upscaling (deterministic) operator $\mathcal{U}:\mathbb{R}^{M_l} \rightarrow \mathbb{R}^{M_{l-1}}~(M_{l-1}\ll M_l)$, where $M_l$ and $M_{l-1}$ are the dimensions of $\bx_l$ and $\bx_{l-1}$, respectively.
The dataset $\{\bxi_{l-1}\}_{i=1}^{N}$ in the coarse-scale $(l-1)$ is obtained by

 \begin{equation}
     \bxi_{l-1} = \mathcal{U} (\bxi_{l}). \label{eq:deterministic upscaling}
 \end{equation}
The datasets $\{\bxi_{l}\}_{i=1}^{N},~l=1,2,\dots,L$ in different scales are obtained by adopting recursively $\mathcal{U}$ in Eq.~\eqref{eq:deterministic upscaling} starting with the finest-scale $l=L$. we assume $\bxi_{l}$ is sampled from $l$-th scale underlying prior distribution $\pi_l(\bm{x}_l)$.  The operator $\mathcal{U}$ used in this paper is deterministic, which    leads to a one-to-one correspondence between the elements in    $\{\bxi_{1},\dots,\bxi_{L-1},\bxi_{L}\}_{i=1}^{N}$. One can choose different operators $\mathcal{U}$ depending on the particular parameter of interest. In this paper, we employ the arithmetic average\footnote{http://www.epgeology.com/static-modeling-f39/how-upscale-permeability-t6045.html}:

\begin{equation}
\begin{aligned}\bx_{l-1}(e) = \frac{1}{n_e}\sum_{i=1}^{n_e} \bx_l(e_i),
    \end{aligned}
    \label{arithmetic average}
\end{equation}
where $n_e$ denotes the number of elements in the fine-scale $l$ corresponding to one element in the coarse-scale $(l-1)$. The value at the coarse-grid element $e$ is the mean of the values in the spatially corresponding elements $e_i$ in the fine-scale. Spatial correspondence between two adjacent  scales with $50\%$ coarsening in each direction is illustrated in Fig.~\ref{upscaling_diagram}.

\begin{figure}[h]
	\centering
	\includegraphics[width=0.6\linewidth, height = 0.22\linewidth]{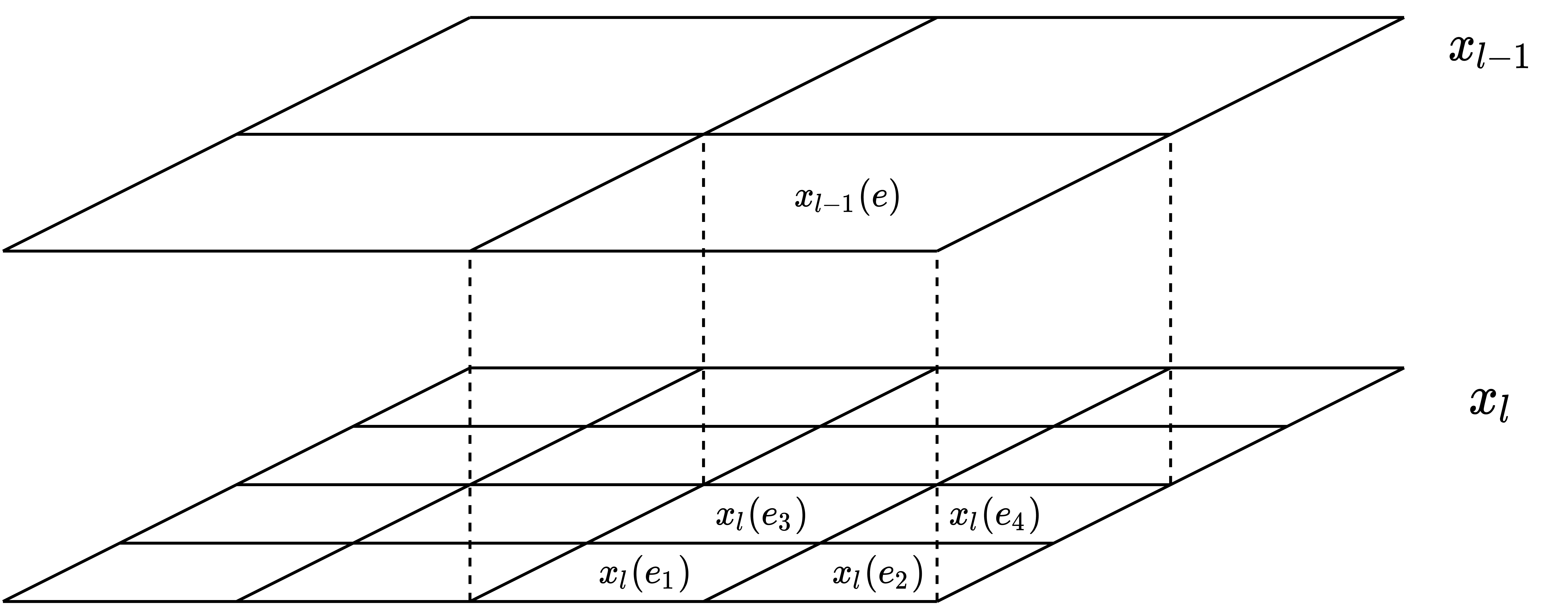}
	\caption{Illustration of the spatial correspondence in deterministic upscaling~\cite{ferreira2007multiscale}. The parameter $\bx_{l-1}(e)$  in the coarse-scale element $e$ is  equal to $\mathcal{U}\left(\bx_l(e_i)\right)$, where $e_i$ are the  spatially corresponding fine-scale elements  to the coarse-element $e$.  
	The number of fine- to coarse-elements in each direction is
	proportional to $\frac{h_{l-1}}{h_l}$, where $h_{l-1}$, $h_l$ are  the mesh sizes in the coarse- and fine-scales, respectively. }
	\label{upscaling_diagram}
\end{figure} 

An example illustrating this deterministic upscaling for channelized permeability using Eq.~\eqref{arithmetic average} is given in Fig.~\ref{upscaling}. The coarse-scale image provides a blurry representation of the fine-scale image but overall its features are consistent with those of the fine-scale image. It can be noticed that the coarse-scale image manifests itself with a checkerboard pattern that misses a lots of local information. 
\par 
 
\begin{figure}[h]
	\centering
	\includegraphics[width=0.8\linewidth, height = 0.28\linewidth]{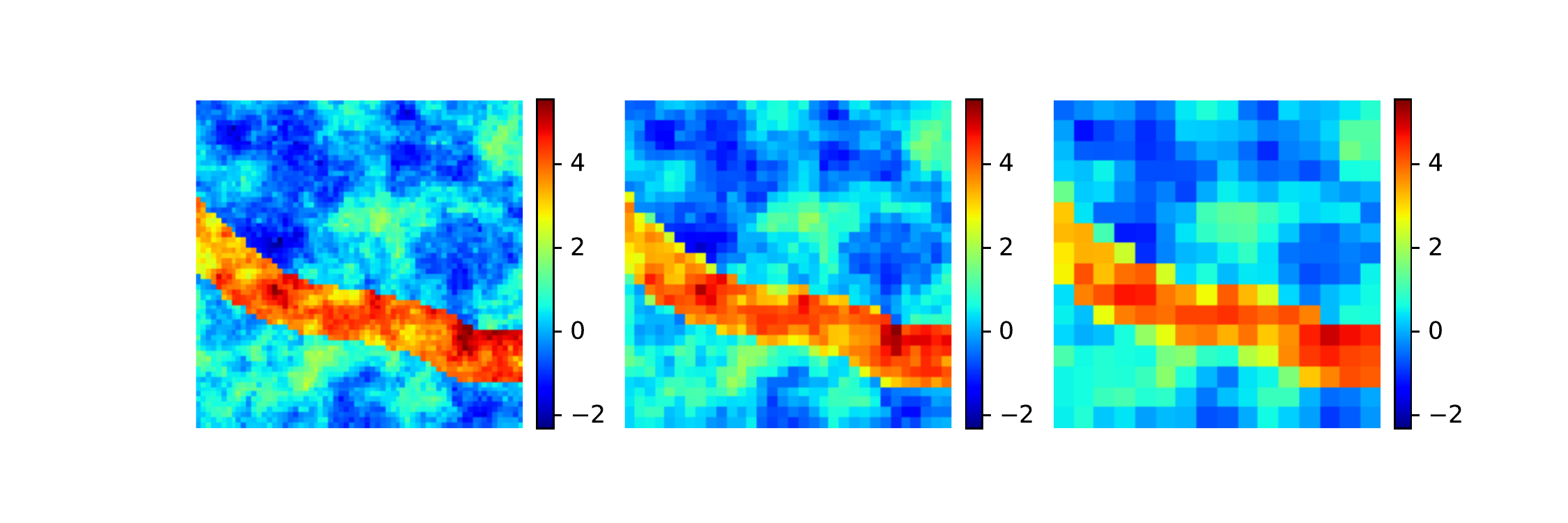}
	\caption{Upscaling channelized log-permeability samples ($3$ scales) using the upscaling technique in Fig.~\ref{upscaling_diagram} and Eq.~\eqref{arithmetic average}. From left to right: (a) original and the finest-grid $\bx_3 \in \mathbb{R}^{64 \times 64}$ realization with resolution $64 \times 64$, (b) the 
	coarser-scale $\bx_2$ with resolution $32 \times 32$ after applying the operator in Eq.~\eqref{arithmetic average} with $n_e=4$ on $\bx_3$,  (c) the coarsest-scale $\bx_1$ with resolution $16 \times 16$ after applying the operator in Eq.~\eqref{arithmetic average} with $n_e=4$ on $\bx_2$.}
	\label{upscaling}
\end{figure}

With the  scales of interest  pre-determined and the training dataset defined at each scale, we are ready to train the MDGM and perform inference on each scale using the proposed hierarchical multiscale framework.

\subsection{Model specification}
\label{sec:Model_def}
Given the training dataset $\{\bxi\}_{i=1}^{N}$, we seek to learn a DGM that can approximate   $\pi(\bx)$ with  Eq.~\eqref{eq:marginal_target}. The DGM involves the original parameter space and the latent space and the mappings between these spaces. For any $\bx$ sampled from the underlying distribution $\pi(\bx)$, the corresponding $\bz$ is sampled from the conditional distribution  $q_{\bphi}(\bz|\bx)$, where $q_{\bphi}(\bz|\bx)$ is called the recognition model or probabilistic encoder model, and $\bphi$ are its model parameters. In the reverse direction, one can sample a $\bz$ from a simple and low-dimensional distribution $p(\bz)$, and obtain the corresponding $\bx$ from the generative model or probabilistic decoder model $p_{ \btheta}(\bx|\bz)$. 
\
\begin{definition} [Probabilistic Generative Model]
\label{def:GenerativeModel} Given a set of
training input data $\{\bxi\}_{i=1}^{N}$, where $\bxi \sim \pi(\bx) $, select an appropriate distribution $p(\bz)$ for the latent variable $\bz$ and learn the models $p_{ \btheta}(\bx | \bz)$ and $q_{\bphi}(\bz|\bx)$, respectively, such that $\pi(\bx)$ can be approximated using  Eq.~\eqref{eq:marginal_target},  where $ \btheta$ and $\bphi$ denote the parameters of the generative and recognition models, respectively.\end{definition}
\par

For high-dimensional inversion tasks, direct inference in the fine-scale  is prohibited due to the  computational  cost of the forward model. In addition,  inference of the latent parameters $\bz$ that lead to good estimates of $\bx$ through a generative model requires a high-dimensional $\bz$. This  will lead to long exploration costs for MCMC and  requires a large number of forward model evaluations. To this end, given the multiscale dataset as discussed in Section~\ref{sec:Hierarchicalparameterization}, we propose a multiscale scheme for  posterior estimation by introducing a hierarchy of generative models from coarse- to fine-scales. In this scheme, the coarse-scale generative models have  low-dimensional latent spaces. This together with inexpensive forward model evaluations in the coarse-scales would allow MCMC to explore the posterior  in  coarse-scales with much reduced cost. The computational savings can be even higher if the latent representation on a given scale utilizes the latent information that was inferred in the immediately coarser-scale. 
\par

In our construct, the latent variables $\bz_l$ at level $l$ of the hierarchy are given as $\bz_l = (\bz_{l-1},\bz_l^{\star})$, where the latent variables $\bz_{l-1}$ and $\bz_l^{\star}$ are encoded from the coarse-scale parameter $\bx_{l-1}$ and the fine-scale parameter $\bx_{l}$, respectively.  The latent variable $\bz_{l-1}$ can generate $\bx_{l-1}$ through the generative model at scale $(l-1)$. It also impacts the generation of $\bx_l$ in the fine-scale $l$ by way of dominating its salient features since $\bz_{l-1}$ captured the information from the immediately coarser-scale. In this setting with $\bz_{l-1}$ encoded from $\bx_{l-1}$, it is anticipated that $\bx_l$  generated by $p_{ \btheta_l}(\bx_l |\bz_{l-1},\bz_l^{\star})$ would sustain most of the features of  $\bx_{l-1}$. We extend the Definition~\ref{def:GenerativeModel} to a multiscale scenario as follows.

 \par

\begin{definition}[Multiscale Deep Generative Model]\label{def:MultiscaleGenerative}  Given a set of training input data $\{\bxi_{l}\}_{i=1}^{N},   l=1, 2, \dots, L$, select  appropriate distributions $p(\bz_l)$, where $\bz_{l}=(\bz_{l-1}, \bz_{l}^{\star})$ are 
the latent variables 
at scale $l$, and learn the models $p_{ \btheta_1}(\bx_1 | \bz_{1})$, $p_{ \btheta_2}(\bx_2 | \bz_1,\bz_2^{\star})$, $\dots$, $p_{ \btheta_l}(\bx_l | \bz_{l-1},\bz_l^{\star})$ and $q_{\bphi_1}( \bz_{1}|\bx_1)$, $q_{\bphi_2}( \bz_1,\bz_2^{\star}|\bx_2 )$, $\dots$, $q_{\bphi_l}( \bz_{l-1},\bz_l^{\star}|\bx_l)$ recursively,  such that $\pi_l(\bx_l)$ can be approximated using  Eq.~\eqref{eq:marginal_target} in each scale. Here,  $ \btheta_l$ and $\bphi_l$ denote the parameters of the generative   and recognition models at scale $l$, respectively.
\end{definition}
 
 Once the MDGM is established,  the parameters $\bx_{l}$ are encoded by the latent variable $\bz_l$ so that one can perform  inference of $\bz_l$. Given the prior distribution $p(\bz_l)$,
 the decoder model $\mu_{ \btheta}(\bz_l)$, the forward model $\mathcal{F}_{l}$, and observations $\mathcal{D}_{obs}$, 
 inference of the posterior of  $\bz_l$  is performed as follows:

  \begin{equation}
     p(\bz_l | \mathcal{D}_{obs}) \propto \mathcal{L}_e(\mathcal{D}_{obs} | \bz_l) p(\bz_l).
     \label{eq:l-th posterior of z}
 \end{equation}

Note that MCMC converges and captures prominent and valuable features quickly in the coarse-scale $(l-1)$ since $\bz_{l-1}$ is  low-dimensional and the forward model $\mathcal{F}_{l-1}$ is less expensive in comparison to $\mathcal{F}_{l}$. For an efficient Bayesian inference at each scale $l$,  we are interested in using the posterior distribution at coarse-scale  $(l-1)$ to provide an informative prior information or improve sampling efficiency in the next finer-scale.   This  avoids relying completely on  inference in a high-dimensional latent space where direct computation of fine-scale details would increase the model complexity. The purpose of inference on fine-scale is to correct the details rather than run long exploration for capturing all appropriate features. A related idea was implemented earlier using hierarchical structured sparse grids  in~\cite{wan2011bayesian}. In summary, the inverse problem is divided into a multiscale posterior estimation, with the inference of parameters proceeding from coarse- to fine-scale. The definition of the multiscale posterior estimation problem is given next.

 \begin{definition}[Multiscale Posterior Estimation] \label{def:MultiscalePosterior}  Given observations $\mathcal{D}_{obs}$, forward models $\mathcal{F}_l$, probabilistic encoder model $p_{\bphi_l}(\bz_l|\bx_l)$, decoder models $\mu_{ \btheta}(\bz_l)$, and prior distributions $p(\bz_l)$ on different scales $l~(l = 1,2,\ldots,L)$, explore the posterior distribution $p(\bz_l|\mathcal{D}_{obs})$ in Eq.~\eqref{eq:l-th posterior of z} recursively from coarse- to fine-scales by using MCMC or other posterior   modeling techniques. 
\end{definition}

\subsection{Probabilistic generative model}\label{sec:vdgm}
The MDGM is used to generate parameters with different discretization/resolution. We construct such a model based on the variational autoenoder (VAE). The coarsest generative model that involves a single-scale ($l=1$) is vanilla VAE. It employs the dataset $ \{ \bx_1^{(i)}\}_{i=1}^{N}$ (generated as discussed in Section~\ref{sec:Hierarchicalparameterization}) sampled from the underlying distribution $\pi(\bx_1)$, i.e. $\bx_1^{(i)} \stackrel{i.i.d}{\sim} \pi(\bx_1)$. We consider below the probabilistic generative model on a single-scale before  deriving the multiscale formulation in Section~\ref{sec:MDGM}. For simplicity of the notation, we drop the subscript $1$ in the equations below even though this model will be used in the scale $l=1$. 

Given the training dataset $ \{ \bx^{(i)}\}_{i=1}^{N}$, one can introduce a variational family $q\inphi(\bz|\bx)$ to convert the intractable computation of maximizing the marginal log-likelihood in Eq.~\eqref{eq:maginal log_likelihood} into an optimization problem, where $\bphi$ denotes the model parameters. It can be written as follows:

\begin{align}
	\label{eqn:lowerbound}
	\log \ p(\bX| \btheta) &= \sum_{i=1}^N \log p(\bxi | \btheta) \nonumber \\
	&= \sum_{i=1}^N \log \int  p\inth(\bxi|\bzi) p\inth(\bzi)~d\bzi \nonumber \\
	&= \sum_{i=1}^N \log \int  q\inphi(\bzi|\bxi) \frac{p\inth(\bxi|\bzi)  p\inth(\bzi)}{q\inphi(\bzi|\bxi)}~d\bzi  \nonumber \\
	&\geq \sum_{i=1}^N \underbrace{\int   q\inphi(\bzi|\bxi) \log \frac{p\inth(\bxi|\bzi)  p\inth(\bzi)}{q\inphi(\bzi|\bxi)}~d\bzi }_{\mathcal L (\btheta,\bphi; \bxi)}, 
\end{align}
where the last step is the application of Jensen's inequality. The above lower bound is called the variational lower bound. For a given training dataset, one can maximize the variational lower bound rather than the marginal log-likelihood. This is an optimization problem with respect to   model parameter $\btheta$ and $\bphi$. The variational lower bound in Eq.~(\ref{eqn:lowerbound}) can be written as follows,

\begin{align}
	\mathcal L(\btheta, \bphi; \bX) = &\sum_{i=1}^N \mathbb E_{q\inphi(\bzi|\bxi)} [\log p\inth(\bxi,\bzi) - \log q\inphi(\bzi|\bxi) \nonumber] \nonumber \\
	= &\sum_{i=1}^N \mathbb E_{q\inphi(\bzi|\bxi)} [\log p\inth(\bxi|\bzi) ] -\sum_{i=1}^N D_{KL}\left(q\inphi(\bzi|\bxi) || p\inth(\bzi)\right).
	\label{eqn:decomposedlowerbound}
\end{align}
The minimization of the $- \mathcal{L}( \btheta,\bphi)$  balances the optimization of both the  recognition  and  generative models. Thus the recognition model parameters $ \bphi$ are learned jointly with the generative model parameters $\btheta$~\cite{kingma2013auto}. The graphical model is shown in Fig.~\ref{fig:pgm_vae}.

\begin{figure}[h]
\centering
\resizebox{0.2\textwidth}{!}{%
      \tikz{ %
        \node[obs] (x) {$\bxi$} ; %
        \node[latent, left=of x] (z) {$\bzi$};
        \node[below=of z] (phi) {$\bphi$};
        \node (zxmiddle) at ($(z)!0.5!(x)$) {};
        \path let \p1 = (zxmiddle), \p2 = (phi) in node (theta) at (\x1,-\y2) {$\btheta$};
        \plate[inner sep=0.25cm, xshift=-0.12cm, yshift=0.12cm] {plate3} {(x) (z)} {$N$}; %
        \edge {z} {x};
        \edge {theta} {z};
        \edge {theta} {x};
        \edge[dashed] {phi} {z};
        \draw[dashed,->, bend left] (x) edge (z);
}
}
\caption{The directed graphical model for the probabilistic model~\cite{kingma2013auto}. The latent variable  $\bzi$ of each configuration $\bxi$ is obtained by the probabilistic recognition model $q\inphi(\bzi|\bxi)$. The variational approximation is indicated with dashed edges and the generative model $p\inth(\bx|\bz)p(\bz)$ with solid edges. }
\label{fig:pgm_vae}
\end{figure}

To evaluate $\mathcal{L}( \btheta,\bphi)$ in Eq.~\eqref{eqn:decomposedlowerbound}, there are   three probability distributions to be identified. As mentioned in Definition~\ref{def:GenerativeModel}, we shall select appropriate simple distributions for the latent variables $\bz$. For example, in this paper, we let $p(\bz) \sim \mathcal{N}(\bm{0}, \bm{I})$. 
The $\mathbb{E}_{\bz^{(i)} \sim q_{\bphi}(\bz^{(i)} | \bxi)}[\log p_{ \btheta}(\bxi | \bz^{(i)})]$ in Eq.~\eqref{eqn:decomposedlowerbound} is the expected log-likelihood. It encourages the reconstructed data $\hat{\bx}$ by the decoder to approximate the original data $\bx$. We assume $p_{ \btheta}(\bx| \bz)$ is modeled by a Gaussian distribution $\mathcal{N}(\mu(\bz), \sigma^2 \bm{I})$, where the mean $\mu(\bz)$ is the  output of a decoder neural network and  $\sigma$ is a constant hyperparameter that does not depend on the decoder so that it can be ignored during optimization. Let $\hat{\Sigma} = \sigma^2 \bm{I}$, the first term  in Eq.~\eqref{eqn:decomposedlowerbound} then takes the form based on the minibatches:

\begin{eqnarray}
&&    \sum_{i=1}^N \mathbb{E}_{\bz^{(i)} \sim q_{\bphi}(\bz^{(i)} | \bxi)}   (\log p_{ \btheta}(\bxi | \bz^{(i)}))   \nonumber \\ &=&\sum_{i=1}^N\mathbb{E}_{\bz^{(i)} \sim q_{\bphi}(\bz^{(i)} | \bxi)}\left[-\frac{1}{2} \left(\bxi-\mu(\bz^{(i)})\right)^{T} \hat{\Sigma}^{-1} \left(\bxi-\mu(\bz^{(i)})\right) \right]+constant \nonumber \\
    &\propto& -\sum_{i=1}^N\mathbb{E}_{\bz^{(i)} \sim q_{\bphi}(\bz^{(i)} | \bxi)} \left[ \left(\bxi-\mu(\bz^{(i)})\right)^{T}\left(\bxi-\mu(\bz^{(i)})\right)\right] \nonumber \\
    &\approx&   -\frac{N}{n}\frac{1}{m} \sum_{i=1}^{n}\sum_{j=1}^{m} \left\|\bxi-\mu(\bz^{(i,j)})\right\|^{2}, \quad \bz^{(i,j)} \sim q_{\bphi}(\bz^{(i,j)}| \bxi), 
    \label{eq:recon_loss}
\end{eqnarray}
where $n$ denotes the number of training samples of $\bx$ (also referred to as the batch size in the training of deep neural networks). For each epoch, there are $\frac{N}{n}$ minibatches, each batch uniformly sampled from the dataset $\{\bxi\}_{i=1}^{N}$. $m$ is the number of $\bz$ samples from $q_{\bphi}(\bz| \bx)$ for an expectation approximation using the Monte Carlo method. One can also refer to Eq.~\eqref{eq:recon_loss} as the reconstruction error.

For the $D_{KL}(q_{\bphi}(\bz^{(i)} | \bxi) \| p(\bz^{(i)}))$ in Eq.~\eqref{eqn:decomposedlowerbound}, $q_{\bphi}(\bz^{(i)} | \bxi)$ is taken as a Gaussian distribution, where the mean $\hat{\mu}(\bx)$ and the variance $\hat{\sigma}^2(\bx)$ are outputs of the encoder network. The KL-divergence   can be analytically computed  when both   distributions are Gaussian~\cite{kingma2013auto}. The KL-divergence works as an objective function for the optimization problem with respect to the encoder parameters $\bphi$. Based on the minibatches, it can be written as: 

\begin{equation}
    \sum_{i=1}^N D_{KL}(q_{\bphi}(\bz^{(i)} | \bxi) \| p(\bz))= \frac{N}{2n}\sum_{i=1}^{n}\sum_{k=1}^{d}\left(\hat{\mu}_k^{2}(\bxi)+\hat{\sigma}_k^{2}(\bxi)-\log \hat{\sigma}_k^{2}(\bxi)-1\right), 
     \label{eq:KL_loss}
\end{equation}
where $\hat{\mu}_k(\bx)$ and $\hat{\sigma}_k(\bx)$ denote the $k$-th element of the mean and standard deviation, respectively, which are outputs of the encoder network. The results of Eqs.~\eqref{eq:recon_loss} and~\eqref{eq:KL_loss}  summarize the objective function for optimization of the encoder and decoder neural networks.

\par

Inspired from $\beta$-VAE~\cite{higgins2017beta},  we slightly modify the underlying loss function. This modification of the VAE results by adding an extra hyperparameter $\beta$ to the KL divergence.   This hyperparamter can constrict  the  capacity of the latent bottleneck and encourage a disentangled representation. We would like the individual dimensions of the  latent variable $\bz$ to be interpretable or to correspond to some features of parameters thus disentangling the true variation of data~\cite{higgins2017beta,chen2016infogan}.  A good interpretability and factorized representation for the latent variable will facilitate the feature exploration in MCMC. Note that~\cite{higgins2017scan} has shown that $\beta$-VAE is an important method in disentangled representation for learning  the compositional and hierarchical visual concept. Choosing an appropriate hyperparameter $\tilde{\beta}$ for loss function is important in MDGM. We can define the loss function $\tilde{\mathcal{L}}\equiv-\frac{n}{N}\mathcal{L}( \btheta,\bphi)$ in each training iteration, the loss function for the single-scale probabilistic generative model  as follows:

\begin{equation}\tilde{\mathcal{L}}( \btheta,\bphi) = \frac{\tilde{\beta} }{2}\sum_{i=1}^{n}\sum_{k=1}^{d}\left(\hat{\mu}_k^{2}(\bx^{(i)})+\hat{\sigma}_k^{2}(\bx^{(i)})-\log \hat{\sigma}_k^{2}(\bx^{(i)})-1\right)+\frac{1}{m} \sum_{i=1}^{n}\sum_{j=1}^{m} \left\|\bx^{(i)}-\mu(\bz^{(i,j)})\right\|^{2}.
    \ \label{eq:vae_loss}
\end{equation}

\begin{remark}
In the vanilla VAE, the objective function considers the reconstruction error and the KL-divergence to be of  equivalent importance. The hyperparameter $\beta$ is introduced to break this balance. More specifically, high values of $\beta$ put more emphasis on the latent space approximation than on the reconstruction, expediting the learning of notable feature variations but bringing blurred minutiae. For example, in the channelized permeability experiment, we noted that high $\beta$ is conducive to capturing the continuous channel structures while losing much fidelity in local variations.
\end{remark}

To optimize the parameters $ \btheta$ and $\bphi$ in neural networks, one could employ stochastic gradient descent (SGD) or other gradient-based optimization algorithms related to back propagation.  Note that the second term in Eq.~\eqref{eq:vae_loss} is an expectation approximation using the Monte Carlo method that  samples $\bz^{(i,j)}$ from $q_{\bphi}(\bz^{(i,j)}| \bxi)$. But the expectation computation involving sampling with a high variance will reflect on the gradient estimation, which leads to an unfavorable influence on optimization. To make it trainable and back propagate the gradient correctly, the reparameterization trick is introduced. The latent variables $\bz$ are expressed by a differentiable transformation $g_{\bphi}(\epsilon, \bx)$ with respect to an auxiliary independent random variable $\epsilon$. In the Gaussian distribution case, we let $ \epsilon \sim \mathcal{N}(0,I)$, sampling $\bz$ via such a $g_{\bphi}(\epsilon, \bx)$:

\begin{equation}
\bz = \hat{\mu}(\bx) +\hat{\sigma}(\bx) \odot \epsilon,
\label{eq:repara}
\end{equation}
where $\odot$ refers to an element-wise product. The illustration of forward and back propagation computation with the reparameterization trick is shown in Fig.~\ref{rt}.
\begin{figure}[!htp]
	\centering
	\includegraphics[width=0.9\linewidth, height = 0.25\linewidth]{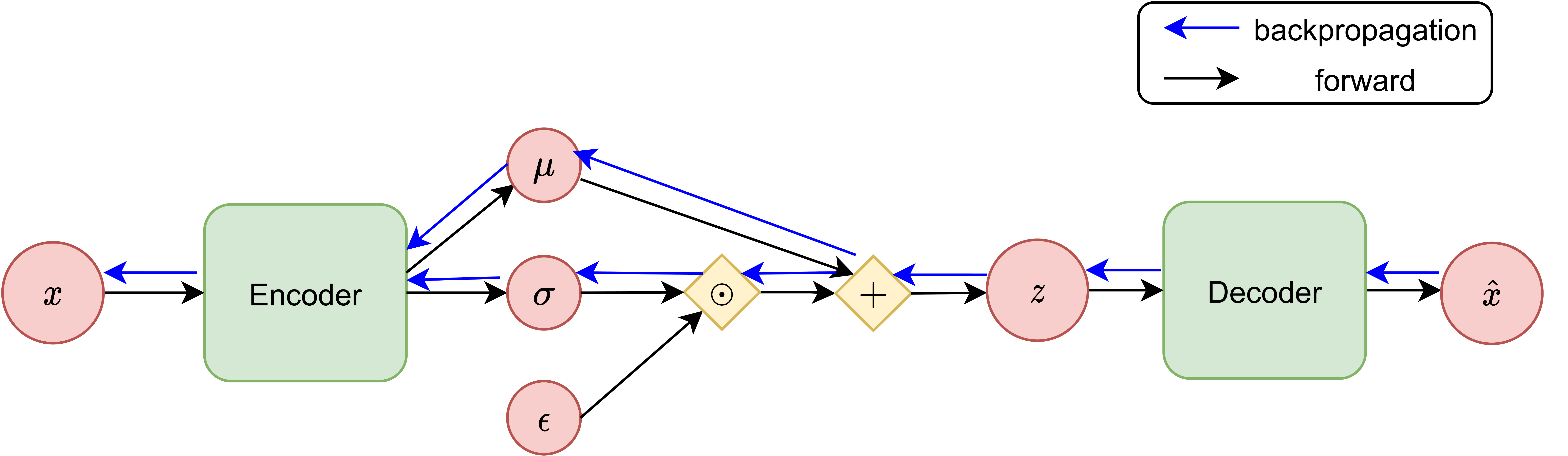}
	\caption{Illustration of  the network architecture. The black arrows denote forward computation in the encoder/decoder networks and the blue arrows indicate the feasible implementation of back propagation under the reparameterization trick.}
	\label{rt}
\end{figure}

\begin{remark}
The reparameterization trick makes the random variable $\bz$ to only depend on two deterministic variables by introducing an auxiliary random variable $\epsilon$ sampled from the standard Gaussian distribution. It scales  $\epsilon$ by the variance $\hat{\sigma}(\bx)$ and shifts it by the mean $\hat{\mu}(\bx)$. The operators $+$ and $\odot$ in Eq.~\eqref{eq:repara} are differentiable, which makes the gradient computation achievable.  Numerical experiments indicate that $m$ in Eq.~\eqref{eq:vae_loss} can be set to $1$ when  $n$ is large enough.
\end{remark}

As discussed in   Definition~\ref{def:GenerativeModel}, we constructed the probabilistic models $p_{ \btheta}(\bx | \bz)$ and $q_{\bphi}(\bz|\bx)$ to sample $\bx$ given its corresponding latent variable $\bz$ and to map the parameters $\bx$ to the latent space, respectively. Based on the objective function in Eq.~\eqref{eq:vae_loss} and neural network in Fig.~\ref{rt} (detailed architecture see~\ref{app:nn}), one can optimize the network parameters $\bphi$ and $ \btheta$. The implementation procedure is summarized in Algorithm~\ref{DGM}. 

\begin{algorithm}[h]
	\caption{Training probabilistic generative model}
	\label{DGM}
	\begin{algorithmic}[1]
		\Require Dataset $\{\bxi\}_{i=1}^{N}$, training epoch $E$, batch size $n$, learning rate $\eta$, hyperparameter $\tilde{\beta}$, $m=1$.
		\State Initialize $ \bphi,  \btheta \leftarrow \text { Initialize parameters }$ 
		\While {$epoch < E$}
		\State $\bx^n \leftarrow$ Sample minibatch $n$ datapoints from $\{\bxi\}_{i=1}^{N}$
		\State  $\epsilon^n \leftarrow$ Sample $n$ noise from Gaussian distribution $\mathcal{N}(0, I )$
		\State  $\bz^n \leftarrow$ Compute by encoder network with Eq.~\eqref{eq:repara}
		\State $\nabla_{ \btheta} \tilde{\mathcal{L}}, \nabla_{\bphi} \tilde{\mathcal{L}} \leftarrow$ \text{Calculate gradients of }$\tilde{\mathcal{L}}\left( \btheta, \bphi ;\bz^n, \bx^{n}, \tilde{\beta}, m \right)$ in Eq.~\eqref{eq:vae_loss}
		\State $ \btheta =  \btheta-\eta \nabla_{ \btheta} \tilde{\mathcal{L}} \leftarrow$ update $ \btheta$ using gradient-based optimization algorithm (e.g. SGD or Adam)
		\State $\bphi = \bphi-\eta \nabla_{\bphi} \tilde{\mathcal{L}} \leftarrow$ update $\bphi$ using gradient-based optimization algorithm (e.g. SGD or Adam)
		\EndWhile
		\Ensure probabilistic encoder  $q_{\bphi}(\bz|\bx)$, probabilistic decoder  $p_{ \btheta}(\bx|\bz)$. 
		
	\end{algorithmic}
\end{algorithm}

This model only involves a single-scale parameter representation learning, while MDGM is a multi-stage and recursive training procedure from coarse to fine, i.e. the training output in the coarse-scale model  is the input to the next finer-scale model.  The coarsest-scale 
probabilistic models $p_{ \btheta_1}(\bx_1 | \bz_1)$ and  $q_{\bphi_1}(\bz_1|\bx_1)$ are outputs of Algorithm~\ref{DGM}. The generative model $p_{ \btheta_1}(\bx_1 | \bz_1)$ is used for the estimation of the posterior  $\pi(\bx_1|\mathcal{D}_{obs})$ in the coarsest-scale by standard MCMC (see Section~\ref{sec:MH}). The recognition model $q_{\bphi_1}(\bz_1|\bx_1)$ is the input (as a pre-trained model) to the second-scale ($l=2$) generative model training   (see Section~\ref{sec:MDGM}).

\subsection{Multiscale deep  generative model (MDGM)}\label{sec:MDGM}
In Section~\ref{sec:vdgm}, we presented  the single-scale parameter generative model in a probabilistic perspective and discussed how to use deep neural networks for its implementation. 
In this section, we enhance this model to a  multiscale framework. Based on the parameter data generation procedure in Section~\ref{sec:Hierarchicalparameterization}, we start from the training data in the finest-scale and coarse grain them recursively to represent the training data in the coarsest-scale. 
We demonstrate the MDGM by explaining how to train it in the $l$-th scale as an example based on the assumption that we obtained the pre-trained encoder and decoder networks in the $(l-1)$-th scale.  

To construct the connection among various scales, we design a special latent space for the finer-scale. In particular, its latent variable $\bz_l$ inherits the latent variable $\bz_{l-1}$ of the previous scale and  augments it with an additional latent variable $\bz_{l}^{\star}$. These two variables are independent. This is in principle  similar to the Bayesian approach followed in~\cite{wan2011bayesian} where a hierarchical sparse grid approximation was used to represent an unknown parameter field at different scales.

As we discussed in Definition~\ref{def:MultiscaleGenerative}, we also need to train the recognition model $q_{\bphi}(\bz_{l}|\bx_{l})$ and generative model $p_{ \btheta}(\bx_l|\bz_l)$ in scale $l$ with the same objective function as in Eq.~\eqref{eq:maginal log_likelihood}. The difference with standard generative models is that the recognition model is divided into two parts in order to encode the parameter $\bx$ with the  independent latent variables $\bz_{l-1}$ and $\bz_{l}^{\star}$. As shown in Fig.~\ref{multi_vae1},    the encoder network includes a pre-trained encoder network and an augmented encoder network. $q_{\bphi}(\bz_{l-1}|\bx_l)$ is computed by adopting the upscaling operator over $\bx_l$ firstly and then using the previous scale encoder network. It can be modeled by 

\begin{figure}[!htp]
	\centering
	\includegraphics[width=0.9\linewidth, height = 0.24\linewidth]{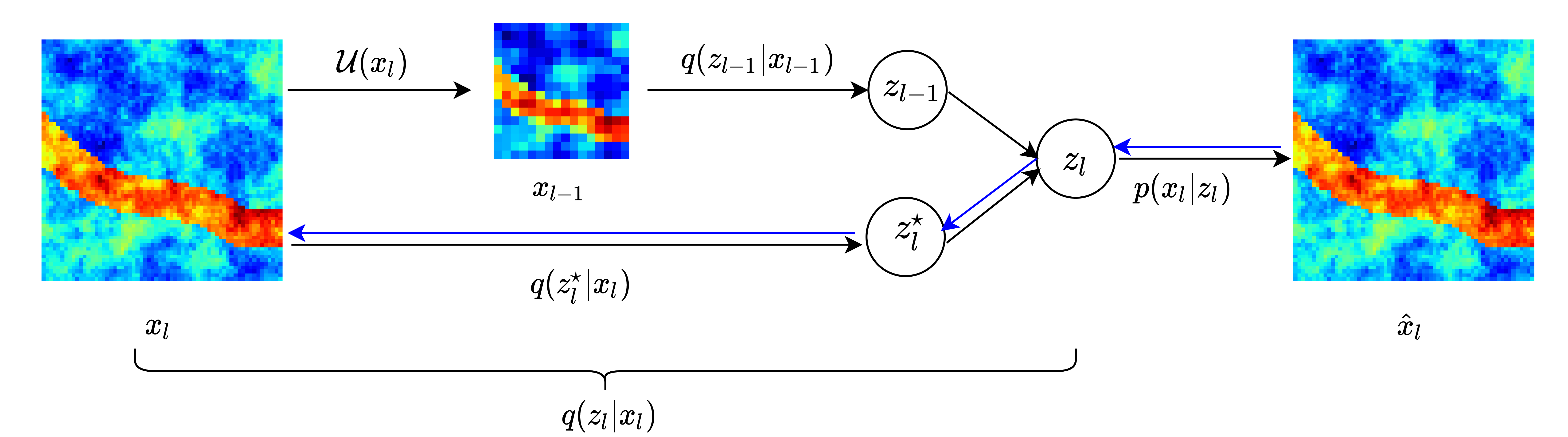}
	\caption{Schematic illustration of the probabilistic generative model in the $l$-th scale. Black arrows above denote the forward computation, and blue arrows in reverse direction denote the back propagation in gradient-based optimization. $q(\bz_{l-1}|\bx_{l-1})$ as an input is a pre-trained network, whereas the model parameters in $q(\bz_l^{\star}|\bx_{l})$ and $p(\bx_{l}|\bz_{l})$ need to be learned. For details on how to concatenate $\bz_{l-1}$ with $\bz_l^{\star}$ to obtain the latent variable $\bz_{l}$ refer to~\ref{app:cat}. }
	\label{multi_vae1}
\end{figure}

\begin{equation}
q_{\bphi}(\bz_{l-1}|\bx_{l})
= \int q_{\bphi}(\bz_{l-1}|\bx_{l-1})  \pi(\bx_{l-1}|\bx_{l}) d\bx_{l-1}.
\end{equation}
Since the upscaling operator $\mathcal {U}$ is deterministic, we can write the following:

\begin{equation}
\begin{aligned}q_{\bphi}(\bz_{l-1}|\bx_{l})
= q_{\bphi}(\bz_{l-1}|\bx_{l-1}).
\label{eq: convert}
\end{aligned}
\end{equation}
As in the single-scale probabilistic generative model in the last section, the variational lower bound $\mathcal{L}( \btheta,\bphi; \bx)$ consists of two parts as described in Eq.~\eqref{eqn:decomposedlowerbound}, i.e. $\mathbb E_{q\inphi(\bzi|\bxi)} [\log p\inth(\bxi|\bzi) ]$ and $D_{KL}\left(q\inphi(\bzi|\bxi) || p\inth(\bzi)\right)$. The first part in the $l$-th scale is formulated as the reconstruction error  introduced in Eq.~\eqref{eq:recon_loss}, which involves the decoder model $p_{ \btheta_l}(\bx_l | \bz_l)$ and its model parameter $\btheta_l$. The second part is much distinct with $p(\bz_l)$ being the target distribution in the KL-divergence.  It is an isotropic multivariate Gaussian distribution as well, but it can be viewed as the joint distribution of $(\bz_{l-1},\bz_l^{\star})$, where $\bz_{l-1}$, $\bz_l^{\star}$ are independent. Thus $p(\bz_{l-1})$ and $p(\bz_{l}^{\star})$ are also isotropic multivariate Gaussian distributions. Combining  with Eq.~\eqref{eq: convert}, we rewrite the KL-divergence as follows:

\begin{eqnarray}
D_{KL}(q_{\bphi}(\bz_l | \bx_l) \| p(\bz_l)) & = & \int q_{\bphi}(\bz_l | \bx_l) \log \frac{q_{\bphi}(\bz_l | \bx_l)}{ p(\bz_l)} d\bz_l \nonumber \\
&=& \int \int q_{\bphi}(\bz_{l-1} | \bx_l) q_{\bphi}(\bz_l^{\star} | \bx_l)  \log \frac{q_{\bphi}(\bz_{l-1} | \bx_l) q_{\bphi}(\bz_l^{\star} | \bx_l)}{ p(\bz_{l-1})p(\bz_l^{\star})} d\bz_{l-1} d\bz_l^{\star} \nonumber \\
&=& \int \int q_{\bphi}(\bz_{l-1} | \bx_{l-1}) q_{\bphi}(\bz_l^{\star} | \bx_l)  \left[\log \frac{q_{\bphi}(\bz_{l-1} | \bx_{l-1}) }{ p(\bz_{l-1})}+\log \frac{q_{\bphi}(\bz_l^{\star} | \bx_l)}{p(\bz_l^{\star})} \right] d\bz_{l-1} d\bz_l^{\star} \nonumber \\
&=& \int  q_{\bphi}(\bz_{l-1} | \bx_{l-1})  \log \frac{q_{\bphi}(\bz_{l-1} | \bx_{l-1}) }{ p(\bz_{l-1})}d\bz_{l-1}\int q_{\bphi}(\bz_l^{\star} | \bx_l)  d\bz_l^{\star} \nonumber \\
&+& \int  q_{\bphi}(\bz_l^{\star} | \bx_l)  \log \frac{q_{\bphi}(\bz_l^{\star} | \bx_l) }{ p(\bz_l^{\star})}d\bz_l^{\star}\int q_{\bphi}(\bz_{l-1} | \bx_{l-1}) d\bz_{l-1} \nonumber \\
&=&  D_{KL}(q_{\bphi}(\bz_{l-1}| \bx_{l-1}) \| p(\bz_{l-1}))+D_{KL}(q_{\bphi}(\bz_l^{\star} | \bx_l) \| p(\bz_l^{\star})),
\label{eq:multi_KL_loss}
\end{eqnarray}
where $\bphi$ in the $l$-th scale is $\bphi_l$ containing  $\bphi_{l-1}$ and $\bphi_{l}^{\star}$, where $\bphi_{l-1}$ denotes the parameters of the encoder network $q(\bz_{l-1}| \bx_{l-1})$ in the $(l-1)$-th scale and $ \bphi_{l}^{\star}$ denotes the parameters of the augmented encoder network $q(\bz_l^{\star} | \bx_l)$ in the $l$-th scale.  Recall that we need to employ the pre-trained encoder network $q_{\bphi_{l-1}}(\bz_{l-1} | \bx_{l-1})$ directly in the $l$-th scale training because it should ensure that the finer-scale generative model shares common latent variables $\bz_{l-1}$ with the coarser-scale, so

\begin{equation}
\begin{aligned}
D_{KL}(q_{\bphi}\left(\bz_l | \bx_l) \| p(\bz_l)\right) = D_{KL}\left(q_{\bphi_l^{\star}}(\bz_l^{\star} | \bx_l\right) \| p(\bz_l^{\star}))+\text{constant}.
\label{eq:opt_multi_KL_loss}
\end{aligned}
\end{equation}
We also use gradient-based optimization algorithm to train the MDGM where the back propagation is only applied in the decoder network and augmented encoder network as blue arrows illustrated in Fig.~\ref{multi_vae1}. We can formulate the loss function for each training iteration like Eq.~\eqref{eq:vae_loss} as follows:
\begin{eqnarray}\tilde{\mathcal{L}}( \btheta_l,\bphi^{\star}_l) &=& \frac{\tilde{\beta}}{2}\sum_{i=1}^{n}\sum_{k=1}^{d_l^{\star}}\left(\hat{\mu}_{lk}^{2}(\bx_l^{(i)})+\hat{\sigma}_{lk}^{2}(\bx_l^{(i)})-\log \hat{\sigma}^2_{lk}(\bx_l^{(i)})-1\right) \nonumber \\ &+& \frac{1}{m} \sum_{i=1}^{n}\sum_{j=1}^{m} \left\|\bx_l^{(i)}-\mu_l(\bz_l^{(i,j)})\right\|^{2},
\label{eq:MDGMloss}
\end{eqnarray}
where $\hat{\mu}_{lk}(\cdot)$ and $\hat{\sigma}_{lk}(\cdot)$ denote the $k$-th element in the mean and standard deviation, respectively, which are outputs of the $l$-th scale  augmented encoder network, and $\mu_l(\cdot)$ is the output of the $l$-th scale decoder network. Here, $m$ is still set to be $1$ when  the batch size $n$ is large enough. $d_l^{\star}$ is the dimension of the latent variable $\bz_l^{\star}$. The hyperparameter $\tilde{\beta}$ is determined by balancing the disentanglement of the parameter features and reconstruction.

In summary, assume that we obtained the probabilistic encoder and decoder models in the previous $(l-1)$ level. We can then train the augmented encoder network and decoder networks using the objective function in Eq.~\eqref{eq:MDGMloss} so that the probabilistic encoder  $q_{\bm{\phi_l}}(\bz_l|\bx_l)$ and the probabilistic decoder  $p_{ \btheta_l}(\bx_l|\bz_l)$ in the $l$-th scale are acquired. For details see Algorithm~\ref{multi_vae_algorithm}. The architectures of the augmented encoder and decoder networks  are the same as those in the  encoder and decoder networks for the single-scale model (refer to~\ref{app:nn}). 

\par
We have introduced so far the probabilistic generative model in a single-scale in the last section and the concept of generative modeling across two scales in this section. Since these unsupervised learning models only need the unlabeled data, we only provide the finest-scale dataset $\{\bxi_{L}\}_{i=1}^{N}$ sampled from the underlying prior distribution $\pi(\bx_L)$ and adopt the deterministic upscaling operator $\mathcal{U}$ to generate the dataset in the scale of interest. The training of the  generative models proceeds  from the coarsest- to the finest-scale. The coarsest-scale generative model is trained by Algorithm~\ref{DGM}. From the second- to the finest-scale, the generative model is trained recursively by Algorithm~\ref{multi_vae_algorithm}. Once the generative models are trained in the scales of interest, they can be integrated with MCMC to estimate the posterior of the parameters in each scale.

\begin{remark}
The training of the $l$-th scale generative model employs the $(l-1)$-th scale encoder network, where $\bz_{l-1}$ is sampled from $q_{\bphi_{l-1}}(\bz_{l-1}|\bx_{l-1})$ using the  reparameterization trick. A $\bz_{l-1}$ with high-variance will   impact the training stability and convergence inflicting big noise in the latent variable $\bz_l$. We address this problem in the training procedure by replacing $\bz_{l-1}$ with its mean $\hat{\mu}_{l-1}(\bx_{l-1})$.
\end{remark}

\begin{algorithm}[h]
	\caption{Training of the multiscale probabilistic generative model in the $l$-th scale}
	\label{multi_vae_algorithm}
	\begin{algorithmic}[1]
	\Require $ \{\bx_{i}\}_{i=1}^{N}$, pre-trained encoder $q_{\bphi_{l-1}}(\bz_{l-1}|\bx_{l-1})$, training epoch $E$, batch size $n$, learning rate $\eta$, hyperparameter $\tilde{\beta}$, $m=1$
		\State Initialize $ \bphi_l,  \btheta_{l}^{\star} \leftarrow \text { Initialize parameters }$ 
		\While {$epoch < E$}
		\State $\bx^n \leftarrow$ Sample minibatch $M$ datapoints from $ \{\bx_{i}\}_{i=1}^{N}$
		\State $\bz_{l-1}^n \leftarrow$ Compute  by $q_{\bphi_{l-1}}(\bz_{l-1}|\bx_{l-1})$ for each data
		\State  $\epsilon \leftarrow$ Sample $n$ noise variables from the  Gaussian distribution $p(\epsilon)$
		\State $\bz_{l}^n \leftarrow$ Compute $\bz_{l}^{\star}$ by the  augmented encoder network in Eq.~\eqref{eq:repara}, and concatenate with $\bz_{l-1}$ for each data
		\State $\nabla_{  \btheta_{l}^{\star}} \tilde{\mathcal{L}}, \nabla_{\bphi_l} \tilde{\mathcal{L}} \leftarrow$ \text{Calculate gradients of }$\tilde{\mathcal{L}}\left( \btheta_{l}^{\star}, \bphi_l ;\bz_{l}^n, \bx^{n}, \tilde{\beta}, m \right)$ in Eq.~\eqref{eq:MDGMloss} with respect to $ \btheta_{l}^{\star}$  and $\bphi_l$
		\State $  \btheta_{l}^{\star}= \btheta-\eta \nabla_{  \btheta_{l}^{\star}} \tilde{\mathcal{L}} \leftarrow$ using gradient-based optimization algorithm (e.g. SGD or Adam)
		\State $\bphi_l=\bphi-\eta \nabla_{\bphi_l} \tilde{\mathcal{L}} \leftarrow$ using gradient-based optimization algorithm (e.g. SGD or Adam)
		\EndWhile
		\State $q_{\bphi_l}(\bz_l|\bx_l) \leftarrow$ combine the pre-trained encoder $q_{\bphi_{l-1}}(\bz_{l-1}|\bx_{l-1})$ with $q_{\bphi_l^{\star}}(\bz_l^{\star} | \bx_l)$
		\Ensure probabilistic encoder  $q_{\bphi_l}(\bz_l|\bx_l)$, probabilistic decoder  $p_{\btheta_l}(\bx_l|\bz_l)$.  
	\end{algorithmic}
\end{algorithm}

\subsection{Bayesian inversion with the DGM}\label{sec:MH}
In Bayesian inversion, the Metropolis-Hastings (MH) algorithm~\cite{metropolis1953equation, hastings1970monte} is a popular MCMC method to approximate the  posterior distribution~\cite{andrieu2003introduction}.
As discussed in Definition~\ref{def:MultiscalePosterior}, we design a multiscale scheme aimed to realize the high-dimensional parameter estimation in inverse problems where  the posterior distribution is approximated recursively. We utilize the previous coarser-scale estimation information as we proceed with inference in the next immediate finer-scale. Note that the coarsest-scale inference only involves single-scale information in which the parameterization employs the DGM given in Algorithm~\ref{DGM}. 

For the coarsest-scale posterior distribution, we are interested in combining the standard MH algorithm with the single-scale generative model to estimate the parameters $\bx_1\in \mathbb{R}^{M_1}$ using the low-dimensional latent variable $\bz_1 \in \mathbb{R}^{d_1}$, where $M_1$ is the dimension of the coarsest-scale parameter $\bx_1$ that also determines the spatial discretization of the coarsest-scale forward model $\mathcal{F}_1$, and $d_1$ is the dimension of the coarsest-scale latent space. MCMC based on the pre-trained model $p_{\btheta_1}(\bx_1 | \bz_1)$ allows us to explore the posterior approximation of $\bz_1$. Recall that the model $p_{\btheta_1}(\bx_1 | \bz_1)$ is a Gaussian distribution with mean $\mu_{\btheta_1}(\bz_1)$ and standard derivation $\sigma$, where $\mu_{\btheta_1}(\bz_1)$ is the output of the decoder network in Algorithm~\ref{DGM} and $\sigma$ is a hyperparameter that  determines the noise  level. In the test procedure, we can thus define $\bx_1 = \mu_{\btheta_1}(\bz_1)$ as a mapping from the latent space to the original parameter space. Then one can sample in the continuous latent space to produce various parameters $\bx_1$ using this mapping.
In this way, the problem of inferring  $\bx_1$ becomes equivalent to the one of  inferring $\bz_1$.

Given the observed data $\mathcal{D}_{obs}$, the forward function $\mathcal{F}_1$, the decoder model $\bx_1 = \mu_{\btheta_1}(\bz_1)$, and the prior distribution $p(\bz_1)$, the unnormalized posterior distribution in the coarsest-scale is as follows:

\begin{equation}
   \pi(\bz_1 | \mathcal{D}_{obs}) \propto \mathcal{L}_e(\mathcal{D}_{obs} | \bz_1)p(\bz_1), 
   \label{eq:postrior of z1}
\end{equation}
where the prior distribution $p(\bz_1)$ is $\mathcal{N}(\bm{0},\bm{I})$. 
 
For the above target distribution in the coarsest-scale, we only need to create a Markov chain with length $N_{ite}$ for the latent variable $\bz_1$.  
The proposal distribution $\pi_q(\bz_1^{\prime}|\bz_1^{(j)})$ can be the simple random walk~\cite{gelman1996efficient} or a preconditioned Crank-Nicholson (pCN) algorithm~\cite{dodwell2015hierarchical,cotter2013mcmc, hairer2014spectral}. To initialize the first state $\bz_1^{(1)}$ for this Markov chain, one can sample  $\bz_1^{(1)}$  from the prior distribution $p(\bz_1)$. For each $j \geq 1$, we first sample a candidate $\bz_1^{\prime}$ from the proposal distribution $\pi_q(\cdot|\bz_1^{(j)})$ and then reconstruct the spatially-varying parameter $\bx_1$ by the  decoder model $\bx_1=\mu_{\phi_1}( \bz_1$). One can compute the likelihood function and the Metropolis acceptance ratio $\alpha$ using the forward model $\mathcal{F}_1(\bx_1)$. The acceptance ratio $\alpha$ is defined as in MH algorithm:

\begin{equation}
\alpha = min\left(1,\frac{\pi(\bz_1^{\prime}|\mathcal{D}_{obs}) \pi_q(\bz_1^{(j)}|\bz_1^{\prime})}{\pi(\bz_1^{(j)}|\mathcal{D}_{obs})\pi_q(\bz_1^{\prime}|\bz_1^{(j)})}\right).
\label{eq:ratio_MH_DGM}
\end{equation}
With probability $\alpha$, the candidate sample $\bz_1^{\prime}$ is accepted, otherwise rejected. Specifically, in each iteration, the state includes not only the latent variable $\bz_1$ but also its corresponding spatially-varying parameter $\bx_1$. The Markov chain typically takes some iterations to reach its stationary distribution, which depends on the target distribution. The first $n_{b}$ states of the Markov chain (burn-in stage) are thrown-away to ensure a valid  approximation of the target distribution. Once the latent variable $\bz_1$ is estimated, its corresponding parameter samples $\{\bx_1^{(j)}\}_{j=n_{b}}^{N_{ite}}$ can be viewed as samples from the posterior distribution $\pi(\bx_1 | \mathcal{D}_{obs})$. A summary of the MH algorithm for Bayesian inverse modeling based on the deep generative model is given in Algorithm~\ref{mcmc}. This algorithm can be applied to solve any high-dimensional parameter estimation in the single-scale. 

In the multiscale context, the coarsest posterior  was approximated by Algorithm~\ref{mcmc}. But there are still two main problems to be resolved by invoking a finer-scale model. One is that the high-resolution parameter estimation that can present richer details is desired; the other is the inaccurate forward model introduces  an epistemic uncertainty or model error, which will reflect on the parameter estimation. 
The model error $\delta_l$ in the $l$-th scale is defined by

\begin{equation}
   \delta_l = \mathcal{F}_L(\bx_L) - \mathcal{F}_l(\bx_l).
   \label{eq:model_error}
\end{equation}
This model error $\delta_l$  will be zero if and only if we adopt the ``true model'' $\mathcal{F}_L$. The posterior distribution $\pi(\bx_L | \mathcal{D}_{obs})$  is the ultimate goal of our inverse problem.

\begin{algorithm}[h]
	\caption{ The Metropolis-Hastings with Deep Generative Model (MH-DGM) Algorithm for the first-scale posterior distribution approximation}
	\label{mcmc}
	\begin{algorithmic}[1]
		\Require  unnormalized distribution $\pi(\bz| \mathcal{D}_{obs})$, proposal distribution $\pi_q(\cdot)$, iteration number $N_{ite}$, decoder model $\mu( \bz)$, burn-in length $n_{b}$
		\State Initialize $\bz^{(1)},  \bz^{(1)} \sim \mathcal{N}(\bm{0},\bm{I})$
		\For {$j = 1:N$}
		\State Draw $\bz^{\prime} \sim \pi_q(\cdot|\bz^{(j)}) $ 
		\State Compute the spatially-varying parameter $ \bx^{\prime} = \mu(\bz^{\prime})$ by the decoder model
		\State Compute the likelihood function by solving the forward model $\mathcal{F}(\bx^{\prime})$
		\State Compute the acceptance ratio
		$$\alpha = min\left(1,\frac{\pi(\bz^{\prime}|\mathcal{D}_{obs}) \pi_q(\bz^{(j)}|\bz^{\prime})}{\pi(\bz^{(j)}|\mathcal{D}_{obs})\pi_q(\bz^{\prime}|\bz^{(j)})}\right)$$
		\State Draw $\rho$ from uniform distribution $\mathcal{U}[0,1]$
 		\If {$\rho < \alpha$}
		\State $\text { Let } \bz^{(j+1)}=\bz^{\prime}, \bx^{(j+1)}=\bx^{\prime}$
		\Else
		\State {$\text { Let }\bz^{(j+1)}=\bz^{j}, \bx^{(j+1)}=\bx^{(j)}$}
		\EndIf
		\EndFor
		\Ensure posterior samples $\{\bx^{(i)}\}_{i=n_b}^{N_{ite}}$
	\end{algorithmic}
\end{algorithm}

\subsection{Multiscale Bayesian inversion with the MDGM}\label{sec:MultiscaleMH}

To eliminate the model error and estimate the fine-scale parameter, we integrate the  MDGM  with the MH algorithm.
We refer to this algorithm as MH-MDGM. Such a procedure involves inference across scales and allows us to generate samples on each scale for estimating the unknown parameter proceeding from the coarsest- to the finest-scale. Leveraging the latent space construction in the MDGM, the difference with the single-scale method is that the fine-scale estimation integrated with the MDGM method can inherit the coarse estimation resulting in a highly-efficient sampling procedure. This approach is applied from the second-coarsest to the finest-scale where inference in each scale proceeds recursively utilizing information at the previous coarser-scale.
\par
For any $2\leq l \leq L$, let us suppose that the approximate posterior distribution $\pi(\bz_{l-1}|\mathcal{D}_{obs})$ in the  $(l-1)$-th scale has been obtained. We thus have obtained the posterior samples $\{\bz_{l-1}^{(i)}\}_{i=n_{l-1}}^{N_{l-1}}$ and their corresponding parameter samples $\{\bx_{l-1}^{(i)}\}_{i=n_{l-1}}^{N_{l-1}}$. As defined in Eqs.~\eqref{eq:PosteriorOfZ} and~\eqref{eq:postrior of z1}, we are concerned about the finer-scale $\pi(\bz_{l}|\mathcal{D}_{obs})$ posterior estimation. Using the MCMC method, one can sample $\bz_l$ to generate the parameter $\bx_l$ by the decoder model $\bx_{l} = \mu_{\btheta_{l}}( \bz_{l})$ in MDGM, and then 
solve the forward model $\mathcal{F}_l(\bx_l)$ to evaluate the likelihood function and the acceptance ratio.

Suppose that the $j$-th state of the Markov chain in the $l$-th scale is $\bz_l^{(j)}$. The acceptance ratio of the reject/accept scheme  for the $(j+1)$-th state in the MH-MDGM is still the same as that in the single-scale MH algorithm, i.e.

\begin{equation}
    \alpha = min\left(1,\frac{\pi(\bz_l^{\prime}|\mathcal{D}_{obs}) \pi_q(\bz_l^{(j)}|\bz_l^{\prime})}{\pi(\bz_l^{(j)}|\mathcal{D}_{obs}) \pi_q(\bz_l^{\prime}|\bz_l^{(j)})}\right), 
    \label{ratio}
\end{equation}
where the proposal $\pi_q(\bz_l^{\prime}|\bz_l^{(j)})$ is a particular distribution since $\bz_l = (\bz_{l-1},\bz_l^{\star})$ contains two independent random variables, which have different connotation in the MH-MDGM. The proposal is taken to have a factorized form  as follows~\cite{ dodwell2015hierarchical, dodwell2019multilevel}:

\begin{equation}
    \pi_q(\bz_l^{\prime}|\bz_l^{(j)})=\pi_q(\bz_{l-1}^{\prime}|\bz_{l-1}^{(j)},\bz_{l}^{\star(j)}) \pi_q(\bz_l^{\star\prime}|\bz_{l-1}^{(j)},\bz_{l}^{\star(j)}).
    \label{fac}
\end{equation}
In the MDGM, 
$\bz_{l}^{\star}$ is sampled from an isotropic  Gaussian  distribution, which is independent of $\bz_{l-1}$. The proposal distribution for $\bz_{l}^{\star(j)}$ can be simplified as 

\begin{equation}
\pi_q(\bz_l^{\star\prime}|\bz_{l-1}^{(j)},\bz_{l}^{\star(j)}) = \pi_q(\bz_l^{\star\prime}|\bz_{l}^{\star(j)}).
\label{eq: lth poposal}
\end{equation}
This can be a simple random walk or the pCN algorithm. 
Exploring $\bz_l^{\star}$ provides additional information to that obtained at the $(l-1)$-scale allowing us to enrich the obtained local details and  correct global features. One can set a big step size for $\bz_l^{\star}$ in the proposal distribution to explore and detect various local patterns. 
In the MH-MDGM, the initial state $\bz_l^{\star}$ can be sampled  from the prior distribution $p(\bz_l^{\star})$.

Based on the independence assumption, one can define a proposal distribution in the $l$-th scale for the latent variable $\bz_{l-1}$ that connects adjacent scales as follows:

\begin{equation}
\pi_q(\bz_{l-1}^{\prime}|\bz_{l-1}^{(j)},\bz_{l}^{\star(j)}) = \pi_q(\bz_{l-1}^{\prime}|\bz_{l-1}^{(j)}).
\label{eq: l-1 th proposal}
\end{equation}
Once $\bz_{l-1}$ has been estimated in the previous-scale, some multiscale estimation methods like~\cite{wan2011bayesian} will not correct and estimate it again in the finer-scale. However, note that the model error that resulted from the coarse-scale solver can impact the posterior estimation. We treat the latent variable $\bz_{l-1}$ as a random variable in the $l$-th scale estimation and update it invoking the accurate forward model. To exploit the $(l-1)$-th scale posterior result, we  assign an initial state for $\bz_{l-1}$ in the $l$-th scale estimation, which can inherit the $(l-1)$-th scale Markov state, greatly reducing the time to obtain the stationary distribution. The obvious approach is to use the last state in the $(l-1)$-th scale, while another reasonable choice is the mean of the $(l-1)$-th posterior estimation. Note that we use a set of posterior samples $\{\bx_{l-1}^{(i)}\}_{i=n_{l-1}}^{N_{l-1}}$ to approximate the posterior distribution $\pi(\bx_{l-1}|\mathcal{D}_{obs})$, where $n_{l-1}$, $N_{l-1}$ denote  the lengths of  burn-in and Markov chain, respectively. $\bz_{l-1}$ encodes the coarse-scale estimation information via the encoder network $\pi_{\btheta_{l-1}}(\bz_{l-1}|\bx_{l-1})$. We can assign the initial state $\bz_{l-1}^{(1)}$ in the $l$-th scale estimation to be $\bz_{l-1}^{(1)} = \mathop{\arg\max} \pi_{\btheta_{l-1}}(\bz_{l-1}|\bm{\overline{\bx}}_{l-1})$, where $\bm{\overline{\bx}}_{l-1}$ is the mean of the posterior samples $\{\bx_{l-1}^{(i)}\}_{i=n_{l-1}}^{N_{l-1}}$. Since the latent variable $\bz_{l-1}$ has been estimated and can impact the crucial global features of the parameter $\bx_{l}$ generation using the decoder model $\bx_{l} = \mu_{\btheta_{l}}( \bz_{l})$, it should be assigned a small step size in the proposal distribution.

Combining Eq.~\eqref{eq: l-1 th proposal} with Eq.~\eqref{eq: lth poposal}, we obtain the conclusion in  Proposition~\ref{detailed balance}. The MH-MDGM algorithm for the $l$-th scale posterior distribution approximation is presented in Algorithm~\ref{multi_mcmc}.

\begin{remark}
In the $(l-1)$-th scale posterior approximation, the Markov chain generates samples $\{\bz_{l-1}^{(i)}\}_{i=n_{l-1}}^{N_{l-1}}$ and their corresponding parameter samples $\{\bx_{l-1}^{(i)}\}_{i=n_{l-1}}^{N_{l-1}}$. One cannot directly use the posterior samples $\bz_{l-1}^{(i)}$  in the $l$-th fine-scale inference. The reason is that these samples are only related to the decoder model $\mu_{\btheta_{l-1}}( \bz_{l-1})$, which can generate parameters $\bx_{l-1}$ to evaluate the likelihood function for the coarse-scale posterior approximation. Note that the training of the MDGM in the $l$-th scale only uses $\bx_{l}$ and $\bx_{l-1}$ rather than the latent variables. Message passing across scales depends on the encoder network $\pi_{\btheta_{l-1}}(\bz_{l-1}|\bx_{l-1})$. In order to generate $\bx_{l}$  for the fine-scale inference, we need to  use $\{\bx_{l-1}^{(i)}\}_{i=n_{l-1}}^{N_{l-1}}$ as the  approximate $(l-1)$-th scale posterior result and then encode them into the latent variables $\bz_{l-1}$,  rather than directly using the samples $\{\bz_{l-1}^{(i)}\}_{i=n_{l-1}}^{N_{l-1}}$.
\end{remark}
\par

\begin{myprop}
\label{detailed balance}
 For any $2\leq  l \leq L$, the acceptance ratio $
 \alpha(\bz_l^{\prime},\bz_l^{(j)})$ for the $(j+1)$-th state in the $l$-th scale MH-MDGM algorithm with the factorized proposal is  

 \begin{equation}
 \label{ac}
      \alpha(\bz_l^{\prime},\bz_l^{(j)}) = min\left(1,\frac{\pi(\bz_l^{\prime}|\mathcal{D}_{obs})\pi_q(\bz_{l-1}^{(j)}|\bz_{l-1}^{\prime})\pi_q(\bz_{l}^{\star(j)}|\bz_l^{\star\prime})}{\pi(\bz_l^{(j)}|\mathcal{D}_{obs}) \pi_q(\bz_{l-1}^{\prime}|\bz_{l-1}^{(j)})\pi_q(\bz_l^{\star\prime}|\bz_{l}^{\star(j)})}\right),
 \end{equation}
 which satisfies the detailed balance condition.
\end{myprop}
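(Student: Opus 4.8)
The plan is to obtain the stated acceptance ratio by substituting the factorized proposal into the generic Metropolis--Hastings ratio, and then to confirm detailed balance by the classical argument for MH kernels applied to the \emph{joint} target $\pi(\bz_l|\mathcal{D}_{obs})$ over $\bz_l=(\bz_{l-1},\bz_l^{\star})$. The proposition is really two statements bundled together: a purely algebraic reduction of the acceptance ratio under the factorization, and the standard reversibility property of the resulting kernel.

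First I would start from the generic MH acceptance ratio in Eq.~\eqref{ratio} and substitute the factorized proposal of Eq.~\eqref{fac}. Using the independence of $\bz_{l-1}$ and $\bz_l^{\star}$ together with the simplifications in Eqs.~\eqref{eq: lth poposal} and~\eqref{eq: l-1 th proposal}, the forward proposal factors as $\pi_q(\bz_l^{\prime}|\bz_l^{(j)})=\pi_q(\bz_{l-1}^{\prime}|\bz_{l-1}^{(j)})\,\pi_q(\bz_l^{\star\prime}|\bz_{l}^{\star(j)})$ and the reverse proposal as $\pi_q(\bz_l^{(j)}|\bz_l^{\prime})=\pi_q(\bz_{l-1}^{(j)}|\bz_{l-1}^{\prime})\,\pi_q(\bz_{l}^{\star(j)}|\bz_l^{\star\prime})$. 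The proposal ratio then separates into the product of the two component ratios, and inserting this into Eq.~\eqref{ratio} reproduces Eq.~\eqref{ac} immediately.

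For the detailed balance part, I would treat the accept/reject step as a single \emph{joint} move on $\bz_l$, so that for $\bz_l^{\prime}\neq\bz_l^{(j)}$ the transition kernel is $P(\bz_l^{(j)}\!\to\!\bz_l^{\prime})=\pi_q(\bz_l^{\prime}|\bz_l^{(j)})\,\alpha(\bz_l^{\prime},\bz_l^{(j)})$ with both $\pi_q$ and $\alpha$ in the factorized forms above. Writing $r$ for the ratio inside the $\min$ in Eq.~\eqref{ac}, so that $\alpha(\bz_l^{\prime},\bz_l^{(j)})=\min(1,r)$ and the reverse acceptance equals $\min(1,1/r)$, I would verify $\pi(\bz_l^{(j)}|\mathcal{D}_{obs})\,\pi_q(\bz_l^{\prime}|\bz_l^{(j)})\,\alpha(\bz_l^{\prime},\bz_l^{(j)})=\pi(\bz_l^{\prime}|\mathcal{D}_{obs})\,\pi_q(\bz_l^{(j)}|\bz_l^{\prime})\,\alpha(\bz_l^{(j)},\bz_l^{\prime})$ by splitting into the cases $r\leq 1$ and $r>1$; in each case both sides collapse to the same symmetric product $\pi(\bz_l^{\prime}|\mathcal{D}_{obs})\,\pi_q(\bz_l^{(j)}|\bz_l^{\prime})$. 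This is exactly the detailed balance relation $\pi(\bz_l^{(j)}|\mathcal{D}_{obs})\,P(\bz_l^{(j)}\!\to\!\bz_l^{\prime})=\pi(\bz_l^{\prime}|\mathcal{D}_{obs})\,P(\bz_l^{\prime}\!\to\!\bz_l^{(j)})$, with the diagonal case $\bz_l^{\prime}=\bz_l^{(j)}$ holding trivially.

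I expect the only genuine subtlety to be bookkeeping rather than mathematical depth: one must carry the full joint proposal ratio through a single accept/reject step and use the joint posterior $\pi(\bz_l|\mathcal{D}_{obs})$, rather than accepting or rejecting the components $\bz_{l-1}$ and $\bz_l^{\star}$ separately, which would instead be a Metropolis-within-Gibbs scheme requiring a different balance argument. Because the factorized proposal appears in both the numerator and denominator of $r$, its product structure cancels cleanly and does not disturb the classical MH cancellation, so detailed balance for the joint chain follows once the factorization is correctly tracked.
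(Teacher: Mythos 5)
Your proposal is correct and follows essentially the same route as the paper's proof: substitute the factorized proposal of Eqs.~\eqref{eq: lth poposal} and~\eqref{eq: l-1 th proposal} into the generic MH ratio to obtain Eq.~\eqref{ac}, then verify detailed balance for the joint kernel by the classical $\min$-symmetry argument (your case split on $r\leq 1$ versus $r>1$ is just the elementwise form of the identity $a\min(1,b/a)=\min(a,b)=b\min(1,a/b)$ that the paper uses), with the diagonal case handled trivially. Your remark about treating the update as a single joint accept/reject rather than Metropolis-within-Gibbs correctly identifies the one point where the argument could otherwise go wrong.
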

\begin{proof}
The transition kernel $K(\bz_l^{\prime},\bz_l^{(j)})$ with the proposal distribution $\pi_q(\bz_l^{\prime}|\bz_l^{(j)})$ in the MH algorithm is:

\begin{equation}
    K(\bz_l^{\prime},\bz_l^{(j)}) = \pi_q(\bz_l^{\prime}|\bz_l^{(j)})\alpha(\bz_l^{\prime},\bz_l^{(j)}) + \delta(\bz_l^{\prime},\bz_l^{(j)})\int\pi_q(\bz_l|\bz_l^{(j)})(1-\alpha(\bz_l,\bz_l^{(j)})) d\bz_l, 
\end{equation}
where $\delta(\bz_l^{\prime},\bz_l^{(j)})$ is the Dirac delta function. When $\bz_l^{\prime}=\bz_l^{(j)}$, it is  obvious that  the below detailed balance condition is satisfied:

\begin{equation}
\pi(\bz_l^{(j)}|\mathcal{D}_{obs})K(\bz_l^{\prime},\bz_l^{(j)}) = \pi(\bz_l^{\prime}|\mathcal{D}_{obs})K(\bz_l^{(j)},\bz_l^{\prime}).
\end{equation}
When $\bz_l^{\prime} \neq \bz_l^{(j)}$, $\delta(\bz_l^{\prime},\bz_l^{(j)}) = 0$, we obtain $K(\bz_l^{\prime},\bz_l^{(j)}) = \pi_q(\bz_l^{\prime}|\bz_l^{(j)})\alpha(\bz_l^{\prime},\bz_l^{(j)})$. Based on Eqs.~\eqref{eq: lth poposal} and~\eqref{eq: l-1 th proposal}, the proposal distribution can be written as $\pi_q(\bz_l^{\prime}|\bz_l^{(j)}) = \pi_q(\bz_{l-1}^{\prime}|\bz_{l-1}^{(j)})\pi_q(\bz_l^{\star\prime}|\bz_{l}^{\star(j)})$. Then we can write the following:

\begin{eqnarray}
&&\pi(\bz_l^{(j)}|\mathcal{D}_{obs})K(\bz_l^{\prime},\bz_l^{(j)}) \nonumber \\
&=& \pi(\bz_l^{(j)}|\mathcal{D}_{obs})\pi_q(\bz_{l-1}^{\prime}|\bz_{l-1}^{(j)})\pi_q(\bz_l^{\star\prime}|\bz_{l}^{\star(j)}) \min\left(1,\frac{\pi(\bz_l^{\prime}|\mathcal{D}_{obs})\pi_q(\bz_{l-1}^{(j)}|\bz_{l-1}^{\prime})\pi_q(\bz_{l}^{\star(j)}|\bz_l^{\star\prime})}{\pi(\bz_l^{(j)}|\mathcal{D}_{obs}) \pi_q(\bz_{l-1}^{\prime}|\bz_{l-1}^{(j)})\pi_q(\bz_l^{\star\prime}|\bz_{l}^{\star(j)})}\right)\nonumber \\
&=& \min\left( \pi(\bz_l^{(j)}|\mathcal{D}_{obs})\pi_q(\bz_{l-1}^{\prime}|\bz_{l-1}^{(j)})\pi_q(\bz_l^{\star\prime}|\bz_{l}^{\star(j)}),\pi(\bz_l^{\prime}|\mathcal{D}_{obs})\pi_q(\bz_{l-1}^{(j)}|\bz_{l-1}^{\prime})\pi_q(\bz_{l}^{\star(j)}|\bz_l^{\star\prime}) \right)\nonumber \\
&=& \pi(\bz_l^{\prime}|\mathcal{D}_{obs})\pi_q(\bz_{l-1}^{(j)}|\bz_{l-1}^{\prime})\pi_q(\bz_{l}^{\star(j)}|\bz_l^{\star\prime})\min\left(1,\frac{\pi(\bz_l^{(j)}|\mathcal{D}_{obs})\pi_q(\bz_{l-1}^{\prime}|\bz_{l-1}^{(j)})\pi_q(\bz_l^{\star\prime}|\bz_{l}^{\star(j)})}{\pi(\bz_l^{\prime}|\mathcal{D}_{obs})\pi_q(\bz_{l-1}^{(j)}|\bz_{l-1}^{\prime})\pi_q(\bz_{l}^{\star(j)}|\bz_l^{\star\prime})}\right) \nonumber \\
&=& \pi(\bz_l^{\prime}|\mathcal{D}_{obs})K(\bz_l^{(j)},\bz_l^{\prime}),
\end{eqnarray} 
and the detailed balance condition is satisfied.
\end{proof}

\begin{algorithm}
	\caption{ The Metropolis-Hastings with Multiscale Deep Generative Model (MH-MDGM) Algorithm for the $l$-th scale posterior distribution approximation}
	\label{multi_mcmc}
	\begin{algorithmic}[1]
		\Require unnormalized distribution $\pi(\bz_l| \mathcal{D}_{obs})$, proposal distribution $\pi_q(\cdot|\bz_{l-1})$ and $\pi_q(\cdot|\bz_{l}^{\star})$,  decoder model $\mu_{\btheta_l}( \bz_l)$,  recognition model $\pi_{\btheta_{l-1}}(\bz_{l-1}|\bx_{l-1})$, $(l-1)$-th scale posterior samples $\{\bx_{l-1}^{(j)}\}_{j=n_{l-1}}^{N_{l-1}}$, iteration number $N_l$, burn-in length $n_l$.
		\State Compute the mean of $(l-1)$-th scale posterior estimation $\bm{\overline{\bx}}_{l-1}$
		\State Initialize $\bz_{l-1}^{(1)},  \bz_{l-1}^{(1)} = \mathop{\arg\max} \pi_{\btheta_{l-1}}(\bz_{l-1}|\bm{\overline{\bx}}_{l-1})$.
		\State Initialize $\bz_l^{\star(1)}, \bz_l^{\star(1)} \sim \mathcal{N}(\bm{0},\bm{I})$.
		\For {$j = 1:N$}
		\State Draw $\bz_{l-1}^{\prime}  \sim \pi_q(\cdot|\bz_{l-1}^{(j)})$.
		\State Draw $\bz_l^{\star\prime} \sim \pi_q(\cdot|\bz_l^{\star(j)})$, then $\bz_l^{\prime} = (\bz_{l-1}^{\prime},\bz_l^{\star\prime})$
			\State Compute the spatially-varying parameter $ \bx^{\prime}_l = \mu_{\btheta_l}(\bz^{\prime}_l)$ by the decoder model
		\State Compute the likelihood function by solving the forward model $\mathcal{F}_l(\bx^{\prime}_l)$
		\State Compute the acceptance ratio 
		$$\alpha = min\left(1,\frac{\pi(\bz_l^{\prime}|\mathcal{D}_{obs})\pi_q(\bz_{l-1}^{(j)}|\bz_{l-1}^{\prime})\pi_q(\bz_{l}^{\star(j)}|\bz_l^{\star\prime})}{\pi(\bz_l^{(j)}|\mathcal{D}_{obs}) \pi_q(\bz_{l-1}^{\prime}|\bz_{l-1}^{(j)})\pi_q(\bz_l^{\star\prime}|\bz_{l}^{\star(j)})}\right)$$
		\State Draw $\rho$ from uniform distribution $\mathcal{U}[0,1]$
 		\If {$\rho < \alpha$}
		\State $\text { Let } \bz_{l-1}^{(j+1)}=\bz_{l-1}^{\prime} \text { and } \bz_l^{\star(j+1)}=\bz_l^{\star\prime}, \bx_l^{(j+1)} = \bx_l^{\prime}$
		\Else
		\State {$\text { Let } \bz_{l-1}^{(j+1)}=\bz_{l-1}^{(j)} \text { and }\bz_l^{\star(j+1)}=\bz_l^{\star(j)}, \bx_l^{(j+1)}=\bx_l^{(j)}$}
		\EndIf
		\EndFor
		\Ensure posterior samples $\{\bx_l^{(i)}\}_{i=n_l}^{N_l}$
	\end{algorithmic}
\end{algorithm}

\section{Numerical Examples}\label{sec:Examples}
The code and data for reproducing all examples reported in this paper can be found  at
 \href{https://github.com/zabaras/MH-MDGM}{https://github.com/zabaras/MH-MDGM}.
In this section, we discuss and compare  the performance of the proposed multiscale method with that of the single-scale inference method. Our focus is on the estimation of the permeability field in a single phase, steady-state Darcy flow.  For any permeability field $\bm{K}$ on a 2D unit square domain $\mathcal{S}=[0,1]^2$, the pressure field $p$ and velocity field $\bm{v}$ are governed by the following equations:  
\begin{eqnarray}
\bm{v}(\bm{\bm{s}}) &=&-\bm{K}(\bm{s}) \nabla p(\bm{\bm{s}}), \quad \bm{\bm{s}} \in \mathcal{S}, \label{eq:darcyA}\\
\nabla \cdot \bm{v}(\bm{s}) &=& f(\bm{s}), \quad \bm{s} \in \mathcal{S},  
\label{eq:darcyB}
\end{eqnarray}
\noindent  
with boundary conditions
\begin{eqnarray}
\bm{v}(\bm{s}) \cdot \hat{\bm{n}}&=&0, \quad \bm{s} \in \Gamma_{N}, \label{eq:BoundaryA}\\
  p(\bm{s}) &=& 0, \quad \bm{s} \in \Gamma_{D},
\label{eq:BoundaryB}
\end{eqnarray} 
where $\hat{\bm{n}}$ is the unit normal vector to the Neumann boundary $\Gamma_{N}$. The Neumann boundary $\Gamma_{N}$ consists of the top and bottom boundaries and the Dirichlet boundary $\Gamma_{D}$ consists of the right and left  walls. We consider a source term $f(\bm{s}) = 3$. For the forward model $\mathcal{F}$, given any permeability field $\bm{K}$,   Eqs.~\eqref{eq:darcyA} and~\eqref{eq:darcyB} are solved by a mixed finite element formulation implemented in FEniCS~\cite{alnaes2015fenics} with third-order Raviart-Thomas elements for the velocity $\bm{v}$, and forth-order discontinuous elements for the pressure $p$. The computational cost for solving this forward model with different discretizations is reported in Table~\ref{forward cost}.

\begin{table}[h]
	\caption{Computational cost of solving the forward model in  Eqs.~\eqref{eq:darcyA} and~\eqref{eq:darcyB} with different discretizations.} 
	\centering	
	\begin{tabular}{cccc}  
		\hline
		Discretization & Output dimension   & Seconds &  Normalized time\\ \hline
		$64 \times 64$ & $3 \times 4096$ & $3.10$ &  $23.85$\\
		$32 \times 32$ & $3 \times 1024$ & $0.67$ &  $5.15$\\
		$16 \times 16$ & $3 \times 256 $ & $0.13$ &  $1.0$\\
		\hline
	\end{tabular}
	\label{forward cost}
\end{table}

To enforce the non-negative permeability constraint, we consider the log-permeability  as the main parameter of interest in the inverse problem with $\bx = log(\bm{K})$. The inverse problem for the above model is to infer the unknown log-permeability field given noisy pressure measurements at some sensor locations. In this paper, the exact log-permeability field $\bx_{exact}$ is defined in a $64 \times 64$ 
uniform grid.
The generated data by the generative model are usually smooth and blurry compared to the original data due to information compression. We consider $64$ pressure observations that are uniformly located in $[0.0625+0.125i, 0.0625+0.125i], i=0,1,2,\ldots,7$. A $5\%$ independent additive Gaussian random noise is considered on these $64$ pressure observations which are obtained by the above forward model given the exact log-permeability field. 

\subsection{Test problem 1: Gaussian Random Field (GRF)} 
For GRF based log-permeability data, one can adopt the KLE method for a reduced-order representation. Bayesian inference often works well on such low-dimensional inversion tasks.
However, the KLE expansion requires a-priori knowledge of the length scale in the GRF covariance function. This information is of course not available in most practical applications. 
Data-driven methods like DGM do not require such restrictive assumptions. The only prior knowledge available is based on the given training dataset. 

In this section, we apply the proposed method on a two-scales scenario and provide a  comparison with the reference case of the single-scale method. In the two-scales test case, we consider the $16 \times 16$ uniform grid as the coarsest-scale and the $64 \times 64$ uniform grid as the finest-scale. The finest-scale is the same with the scale of the exact log-permeability field. Using the proposed method, the generative models are trained on $16 \times 16$ and $64 \times 64$ $ (16-64)$ resolutions. We use the pre-trained MDGM with MCMC for Bayesian inference from coarse to fine, and compare the results with the single scale ($64 \times 64$ grid) method with focus of our investigation on computational efficiency, accuracy and convergence.

\subsubsection{Multiscale dataset}
As discussed in Section~\ref{sec:MDGM}, we adopt an unsupervised learning method for parameter representation before addressing the solution of the inverse problem. In particular, one can train a generative model for the sampling of log-permeability fields. For such an unsupervised learning problem, we only need to obtain the dataset $\{\bxi\}_{i=1}^{N}$ based on historical sample information or the underlying prior distribution $\pi(\bx)$.
In this example, we assume that the log-permeability field is a Gaussian random field, i.e.
$ \bx(\bm{s}) \sim \mathcal{GP}\left( m(\bm{s}), k\left(\bm{s}_1, \bm{s}_2\right)\right)$, where $ m(\bm{s})$ and $k(\bm{s}_1, \bm{s}_2)$ are the mean and covariance functions, respectively. $\bm{s}_1 = (x_1, y_1)$ and $\bm{s}_2 = (x_2, y_2)$ denote two arbitrary spatial locations. The covariance function $k(\bm{s}_1, \bm{s}_2)$ in this paper is taken as:

\begin{equation}
    k(\bm{s}_1, \bm{s}_2)= \sigma_{\log (K)}^{2} \exp \left(-\sqrt{\left(\frac{x_1-x_2}{l_{1}}\right)^{2}+\left(\frac{y_1-y_2}{l_{2}}\right)^{2}}\right),
    \label{eq_cov}
\end{equation}
where $ \sigma_{\log (K)}^{2}$ is the variance,  and $l_{1}$ and $l_{2}$ are the length scales along the $x$ and $y$ axes, respectively.  In this example, we set $ m(\bm{s}) = 1$ and $ \sigma_{\log (K)}^{2} = 0.5$. As   discussed earlier, it is hard to tackle the multiple length scales setup using the KLE method, while the deep generative model has a distinct superiority on the prior assumption since the dataset embodies these assumptions or information in a natural way. We assume that the length scales are not  fixed in the prior distribution.  Let the length scales be $l_1 = l_2 = 0.2 +0.01i, i= 0,1,2,\ldots,9$. The finest-scale parameter $\bx_2$ is uniformly discretized into an $H \times W = 64 \times 64$ grid. For each length scale, we generate $2500$ samples for the training dataset $\{\bx_2^{(i)}\}_{i=1}^{N}$, where $N$ is $25000$. The exact log-permeability field $\bx_{exact}$ for the Bayesian inversion has never been seen in the training procedure. We set  the $16 \times 16$ grid as the coarsest-scale in the parameter estimation.   We adopt the upscaling operator in Eq.~\eqref{arithmetic average} with $n_e=16$ for each realization in $\{\bx_2^{(i)}\}_{i=1}^{N}$, and then obtain the $16 \times 16$ grid dataset $\{\bx_1^{(i)}\}_{i=1}^{N}$. This  constitutes  the training dataset for the generative model in the coarse-scale.

\subsubsection{Training and results of the MDGM}
With the training datasets available on each scale, we can train the generative model from coarse- to fine-scale. The coarsest-scale generative model only involves the single-scale parameters $\bx_1$. The training procedure of the single-scale generative model is illustrated in Algorithm~\ref{DGM}. The schematic network architecture is shown in Fig.~\ref{rt}. The detailed encoder and decoder networks can be seen in \ref{app:nn}. All of the encoder and decoder neural networks in this paper are trained on a NVIDIA GeForce GTX $1080$ Ti GPU card. The loss function is defined in Eq.~\eqref{eq:vae_loss} for training the single-scale generative model. For all training procedures in this paper, we set  the batch size $n$ in the loss function to $64$, and the sampling size $m$ to $1$.  For the optimization of all neural networks, the Adam optimizer~\cite{kingma2014adam} was adopted with a learning rate of $2 \times 10^{-4}$. The above setup is kept consistent for all training models in this paper. The only differences are the training epochs and hyperparameters $\tilde{\beta}$ in the loss functions, which will be specified in the remaining cases.  All the models in the GRF case are trained with $30$ epoches, and the hyperparameter $\tilde{\beta}$ is $0.5$. It takes  about $24$ minutes and $73$ minutes for the training of a single-scale generative model on $16 \times 16$ and $64 \times 64$ resolutions, respectively. 

For the $64 \times 64$ single-scale generative model, we use the pre-trained model $p_{ \btheta}(\bx|\bz)$ for MCMC exploration to estimate directly the parameter in the $64 \times 64$ resolution with $\bz \in \mathbb{R}^{256}$.  For the $16 \times 16$ single-scale generative model, the latent variables $\bz_1 \in \mathbb{R}^{16}$ serve as the bridge  to the fine-scale estimation. We use the pre-trained model $p_{ \btheta_1}(\bx_1|\bz_1)$ for MCMC exploration of the posterior in the coarse-scale. For the training of the MDGM, the pre-trained model $q_{\bphi_1}(\bz_1|\bx_1)$ is part of the encoder network as illustrated in Fig.~\ref{multi_vae1}, and an input to  Algorithm~\ref{multi_vae_algorithm}. The loss function in Eq.~\eqref{eq:MDGMloss} is used for training the models $q_{\bphi_2^{\star}}(\bz_2^{\star}|\bx_2)$ and $p_{ \btheta_2}(\bx_2|\bz_1,\bz_2^{\star})$, where $\bz_2^{\star} \in \mathbb{R}^{256}$. The training time  for  the models $q_{\bphi_2^{\star}}(\bz_2^{\star}|\bx_2)$ and $p_{ \btheta_2}(\bx_2|\bz_1,\bz_2^{\star})$ is  about $78$ minutes. 

Once   the generative models are obtained in the different scales, the parameters can be generated in each scale by first sampling the corresponding latent variable, and then by decoding it using the corresponding generative model. Here, we illustrate the MDGM results in Fig.~\ref{Gaussian_MDGM}. For the pre-trained generative model $p_{ \btheta_1}(\bx_1|\bz_1)$, we can sample the latent variables $\bz_1$ from $\mathcal{N}(\bm{0},\bm{I})$, and use these latent variables as input to the model $p_{ \btheta_1}(\bx_1|\bz_1)$. The first row of Fig.~\ref{Gaussian_MDGM} shows $4$ realizations with $16 \times 16$ resolution that are output of this decoder network. It is  apparent that all of these images have distinct features in the spatial distribution of high- and low-value regions. However, they   are very smooth so that one cannot capture any local information. Fortunately, this is a good choice to highlight the obvious global features, which are of great importance in Bayesian inversion. It is a fundamental and necessary requirement for spatially-varying parameter estimates to be consistent with the exact parameter in capturing global features. 

The performance of the $64 \times 64$ resolution generative model $p_{ \btheta_2}(\bx_2|\bz_1,\bz_2^{\star})$ in this two-scales MDGM is shown in the second row of Fig.~\ref{Gaussian_MDGM}. The latent variable $\bz_1$ is the mean of the distribution $q_{\bphi_1}(\bz_1|\bx_1)$, i.e. $\bz_1 = \mathop{\arg\max} q_{\bphi_1}(\bz_1|\bx_1)$, where $\bx_1 \in \mathbb{R}^{16 \times 16}$ is the first image in this row. The latent variable $\bz_2^{\star}$ is sampled from the Gaussian distribution $\mathcal{N}(\bm{0},\bm{I})$. We observe that that the three samples generated by the model $p_{ \btheta_2}(\bx_2|\bz_1,\bz_2^{\star})$ have some particularities. Unlike samples in the first row, they keep similar spatial distribution of high- and low-value regions that inherit from the first $16 \times 16$ resolution image $\bx_1$ while they are refined locally in a diverse manner. The model $q_{\bphi_1}(\bz_1|\bx_1)$ acts as a messenger in the MDGM. The information of low-resolution $\bx_1$ that captures global features is encoded by $\bz_1$, which together with the random variables $\bz_2^{\star}$ that is used to supplement local details are decoded by the generative model $p_{ \btheta_2}(\bx_2|\bz_1,\bz_2^{\star})$. This demonstrates that these two latent variables have different missions in the generative model, in particular $\bz_1$ plays an important role in the global feature generation, while $\bz_2^{\star}$ contributes to local details. This disentangled representation~\cite{higgins2018towards,higgins2017beta,bengio2013representation,suter2019robustly} for local and global features assists in an accurate and efficient Bayesian multiscale estimation.

\begin{figure}[!htbp]
	\centering
	
      \includegraphics[width=1.0\textwidth]{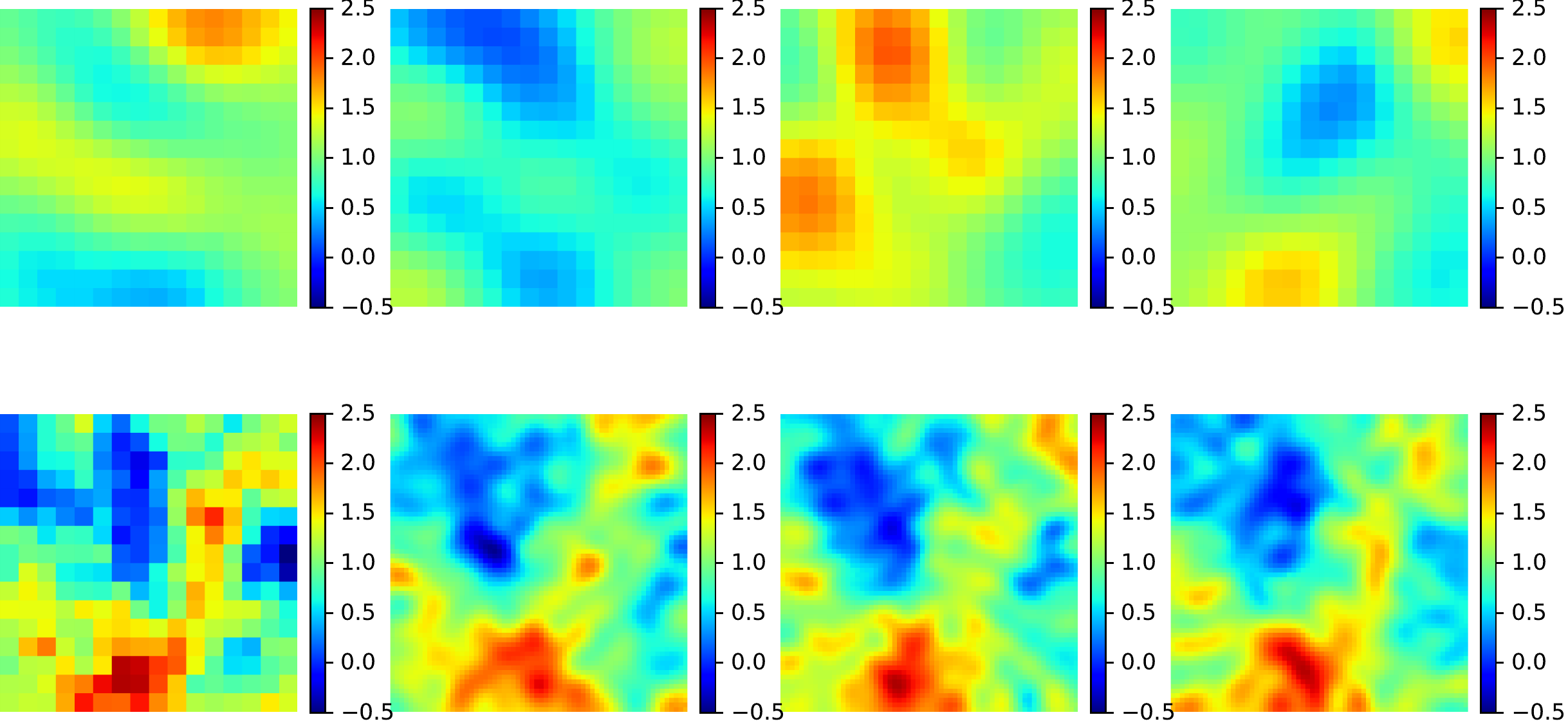}
    \caption{First row: $16 \times 16$ resolution  random samples using the generative model $p_{ \btheta_1}(\bx_1|\bz_1)$, where  $\bz_1\in \mathbb{R}^{16}$ is sampled from   $\mathcal{N}(\bm{0},\bm{I})$. Second row:  $64 \times 64$ resolution random samples using the generative model $p_{ \btheta_2}(\bx_2|\bz_2)$, where   $\bz_2 = (\bz_1,\bz_2^{\star})$.  We let $\bz_1 = \mathop{\arg\max} q_{\bphi_1}(\bz_1|\bx_1)$, where  $\bx_1$ is the first image in this row  and $\bz_2^{\star} \in \mathbb{R}^{256}$ is sampled from   $\mathcal{N}(\bm{0},\bm{I})$.}
    \label{Gaussian_MDGM}
\end{figure}

\subsubsection{The inversion results and discussion} 
The reference experiment considered is the single-scale method. We use Algorithm~\ref{mcmc} with the pre-trained model $\bx = \mu_{ \btheta}(\bz)$ to estimate directly $\pi(\bz|\mathcal{D}_{obs})$, where $\bx \in \mathbb{R}^{64 \times 64}$ and $\bz \in \mathbb{R}^{256}$. All the implementations using MCMC treat the latent variables as random variables, and use the pre-trained model to generate $\bx$ for evaluating the likelihood function. The proposal distribution for the random variables applied in this paper is preconditioned Crank-Nicolson (pCN) that is defined below: 

 \begin{equation}
 \label{pCN}
     \bz^{\prime} = \sqrt{1-\gamma^2} \bz + \gamma \zeta,
 \end{equation}
 where $\bz$ and $\bz^{\prime}$ are current and proposed next states, respectively, and $\zeta \sim \mathcal{N}(\bm{0},\bm{I})$. The step size of the random movement from the current state to a new position is controlled by the free parameter $\gamma$. We set $\gamma$ be $0.08$ for the first $50\%$  and $0.04$ for the last $50\%$ states in the Markov chain for all implementations using Algorithm~\ref{mcmc}. We run $10000$ iterations in MCMC to ensure its convergence. For all implementations of the MCMC algorithm, we collected the last $2000$ states as the posterior samples. Fig.~\ref{Gaussian-reference64} shows the estimation results using the reference method. It can be seen that the integration of the deep generative model with MCMC leads to a reasonable estimation of the spatially varying parameter.
 
 \begin{figure}[h]
	\centering
	\includegraphics[width=0.8\linewidth, height = 0.5\linewidth]{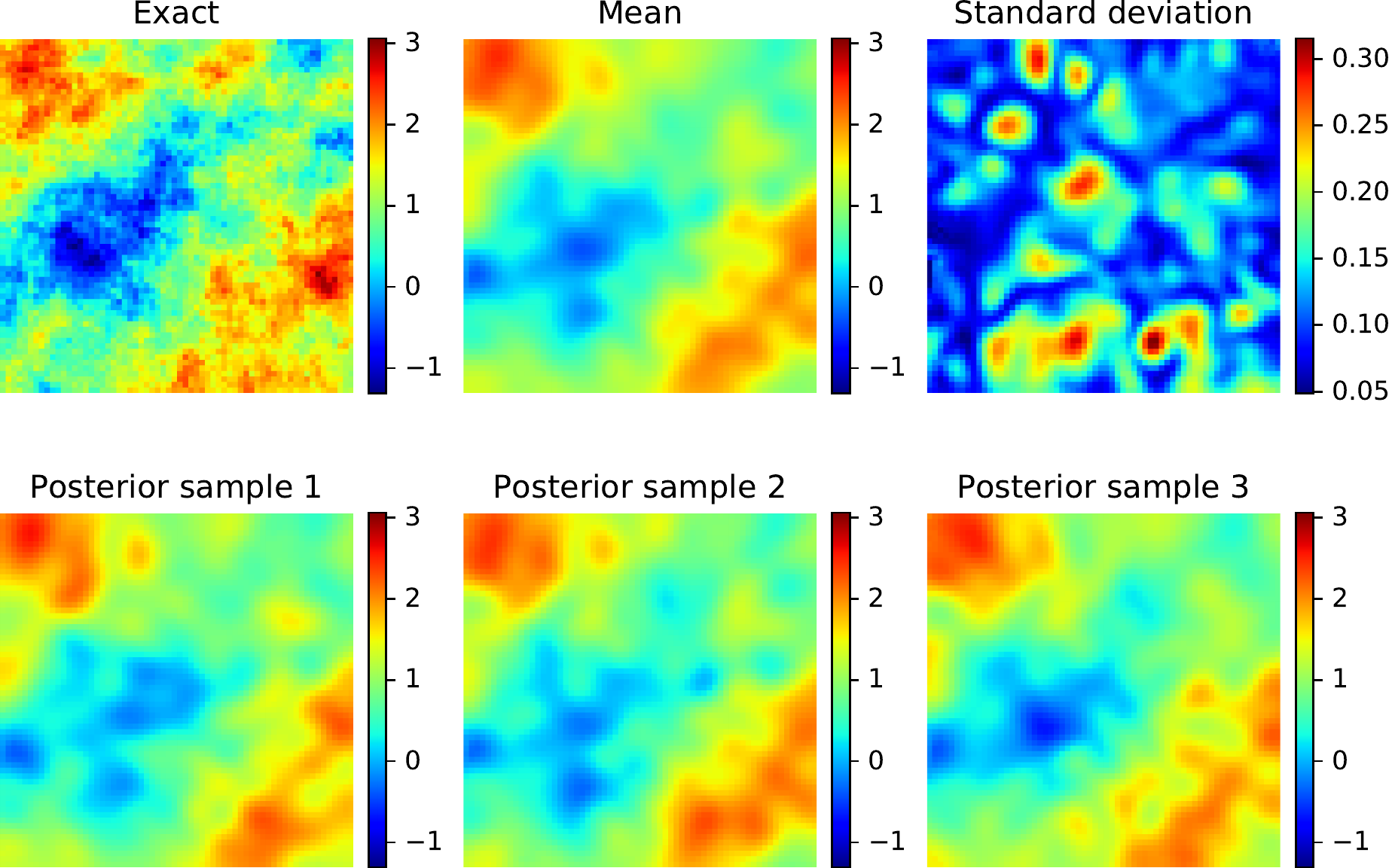}
	\caption{The reference single-scale estimation result with the desired $64 \times 64$ grid. The mean and standard deviation of the estimated posterior distribution that are computed using the posterior samples are shown in the first row. Realizations  from the posterior  are shown in the second row. Inference with respect to  $\bz \in \mathbb{R}^{256}$ is performed using  Algorithm~\ref{mcmc}.}
	\label{Gaussian-reference64}
\end{figure}

The multiscale parameter estimation is implemented next from coarse- to fine-scale. We use Algorithm~\ref{mcmc} with pre-trained model $\bx_1 = \mu_{ \btheta_1}(\bz_1)$ to estimate $\pi(\bz_1|\mathcal{D}_{obs})$, where $\bx_1 \in \mathbb{R}^{16 \times 16}$ and $\bz_1 \in \mathbb{R}^{16}$. A Markov chain with length $7000$ is constructed for $\bz_1$ to estimate the coarse-scale Gaussian log-permeability field. This result  explores various global patterns of the log-permeability. Fig.~\ref{Gaussian-16} shows that the estimation result is as expected. It is clear that the global spatial distribution of low- or high-values is located at three different regions, consistent with the exact log-permeability field. The estimated log-permeability with $16 \times 16$ grid   captures the important features efficiently since it uses the forward model with only a $16 \times 16$ grid. However, this occurs at the  sacrifice of local information. In addition, the coarse-scale forward model introduces  computational error. To resolve these  issues, one needs to infer the fine-scale parameter using the high-resolution generative model and the corresponding precise forward solver.

\begin{figure}[h]
	\centering
	\includegraphics[width=0.8\linewidth, height = 0.5\linewidth]{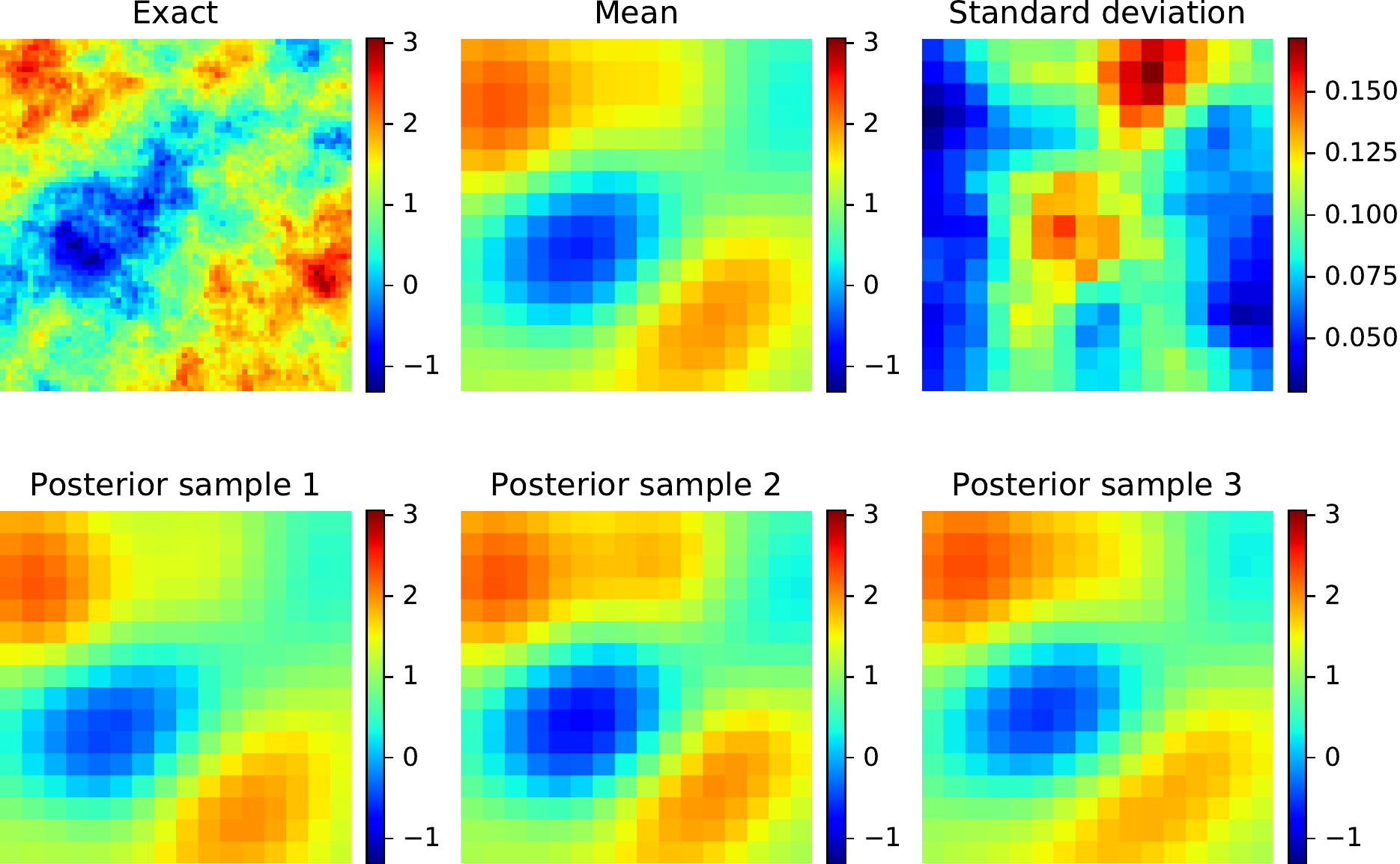}
	\caption{The two-scale estimation result with $16 \times 16$ coarse-grid and $64 \times 64$ fine-grid. The coarse-scale results are shown here. Inference with respect to   $\bz_1 \in \mathbb{R}^{16}$ is performed  using  Algorithm~\ref{mcmc}.}
	\label{Gaussian-16}
\end{figure}

To correct and refine the parameter estimation, we use Algorithm~\ref{multi_mcmc} with the  pre-trained model $\bx_2 = \mu_{ \btheta_2}(\bz_2)$ to estimate $\pi(\bz_2|\mathcal{D}_{obs})$, where $\bx_2 \in \mathbb{R}^{64 \times 64}$, $\bz_2 = (\bz_1, \bz_2^{\star})$, and $\bz_1 \in \mathbb{R}^{16}$, and $\bz_2^{\star} \in \mathbb{R}^{256}$. Since we have obtained the posterior samples in the coarse-scale, the estimation in the fine-scale  takes advantage of the coarse estimation. We need to assign two proposal distributions for $\bz_1$ and $\bz_2^{\star}$. The two proposal distributions we used in this paper for Algorithm~\ref{multi_mcmc} are the pCN in Eq.~\eqref{pCN} with different step sizes. The fixed step size $\gamma$ for the  low-dimensional latent variable $\bz_{l-1}$ is $0.01$, while the adaptive step size $\gamma$ for the high-dimensional latent variable $\bz_{l}^{\star}$ is $0.08$ for the first $50\%$ and $0.04$ for the last $50\%$ of the states in the Markov chain. The results in Fig.~\ref{Gaussian-16-64} indicate that the estimated parameter with $64 \times 64$ grid using the proposed method has even better performance than the reference single-scale results in Fig.~\ref{Gaussian-reference64}. The details of the low-value region are closer to the exact log-permeability field. 

\begin{figure}[h]
	\centering
	\includegraphics[width=0.8\linewidth, height = 0.5\linewidth]{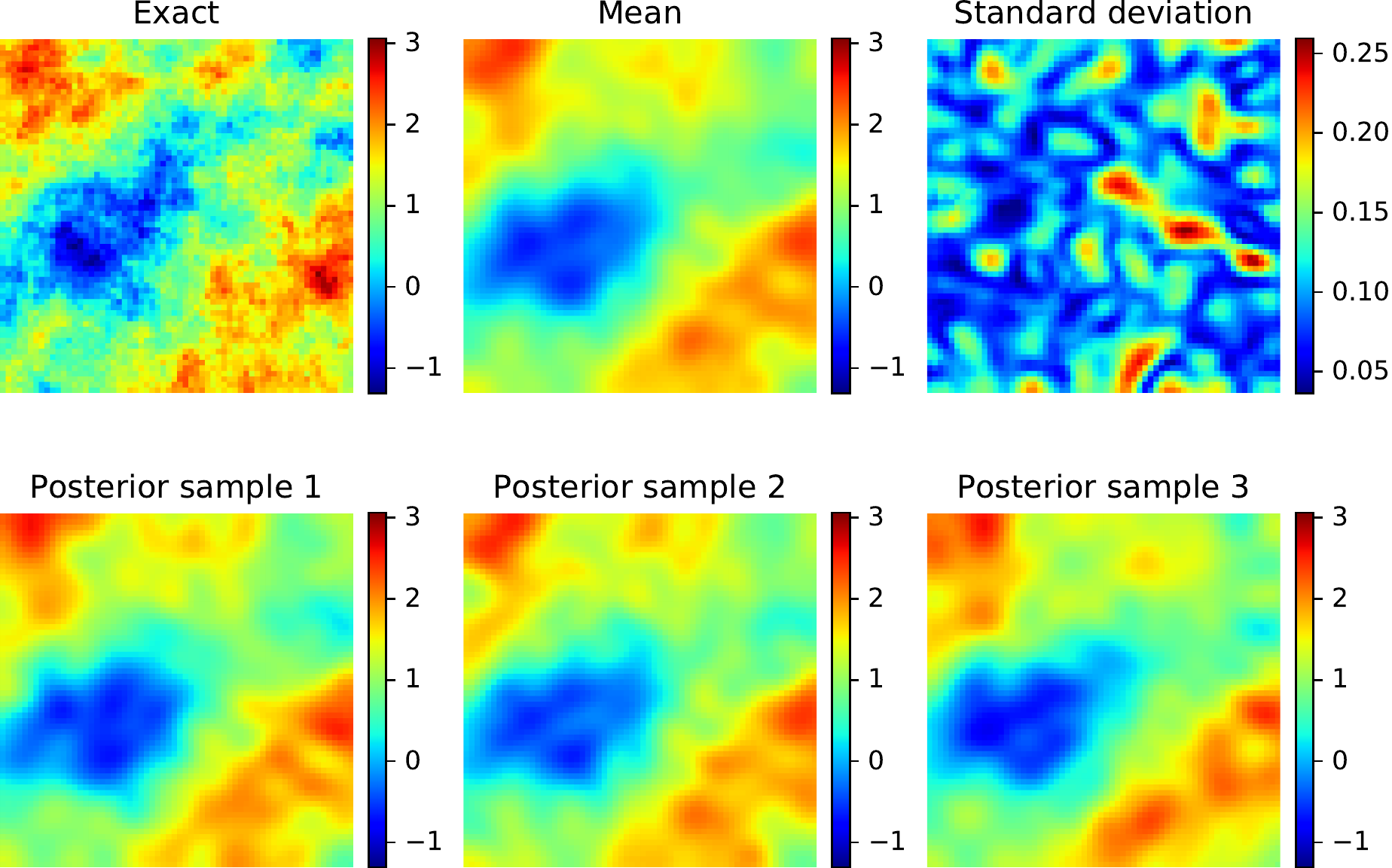}
	\caption{The two-scale estimation result with $16 \times 16$ coarse-grid and $64 \times 64$ fine-grid. The fine-scale results are shown here. Inference with respect to two latent variables $\bz_1 \in \mathbb{R}^{16}$ and $\bz_2^{\star} \in \mathbb{R}^{256}$ is performed using  Algorithm~\ref{multi_mcmc}.}
	\label{Gaussian-16-64}
\end{figure}

To compare the estimation result with the exact field, we provide an illustration in Fig.~\ref{Gaussian_error_bar} that shows the values of the log-parameter field from the left top corner to the right bottom corner. We notice that the true values curve (black line) is very sharp while the posterior mean is much smoother. Compared to other deep generative models~\cite{goodfellow2014generative, dinh2014nice}, the shortcoming of VAE~\cite{kingma2013auto} is that it generates blurry samples since the bottleneck layer captures a compressed latent encoding. However, we note that the result obtained from the proposed method is better than the reference method as its mean is much closer to the exact solution.

\begin{figure}[h]
	\centering
	\subfloat[]{
    \includegraphics[width=0.4\textwidth]{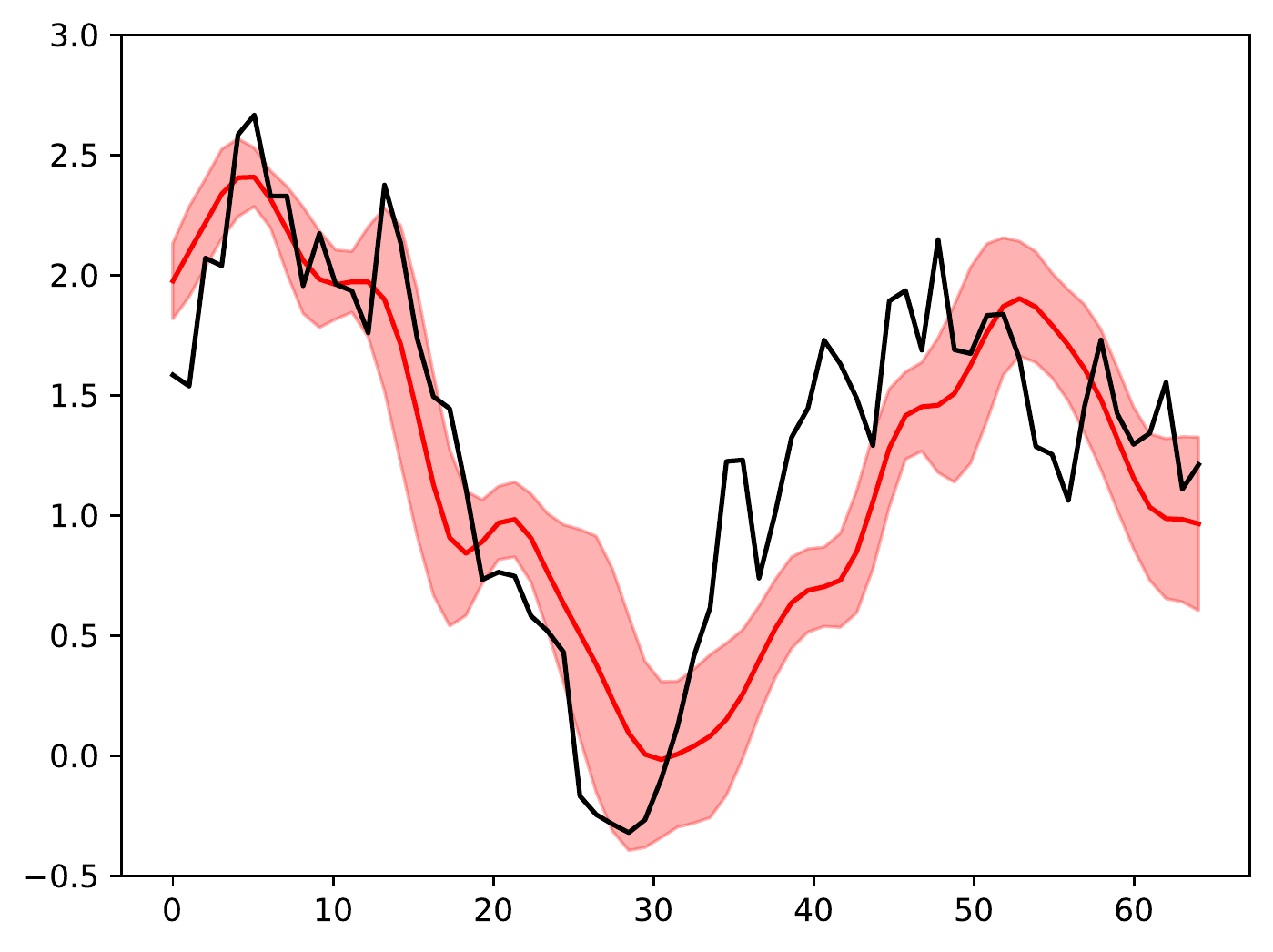}}
    \quad
    \subfloat[]{
    \includegraphics[width=0.4\textwidth]{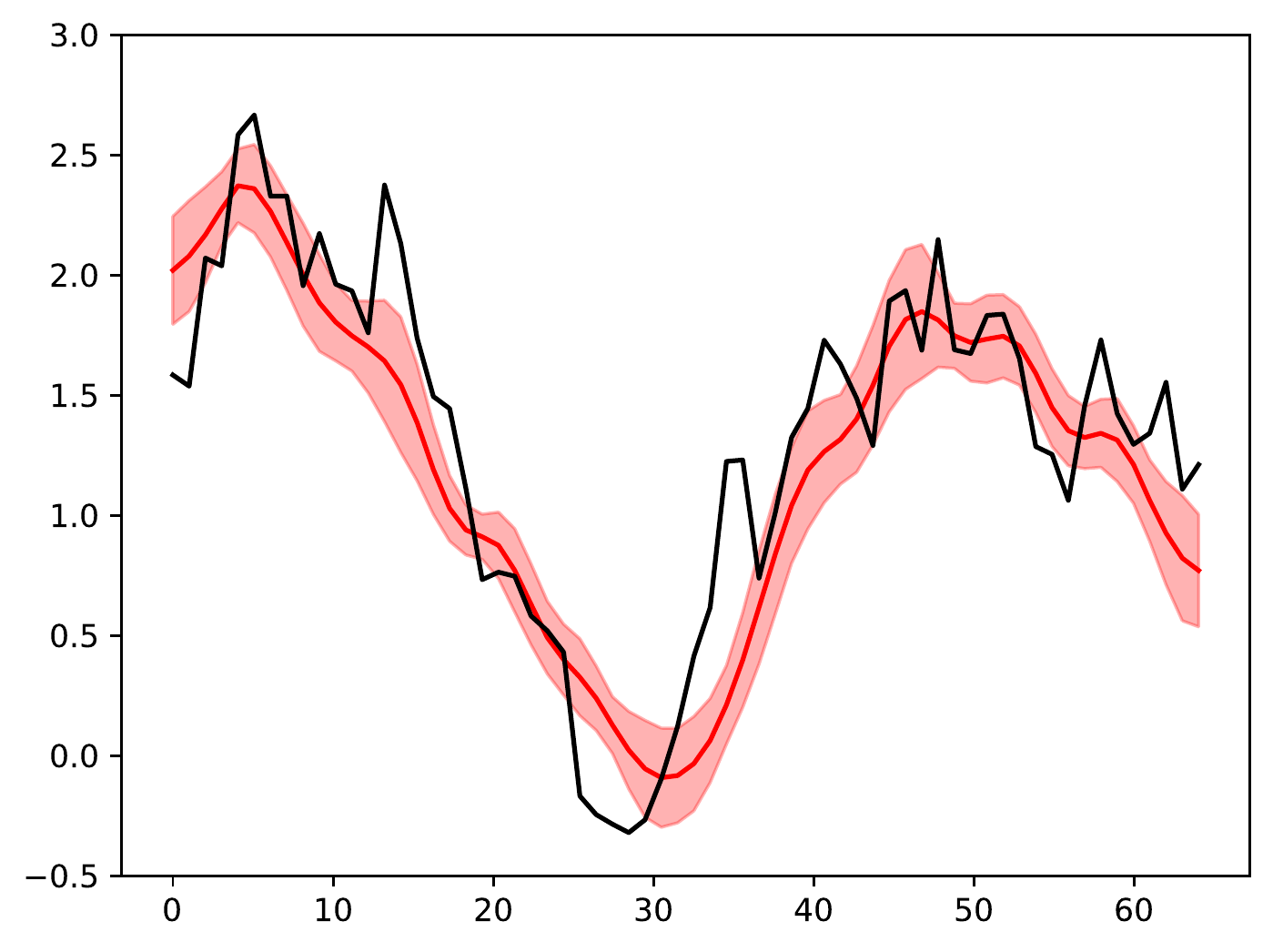}}
    \caption{The log-permeability from the left top corner to the right bottom corner on a $64 \times 64$ grid. The results from left to right are obtained by  (a) reference $(64)$ and (b) two scales $(16-64)$ estimation, respectively. The black and red lines show the true and the posterior mean log-permeability, respectively. The shaded region shows values within two standard deviations of the mean.}
    \label{Gaussian_error_bar}
\end{figure}
 
 The main cost of the Bayesian inference using MCMC comes from the forward model evaluation. The forward model's computational cost in Table~\ref{Gaussian_cost} suggests that the single-scale method takes about $1.9$ times the computational cost of the proposed two-scale method. The acceptance rate of MCMC for all implementations is shown in Table~\ref{Gaussian_acceptance_ratio}. We can note that the proposed method has a much  higher acceptance rate than the single-scale method in the desired scale with $64 \times 64$ grid. The reason is that the inference for the fine-scale parameter is facilitated  from the designed latent space of the MDGM that has two latent variables with different influences in the fine-scale parameter generation. The low-dimensional latent variable impacts the global features observed in the fine-scale. We assigned a small step size for its pCN proposal distribution since the global features that are inherited from the coarse-scale estimation only need slight adaption, while the high-dimensional latent variable estimated for the local refinement is assigned with a big step size. The reference method often rejects the proposed samples and gets trapped in local modes. However, in the fine-scale estimation in the proposed method,  the main changes in the local features will lead to small changes in the likelihood function so that most of the proposed samples are accepted. 
 
\begin{table}[h]
	\caption{The iterations (its) and approximated cpu time in seconds(s) for solving the forward model in different experiments for the GRF test problem.} 
	\centering	
	\begin{tabular}{ccccc}  
		\hline
		Experiment     & $16 \times 16$ &  $64 \times 64$ & Total\\ \hline
		\multirow{2}{*}{one scale ($64$)$^{\star}$}  & - & $10000$ its & $10000$ its \\
		 & - & $31000$ s & $31000$ s  \\\hline
		\multirow{2}{*}{two scales ($16-64$)} & $7000$ its  & $5000$ its & $12000$ its \\
		& $910$ & $15500$ s & $16410$ s  \\\hline
	\end{tabular}
	\label{Gaussian_cost}
\end{table}

\begin{table}[h]
	\caption{The acceptance ratio of the MH algorithm in different experiments for the GRF test problem.} 
	\centering	
	\begin{tabular}{cccc}  
		\hline
		Experiment & $16 \times 16$   & $64 \times 64$\\ \hline
		one scale ($64$) & - & 30.6\% \\ \hline
		two scales ($16-64$) & 37.5\% & 67.8\% \\
		\hline
	\end{tabular}
	\label{Gaussian_acceptance_ratio}
\end{table}

The convergence of the Markov chain used to estimate parameters with a $64 \times 64$ grid is of concern since the forward evaluation on such a scale is very expensive. 
To assess the convergence in the desired scale with a $64 \times 64$ grid, we employ three metrics using  variables available in the iterative process. Since the observation data (noisy pressure measurements) is the only basis for parameter estimation, we compute the misfit between the observation data and its corresponding prediction using the sum of squared residuals of observations ($SSR_{obs}$) as the iteration proceeds:

\begin{equation}
  SSR_{obs} =  \sum_{i=1}^{N_{obs}}\left(\mathcal{F}(\mu(\bz))^{(i)} - \mathcal{D}_{obs}^{(i)}\right)^2, 
\end{equation}
where $N_{obs} = 64$ in this paper, and $\mathcal{F}(\mu(\bz))^{(i)}$ and $\mathcal{D}_{obs}^{(i)}$ are the $i$--th predicted pressure value and its corresponding true observation, respectively. Ideally, we expect that the value of $SSR_{obs}$ is close to $0$, which suggests there is no discrepancy between the predictions and observations. But this cannot be realized even using the exact input for $\mathcal{F}$ due to measurements noise. To remove the impact of noise, we use an enhanced metric i.e. the normalized sum of squared weighted residual ($NSSWR$)~\cite{mo2019integration, laloy2012high}:

\begin{equation}
  NSSWR = \frac{1}{SSWR_{ref}} \sum_{i=1}^{N_{obs}}\left(\frac{\mathcal{F}(\mu(\bz))^{(i)} - \mathcal{D}_{obs}^{(i)})}{\sigma_{n}^{(i)}}\right)^2, 
\end{equation}
where $SSWR_{ref} = \sum_{i=1}^{N_{obs}}(\frac{\mathcal{F}(\bx_{exact})^{(i)} - \mathcal{D}_{obs}^{(i)})}{\sigma_{n}^{(i)}})^2 $, $\bx_{exact}$ is the exact log-permeability field, and $\sigma_{n}^{(i)}$ is the standard deviation of Gaussian random noise imposed in the $i$--th observation. The $SSWR$ metric is normalized by the $SSWR_{ref}$. Thus, the inversion process has converged when the $NSSWR$ value is close to $1$. Since the target is to estimate the log-permeability field based on the given observations, we also consider the evaluation of the mismatch between the predicted and the exact log-permeability fields using the sum of squared residuals of parameter ($SSR_{para}$) as shown below:

\begin{equation}
  SSR_{para} =  \sum_{i=1}^{M}\left(\mu(\bz)^{(i)} - \bx_{exact}^{(i)}\right)^2,
\end{equation}
where $M = 4096$, since the desired scale is discretized to $64 \times 64$, and $\mu(\bz)^{(i)}$ and $\bx_{exact}^{(i)}$ are the  $i$--th value of the predicted and the exact log-permeability fields, respectively. 

Fig.~\ref{convergence_Gaussian} shows the convergence results in the GRF case. The comparison of the proposed method with the single-scale method is also given in Fig.~\ref{convergence_Gaussian}. It can be seen that better performance in all evaluation metrics is produced by the proposed method. 
A big distinction takes place in the decrease of $SSR_{para}$ illustrated in Fig.~\ref{convergence_Gaussian}(c). We show the  sampled log-permeability field with $64 \times 64$ grid at different iterations in the Markov chain in Fig.~\ref{Gaussian-state}. For the single-scale method, each dimension of the estimated latent variable $\bz \in \mathbb{R}^{256}$ has equivalent importance for the generation of the log-permeability. The random walk in such a high-dimensional space has difficulty in efficiently exploring the space and  identifying a good estimate and its uncertainty. Together with a random initial state sampled from the prior distribution, the exploration takes a long time to reach the stationary distribution.   The second row in Fig.~\ref{Gaussian-state} presents the state evolution for the two-scale method.  The first-state that inherited the coarse-scale estimation captures well most of the non-local features of  the exact log-permeability field. Thus with the iterations shown,  we only need to correct the local features to explore the posterior distribution.

\begin{figure}[h]
	\centering
	\subfloat[]{
    \includegraphics[width=0.3\textwidth]{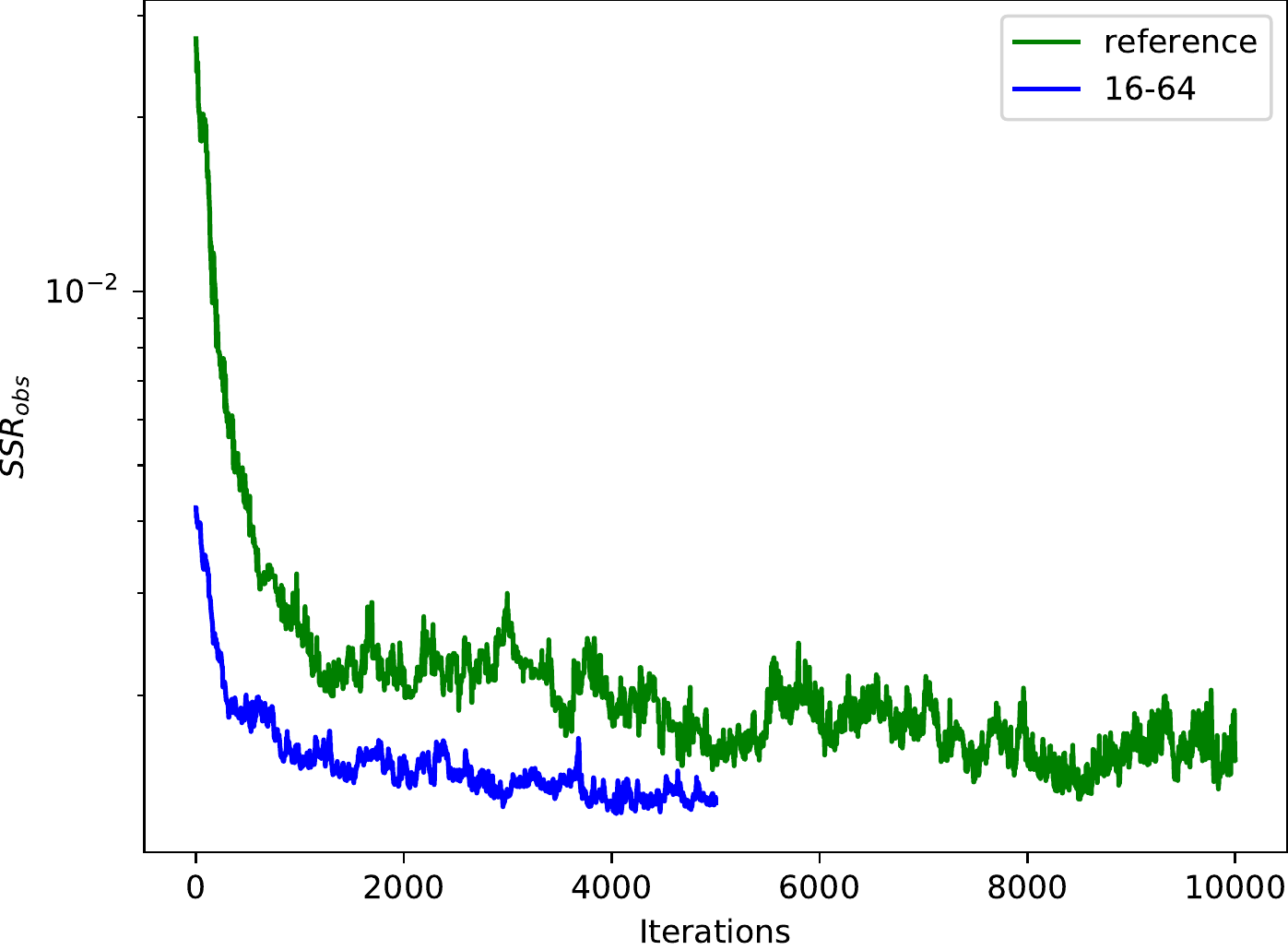}}
    \quad
    \subfloat[]{
    \includegraphics[width=0.3\textwidth]{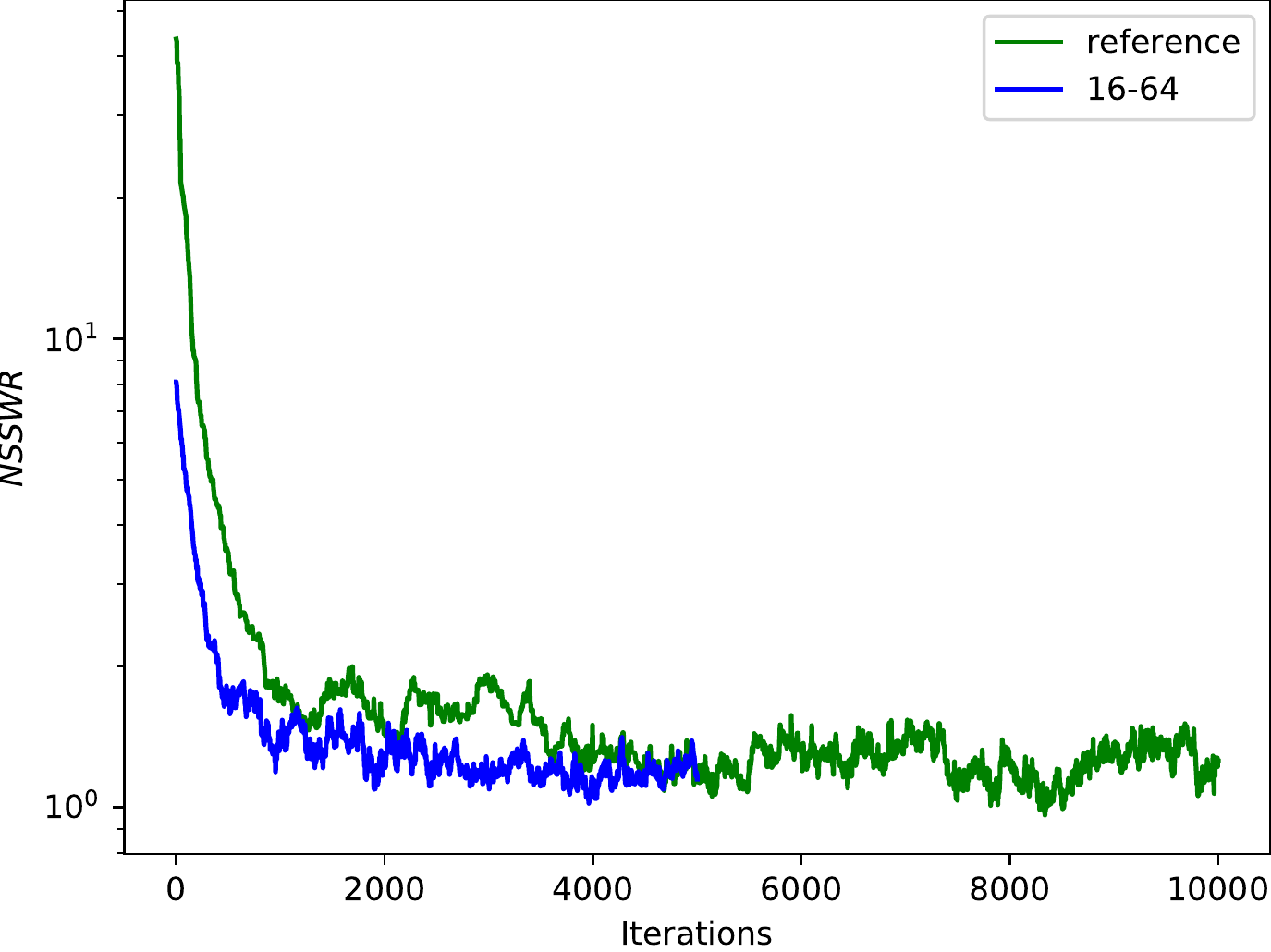}}
    \quad
    \subfloat[]{
    \label{convergence_Gaussian c}
    \includegraphics[width=0.3\textwidth]{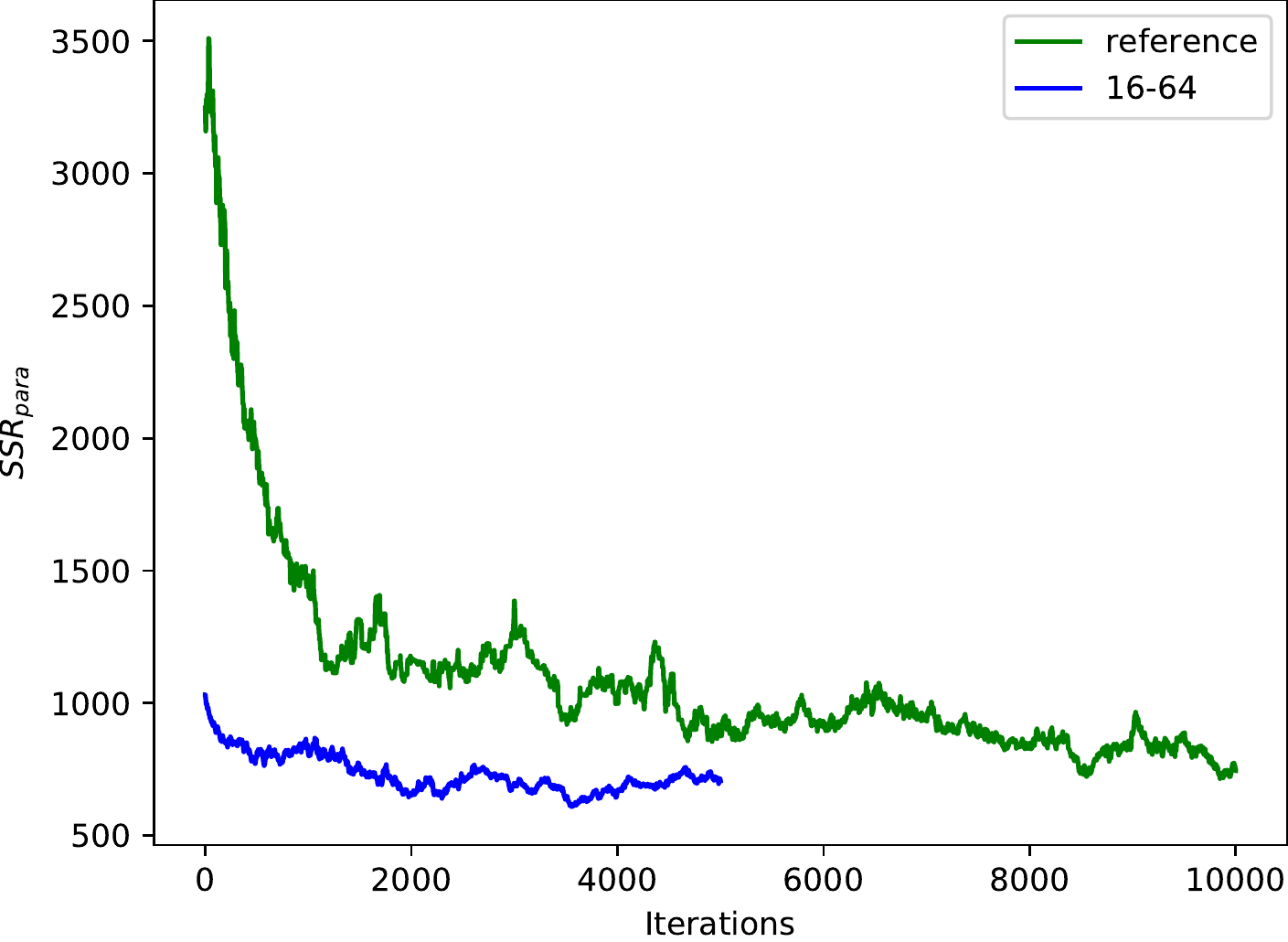}}
	\caption{The convergence of the Markov chain used for the Gaussian log-permeability field estimation with a $64 \times 64$ grid. The evaluation metrics from left to right are (a) the SSR of observable pressure values,  (b) NSSWR values, and (c) the SSR of parameter field.}
	\label{convergence_Gaussian}
\end{figure}

\begin{figure}[h]
	\centering
	\includegraphics[width=1.0\linewidth, height = 0.4\linewidth]{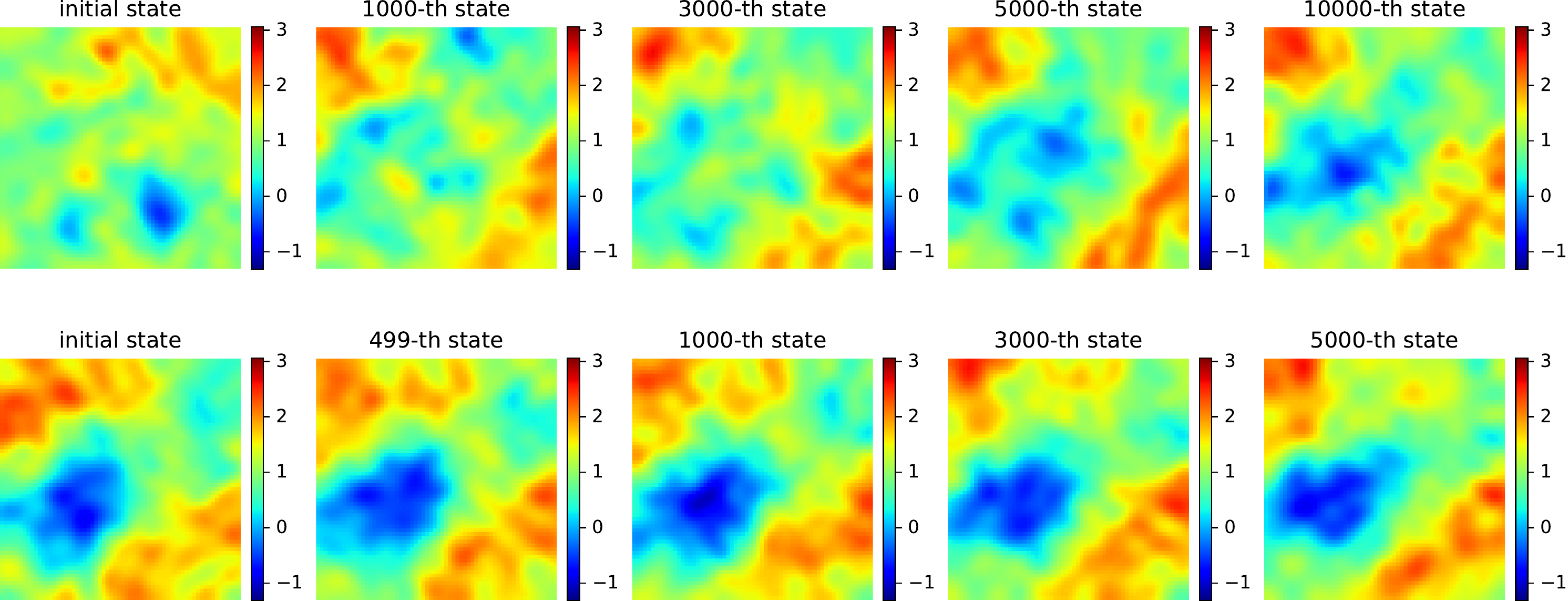}
	\caption{The states of the Gaussian log-permeability field with $64 \times 64$ grid during iterations in the Markov chain. The results from top to bottom row are obtained from (a) reference $(64)$ and (b) two scales $(16-64)$ experiment, respectively.}
	\label{Gaussian-state}
\end{figure}

\subsection{Test problem 2: Non-Gaussian Random Field} 
In the second test problem, we consider a channelized log-permeability field as the exact parameter in the inversion experiment. For such a non-Gaussian permeability field is often difficult to obtain a good parameterization using conventional methods such as sparse-grid approximations~\cite{wan2011bayesian}, wavelets~\cite{ellam2016bayesian}, or principal component analysis  (PCA)~\cite{emerick2017investigation,vo2015data,sarma2007new}. Furthermore, the Bayesian inference using random walk MCMC based on recent reported learning-based parameterization methods~\cite{mo2019integration,laloy2017inversion,tang2020deep, laloy2018training} for high-dimensional non-Gaussian parameters is not a good  choice. We show the
benefits of the multiscale method in the parameterization of non-Gaussian random fields and Bayesian inference. In this test example, we demonstrate the proposed method with two- and three-scales scenarios and provide comparisons with the reference case of a single-scale method. 

\subsubsection{Multiscale dataset}
 Suppose that the prior information for the channel location before any measurement is from the   image~\cite{laloy2018training} with size of $2500 \times 2500$ shown in Fig.~\ref{data_gene}(a). One can crop the large image with a fixed stride. With a $16$ stride in the horizontal and vertical directions, we obtained $23104$ training samples of size of $64 \times 64$. To provide sufficient data for the training of the generative model, we flip the entries in each row of the image in the left/right direction by the \textsl{fliplr} operation~\footnote{https://numpy.org/doc/1.18/reference/generated/numpy.fliplr.html} implemented in Numpy package to obtain a new image, and cropped this image to obtain additional $23104$ samples.  A sample cropped by this procedure is illustrated in Fig.~\ref{data_gene}(b). The binary image depicts channels with white regions. We assume that the channelized regions have different log-permeabilities 
resulting in high- and low-permeability values in white and black regions, respectively. We set the log-permeability values for each region by independently sampling  from two  Gaussian Random Fields (GRFs). The means of the GRFs for the high-permeability channelized regions and low-permeability regions are $4$ and $0$, respectively. The covariance function in Eq.~\eqref{eq_cov} with length scales $l_1$ and $l_2$ equal to $0.3$ is applied. The variance is $0.5$ for both GRFs. By imposing two GRFs for the binary image samples, we can generate the training data as shown in Fig.~\ref{data_gene}(c). We assume the generated samples are i.i.d.~sampled from the underlying prior distribution $\pi(\bx)$. We take $40000$ samples from generated $46208$ realizations as training data.
The exact log-permeability field $\bx_{exact}$ for Bayesian inversion is sampled from the remaining samples, which has never been seen in the training procedure. The $\bx_{exact}$ we used in this test problem is the image shown in Fig.~\ref{data_gene}(c). 

\begin{figure}[!htp]
	\centering
	\includegraphics[width=0.8\linewidth, height = 0.5\linewidth]{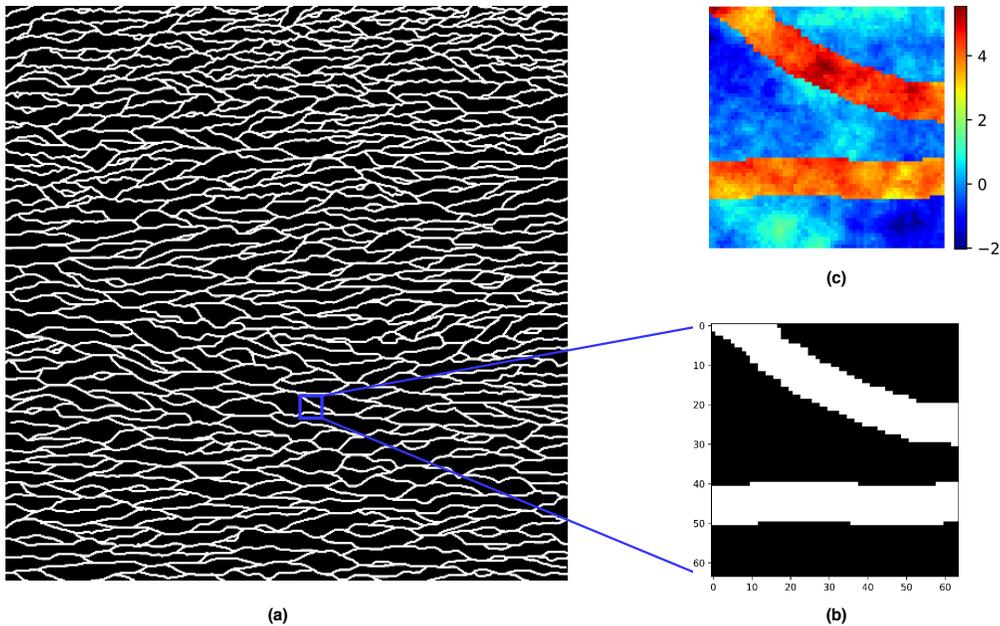}
	\caption{(a) The large image contains the prior information of the  channel location (b) cropped binary image samples from the large image (c) A sample from the underlying prior distribution $\pi(\bx)$ by assigning two different GRFs to the binary image.}
	\label{data_gene}
\end{figure}

The training of MDGM needs data with different discretizations. In this test example, we consider two types of MDGM, which are a two-scale model with $16 \times 16$ grid and $64 \times 64$ grid $(16-64)$ and a three-scale model with  $16 \times 16$ grid, $32 \times 32$ grid and $64 \times 64$ grid $(16-32-64)$. To generate these datasets, one can adopt the upscaling operator in Eq.~\eqref{arithmetic average} with $n_e=4$ over the original $64 \times 64$ grid data, and apply it again with $n_e=4$ in the generated $32 \times 32$ grid data to obtain the $16 \times 16$ grid data. An example with different discretizations has been illustrated in Fig.~\ref{upscaling}.

\subsubsection{Training and results of the MDGM}
Using the generated training datasets, we applied the same neural network used in the GRF case to train the MDGM. We will discuss the training procedure and performance below for the two-  and three-scales cases. The reference single-scale generative model $p_{ \btheta}(\bx|\bz)$ requires $3.15$ hours to train $50$  epochs using the loss function in Eq.~\eqref{eq:vae_loss} and Algorithm~\ref{DGM}, where $\bx \in \mathbb{R}^{64 \times 64}$, $\bz \in \mathbb{R}^{256}$, and $\tilde{\beta} = 0.5$.

\noindent \emph{Two scales $(16-64)$.} In this example, the $\bx_1, \bx_2$ denote the parameters with $16 \times 16$ grid and $64 \times 64$ grid, respectively. For the $\bx_1$, the generative model is trained using Algorithm~\ref{DGM} with $\tilde{\beta} = 1$ in Eq.~\eqref{eq:vae_loss}. It takes about $37$ minutes for $30$ training epochs. We obtained $p_{ \btheta_1}(\bx_1|\bz_1)$ and $q_{\bphi_1}(\bz_1|\bx_1)$, where $\bx_1 \in \mathbb{R}^{16 \times 16}$ and $\bz_1 \in \mathbb{R}^{16}$. The model $q_{\bphi_1}(\bz_1|\bx_1)$ is the input to  Algorithm~\ref{multi_vae_algorithm} used for training the finer-scale generative model. We use the pre-trained model $p_{ \btheta_1}(\bx_1|\bz_1)$ to reconstruct the parameters to compute the likelihood function when we use Algorithm~\ref{mcmc} to estimate $\pi(\bz_1|\mathcal{D}_{obs})$, where $\bz_1 \in \mathbb{R}^{16 }$. The generated samples using the model $p_{ \btheta_1}(\bx_1|\bz_1)$ are shown in the first row of Fig.~\ref{Channel_MDGM}(a). As expected, these samples present the most important features i.e. the location of the channels without much local information. The finer-scale generative model $p_{ \btheta_2}(\bx_2|\bz_1,\bz_2^{\star})$ is trained using Algorithm~\ref{multi_vae_algorithm} with the loss  function in Eq.~\eqref{eq:MDGMloss} for the estimation refinement, where $\tilde{\beta} = 2.5$, $\bx_2 \in \mathbb{R}^{64 \times 64}$, and $\bz_2^{\star} \in \mathbb{R}^{256}$. Training $50$ epochs takes about $3.3$ hours. The performance of this   is shown in the second row of Fig.~\ref{Channel_MDGM}(a), where the first image in this row is a field with a $16 \times 16$ grid. Encoding this image into a certain $\bz_1$ and  together with three randomly sampled latent variables $\bz_2^{\star} \sim \mathcal{N}(\bm{0},\bm{I})$ generated the last three samples in this row. We can see that the generated fine-scale samples   keep similar channels with the given coarse-scale image, while their local refinement shows sufficient diversity. This  indicates  that the coarse-scale information can be captured by the latent variable $\bz_1$ using the encoder model $q_{\bphi_1}(\bz_1|\bx_1)$. As before, in the finer-scale generative model, the low-dimensional latent variable $\bz_1$  dominates the global features (channels in this example), while the high-dimensional latent variables $\bz_2^{\star}$ are used to capture  local features.

\begin{figure}[!htbp]
	\centering
	\subfloat[]{\label{Channel_MDGM_16_64}%
    \includegraphics[width=1.0\textwidth]{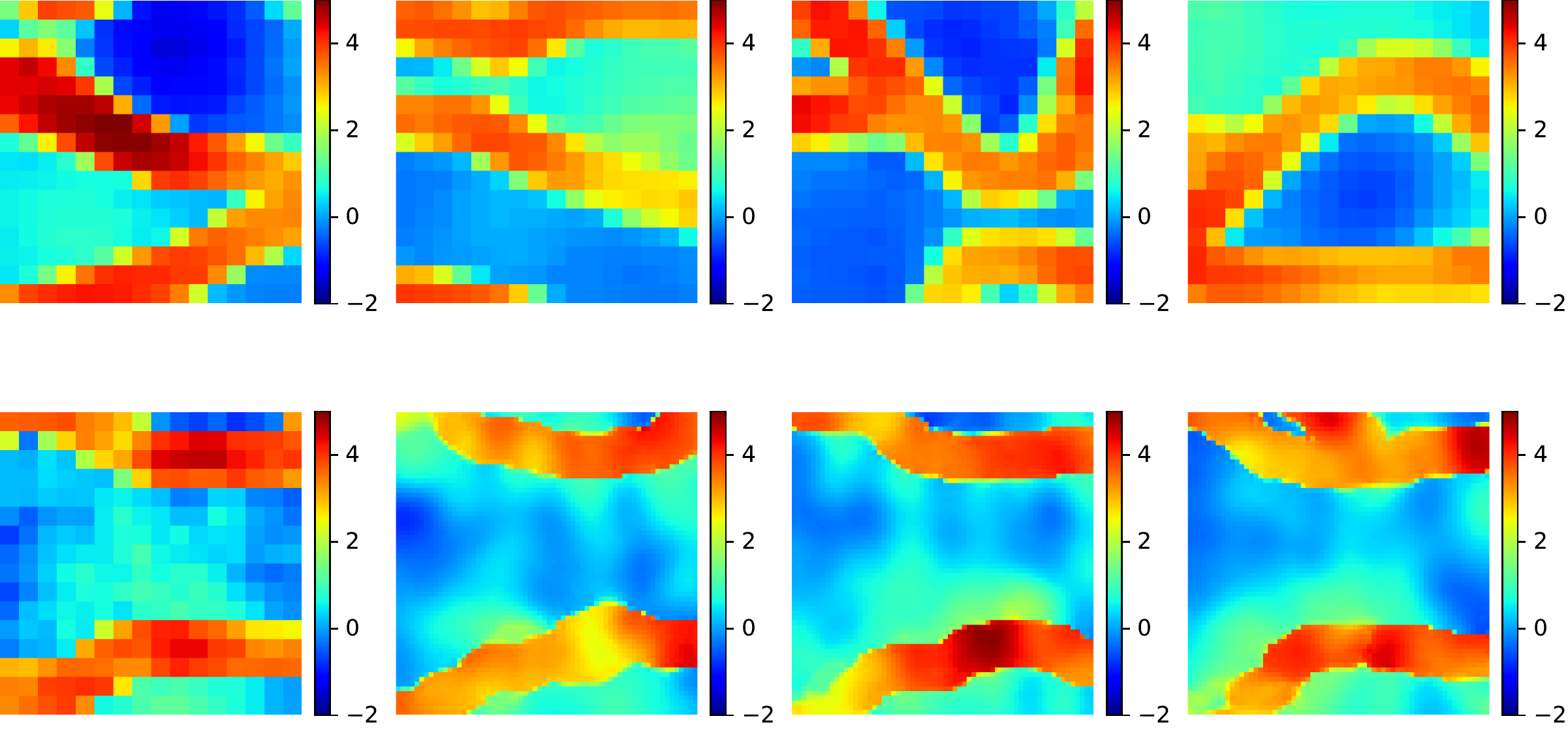}}
    \quad
    \subfloat[]{\label{Channel_MDGM_16_32_64}%
    \includegraphics[width=1.0\textwidth]{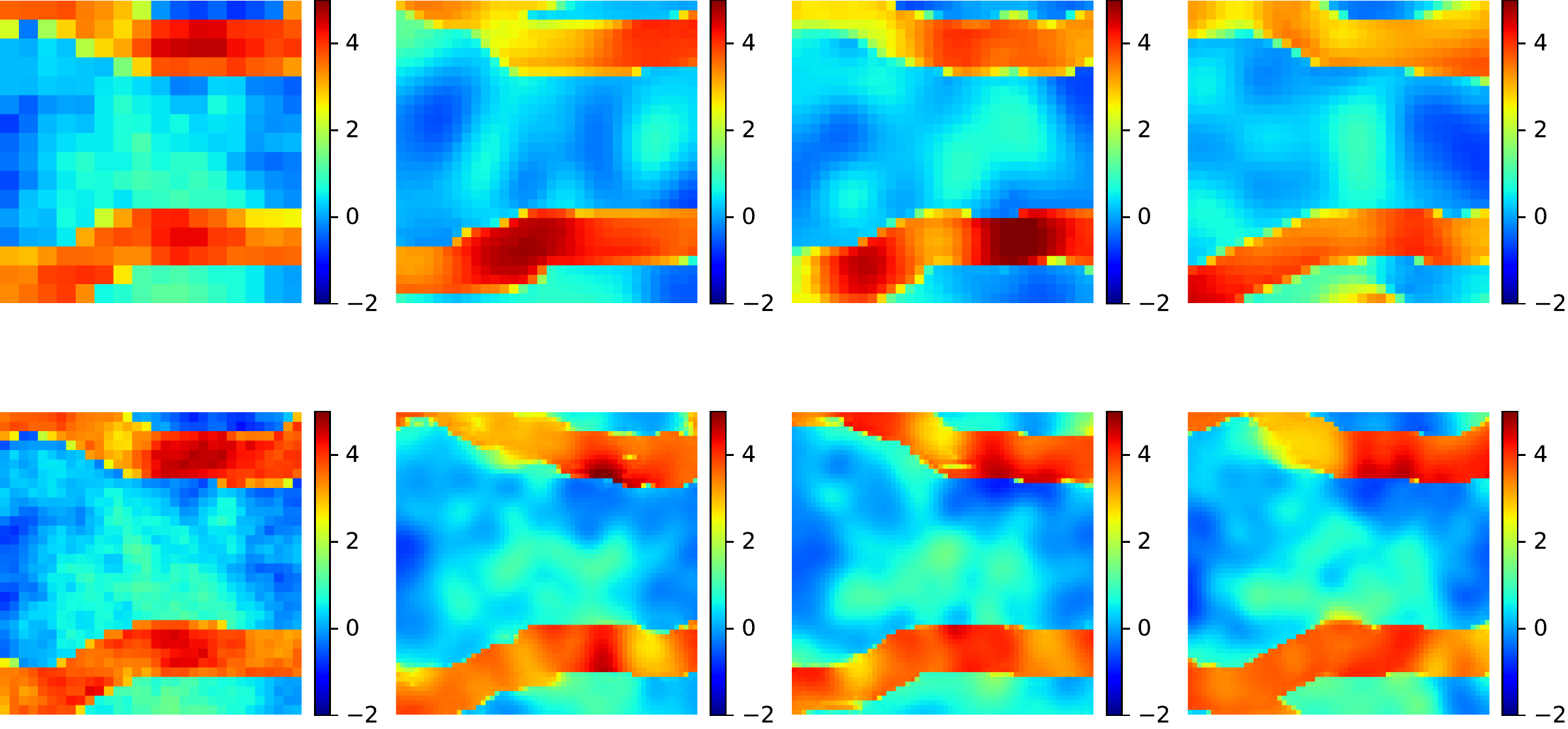}}
    \caption{(a) The $16-64$ MDGM. First row: $16 \times 16$ resolution  random samples using the generative model $p_{ \btheta_1}(\bx_1|\bz_1)$ by randomly sampling   $\bz_1\in \mathbb{R}^{16}$ from   $\mathcal{N}(\bm{0},\bm{I})$ as input. Second row: $64 \times 64$ resolution  random samples using the model $p_{ \btheta_2}(\bx_2|\bz_1,\bz_2^{\star})$. We let $\bz_1 = \mathop{\arg\max} q_{\bphi_1}(\bz_1|\bx_1)$, and $\bx_1$ is the first image in this row, and $z_2\in \mathbb{R}^{256}$ is randomly sampled from $\mathcal{N}(\bm{0},\bm{I})$. (b) The $16-32-64$ MDGM. The $16 \times 16$ resolution generative model is the same with the $16-64$ MDGM, shown in the first row of (a). First row: $32 \times 32$ resolution  random samples using the model $p_{ \btheta_2}(\bx_2|\bz_1,\bz_2^{\star})$. We let $\bz_1 = \mathop{\arg\max} q_{\bphi_1}(\bz_1|\bx_1)$, and $\bx_1$ is the first image in this row, and $z_2\in \mathbb{R}^{64}$ is randomly sampled from $\mathcal{N}(\bm{0},\bm{I})$. Second row: $64 \times 64$ resolution  random samples using the model $p_{ \btheta_3}(\bx_3|\bz_2,\bz_3^{\star})$. We let $\bz_2 = \mathop{\arg\max} q_{\bphi_2}(\bz_2|\bx_2)$, and $\bx_2$ is the first image in this row, and $z_3\in \mathbb{R}^{256}$ is randomly sampled from $\mathcal{N}(\bm{0},\bm{I})$.  }
    \label{Channel_MDGM}
\end{figure}

\noindent \emph{Three scales $(16-32-64)$.} In the second example with three scales, the parameters with $16 \times 16$, $32 \times 32$ and $64 \times 64$ grids are denoted by $\bx_1, \bx_2, \bx_3$, respectively. The model $p_{ \btheta_1}(\bx_1|\bz_1)$ is the same as the above two-scales example as shown in the first row of Fig.~\ref{Channel_MDGM}(a). The first row of  Fig.~\ref{Channel_MDGM}(b) gives  the results of the model $p_{ \btheta_2}(\bx_2|\bz_1,\bz_2^{\star})$, where $\bx_2 \in \mathbb{R}^{32 \times 32}$, $\bz_1 \in \mathbb{R}^{16}$ is encoded from the first image in this row using the model $q_{\bphi_1}(\bz_1|\bx_1)$, and $\bz_2^{\star} \in \mathbb{R}^{64}$ is sampled from $\mathcal{N}(\bm{0},\bm{I})$. It takes about $1.3$ hours to train $50$ epochs with $\tilde{\beta} = 0.7$ in Eq.~\eqref{eq:MDGMloss}. Compared with the two-scales case, one can notice that the generated samples have highly consistent channels that inherit from the first image but maintain local diversity. In order to sample the desired parameters with $64 \times 64$ grid, we also adopt Algorithm~\ref{multi_vae_algorithm} with $\tilde{\beta} = 0.7$ in Eq.~\eqref{eq:MDGMloss} to train the model $p_{ \btheta_3}(\bx_3|\bz_2,\bz_3^{\star})$, where $\bz_2 \in \mathbb{R}^{80}$, $\bz_3^{\star} \in \mathbb{R}^{256}$, and the training time is $3.75$ hours. The pre-trained model $q_{\bphi_2}(\bz_2|\bx_2)$ is the input to encode the given coarse-scale training data, where $\bz_2 = (\bz_1, \bz_2^{\star})$. The samples using the pre-trained model $p_{ \btheta_3}(\bx_3|\bz_2,\bz_3^{\star})$ are the last three samples shown in the second row of  Fig.~\ref{Channel_MDGM}(b), where $\bz_2$ is encoded from the first image in this row. As with the previous examples, the channels basically coincide with those in the first image. However, the difference is that the generated samples not only retain the global features of the encoded coarse-scale image but also discover local details. 

In summary, both the two- and three-scales models can generate samples with the correct spatial distribution of channels in the coarsest-scale with the $16 \times 16$ grid. For the fine-scale parameter generation, the low-dimensional latent variables  encode the information of the coarse-scale parameter and define the global features of the fine-scale parameter. The high-dimensional latent variables capture fine-details and provide diversity in the generated samples.

\subsubsection{The inversion results and discussion} 
Once all generative models are trained, we can use them in the Bayesian inversion process. To assess the efficiency and accuracy of the proposed multiscale method, we first  evaluate the posterior distribution using the reference single-scale method. Algorithm~\ref{mcmc} using the pre-trained model $ \bx=\mu_{\btheta}(\bz)$ as the input can perform the random walk in the latent space, where $\bx \in \mathbb{R}^{64 \times 64}$ and $\bz \in \mathbb{R}^{256}$. Although the dimension of the latent variable $\bz$ is still high, the convergence can be realized by constructing a Markov chain with the length of $30000$. The inferred results are depicted in Fig.~\ref{channel_reference}. The estimated mean and samples mostly match the exact channels and some important local details.

\begin{figure}[!htbp]
	\centering
	\includegraphics[width=0.8\linewidth, height = 0.5\linewidth]{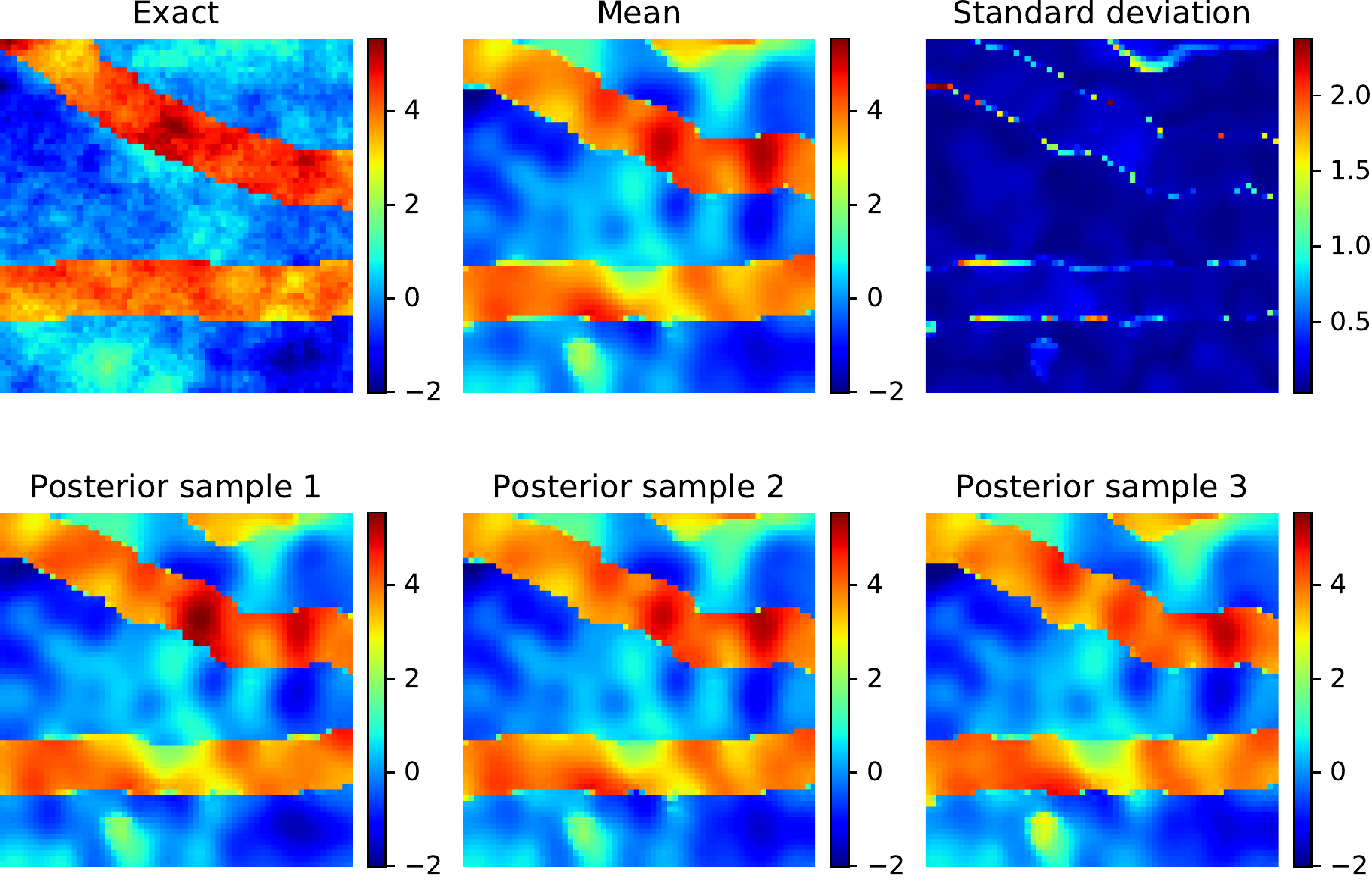}
	\caption{The referenced single-scale estimation result in the desired $64 \times 64$ grid.  Inference with respect to  $\bz \in \mathbb{R}^{256}$ is performed using Algorithm~\ref{mcmc}.}
	\label{channel_reference}
\end{figure}

\noindent \emph{Two scales $(16-64)$.}
For the two-scales inversion example, one needs to estimate successively the parameter with the pre-trained generative model in the coarse-scale with a $16 \times 16$ grid and fine-scale with a $64 \times 64$ grid. The coarse-scale estimation uses Algorithm~\ref{mcmc} and the pre-trained model $\bx_1 = \mu_{\btheta_1}(\bz_1)$, where $\bx_1 \in \mathbb{R}^{16 \times 16}$ and $\bz_1 \in \mathbb{R}^{16}$. The length of the Markov chain for  $\bz_1$ is $7000$. The posterior log-permeability fields are illustrated in Fig.~\ref{channel_16}. Obviously, the method identified all channel locations, which is the most important information in channelized parameter estimation. Similar to the Gaussian case shown in Fig.~\ref{Gaussian-16}, the estimated variance in the coarse-scale is very low since slightly varying a global feature of the log-permeability will greatly impact the value of the pressure field. Based on the coarse-scale estimation, the refinement results with $64 \times 64$ grid are shown in  Fig.~\ref{channel_16-64}. This only needs to run $7000$ iterations using  Algorithm~\ref{multi_mcmc} with the pre-trained model $\bx_2 =\mu_{\btheta_2}(\bz_2^{\star},\bz_1)$, where $\bx_2 \in \mathbb{R}^{64 \times 64}$, $\bz_2^{\star} \in \mathbb{R}^{256}$, and $\bz_1 \in \mathbb{R}^{16}$. The fine-scale local details are similar to those in the exact log-permeability. However, benefited from the coarse-scale estimation, the calculation saved a lots of computational cost requiring a reduced number of   forward model evaluations the  $64 \times 64$ grid.  

\begin{figure}[!htbp]
	\centering
	\includegraphics[width=0.8\linewidth, height = 0.5\linewidth]{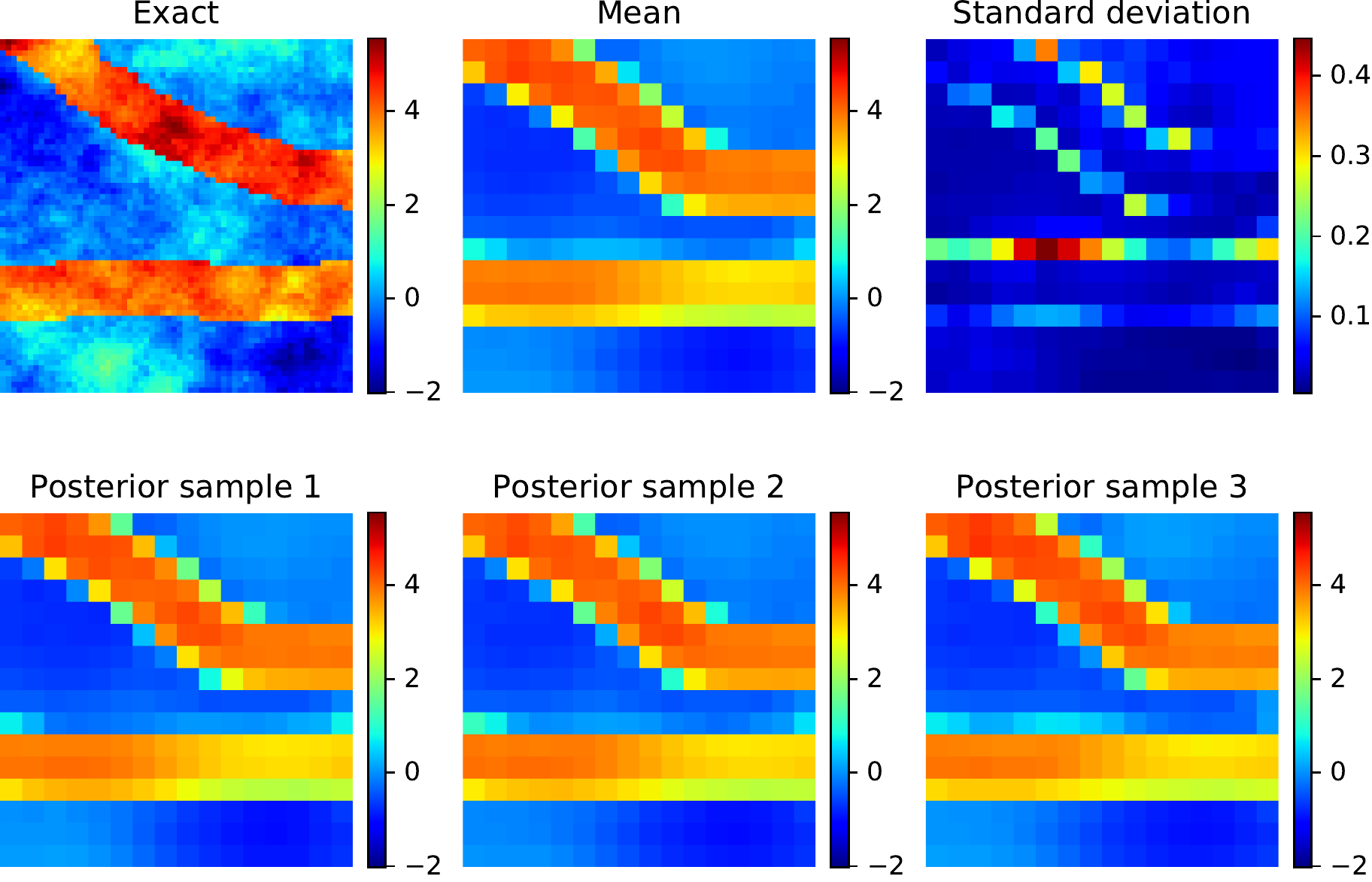}
	\caption{The estimation result with $16 \times 16$ grid. The results only involve inference in the coarsest scale with $16 \times 16$ grid, which needs to be corrected and refined in the finer-scale. Inference is performed with respect to  $\bz_1 \in \mathbb{R}^{16}$ using Algorithm~\ref{mcmc}.}
	\label{channel_16}
\end{figure}

\begin{figure}[h]
	\centering
	\includegraphics[width=0.8\linewidth, height = 0.5\linewidth]{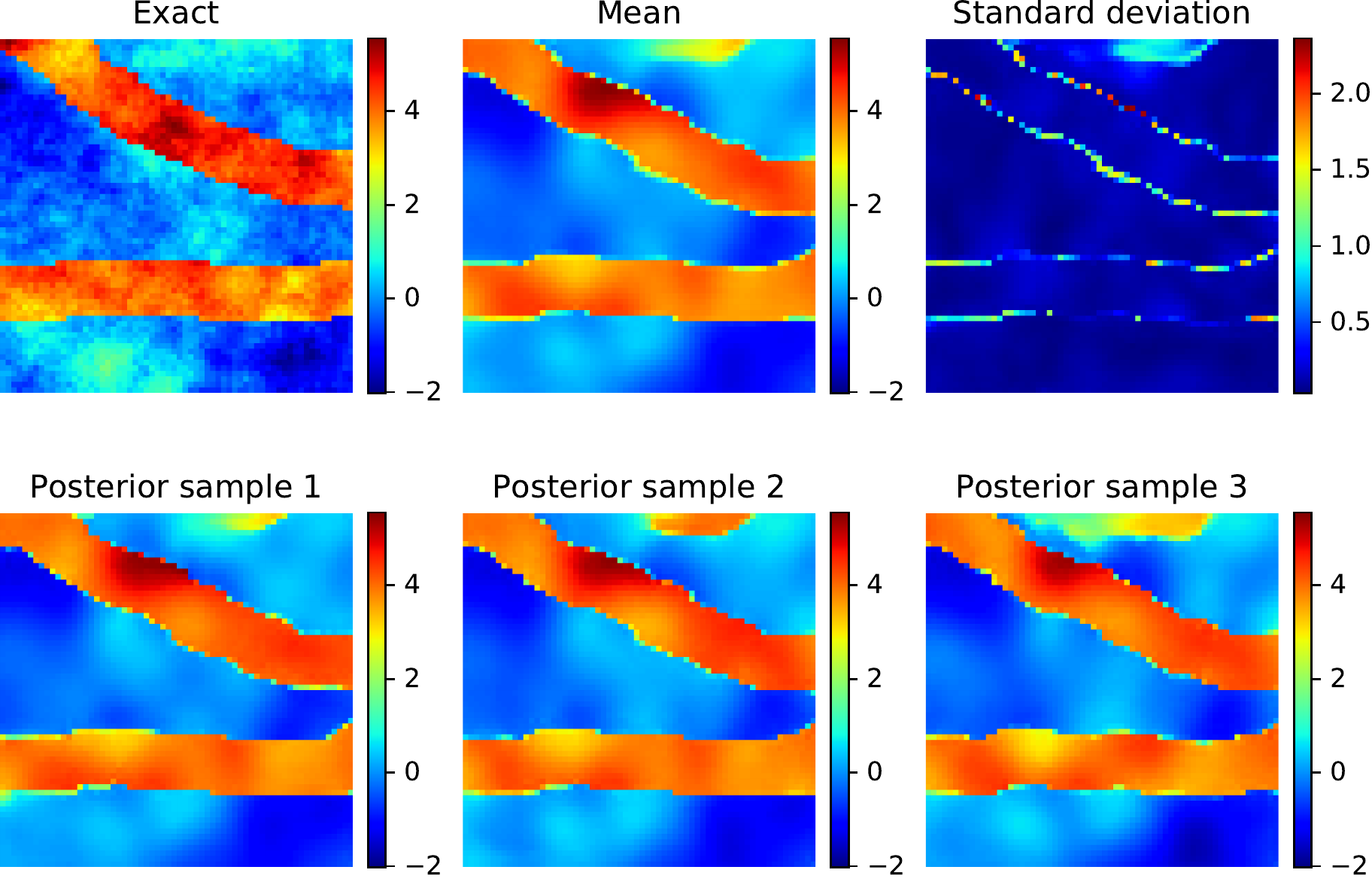}
	\caption{The two-scale estimation result with the desired $64 \times 64$ grid. The results involve  inference across scales. Inference is performed with respect to  $\bz_1 \in \mathbb{R}^{16}$ and $\bz_2^{\star} \in \mathbb{R}^{256}$ using Algorithm~\ref{multi_mcmc}.}
	\label{channel_16-64}
\end{figure}

\noindent \emph{Three scales $(16-32-64)$.}
To discuss the impact of the number of scales, we also studied the three-scale estimation with pre-trained three-scale generative models. The coarsest scale result was shown in Fig.~\ref{channel_16}. To refine the estimated results in the scale with $32 \times 32$ grid, we  run $7000$ iterations in Algorithm~\ref{multi_mcmc} with the pre-trained model $\bx_2 = \mu_{ \btheta_2}(\bz_2^{\star},\bz_1)$, where $\bx_2 \in \mathbb{R}^{32 \times 32}$, $\bz_2^{\star} \in \mathbb{R}^{64}$, and $\bz_1 \in \mathbb{R}^{16}$. Unlike the estimation in the coarsest-scale that  can only identify the location of the channels, the refined results with the $32 \times 32$ grid can also provide a good estimation regarding the spatial distribution of high- and low-values  as shown in Fig.~\ref{3scales_channel_32}. Since most salient features were captured in previous scales, the desired scale estimation becomes much easier. We only need to perform $5000$ iterations in Algorithm~\ref{multi_mcmc} with the pre-trained model $\bx_3 = \mu_{ \btheta_3}(\bz_3^{\star},\bz_2)$ to guarantee its convergence, where $\bx_3 \in \mathbb{R}^{64 \times 64}$, $\bz_3^{\star} \in \mathbb{R}^{256}$, and $\bz_2 \in \mathbb{R}^{80}$. Fig.~\ref{3scales_channel_64} summarizes the final three-scales inference results where we can notice that the estimation is more accurate than the previous two experiments in both capturing the channels and the local permeability details.

\begin{figure}[h]
	\centering
	\includegraphics[width=0.8\linewidth, height = 0.5\linewidth]{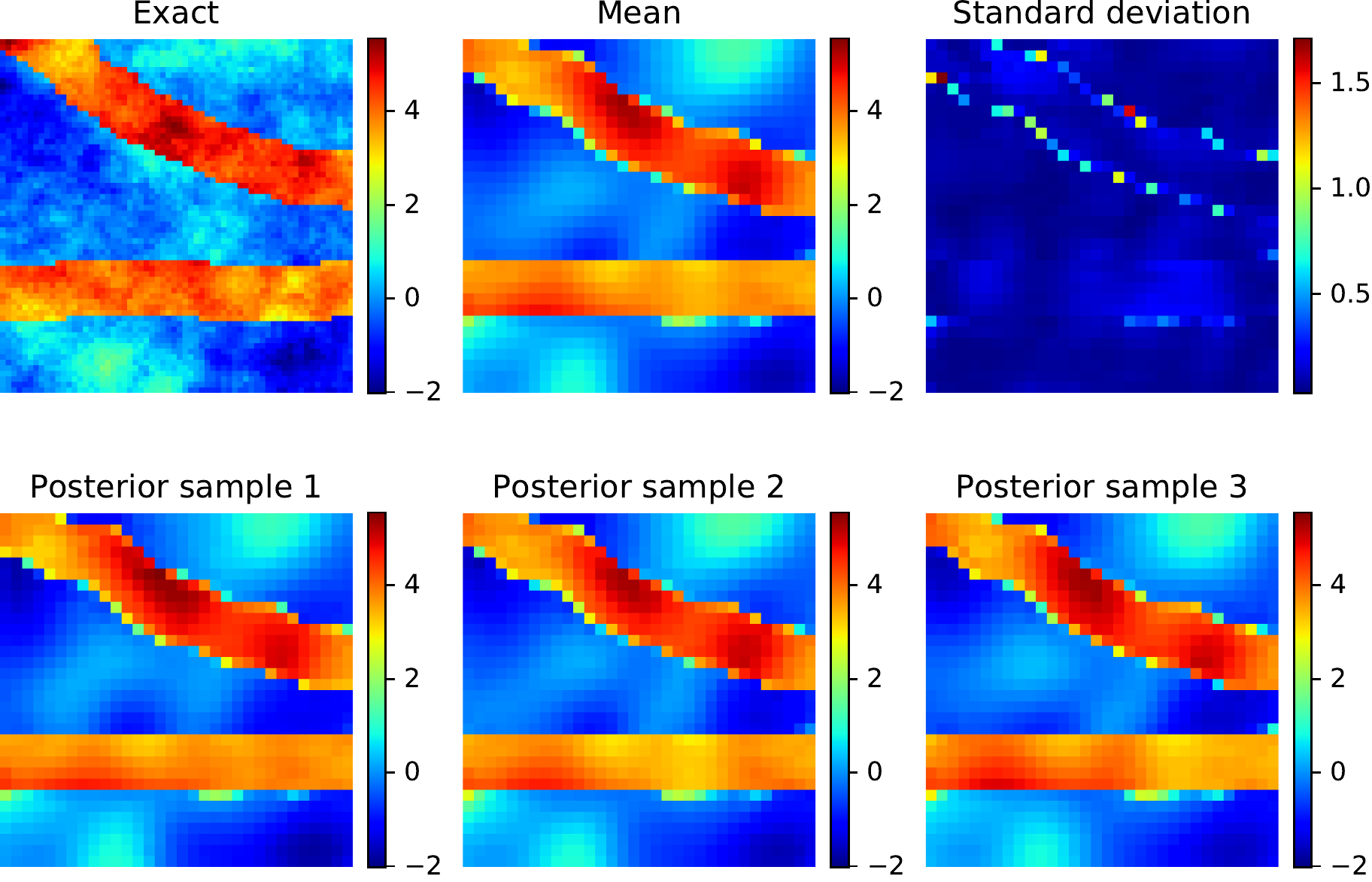}
	\caption{The three-scale estimation result with $32 \times 32$ grid, which needs to be corrected and refined in the finer-scale. The results involve   inference across scales. Inference is performed with respect to  $\bz_1 \in \mathbb{R}^{16}$ and $\bz_2^{\star} \in \mathbb{R}^{64}$ using Algorithm~\ref{multi_mcmc}.}
	\label{3scales_channel_32}
\end{figure}

\begin{figure}[h]
	\centering
	\includegraphics[width=0.8\linewidth, height = 0.5\linewidth]{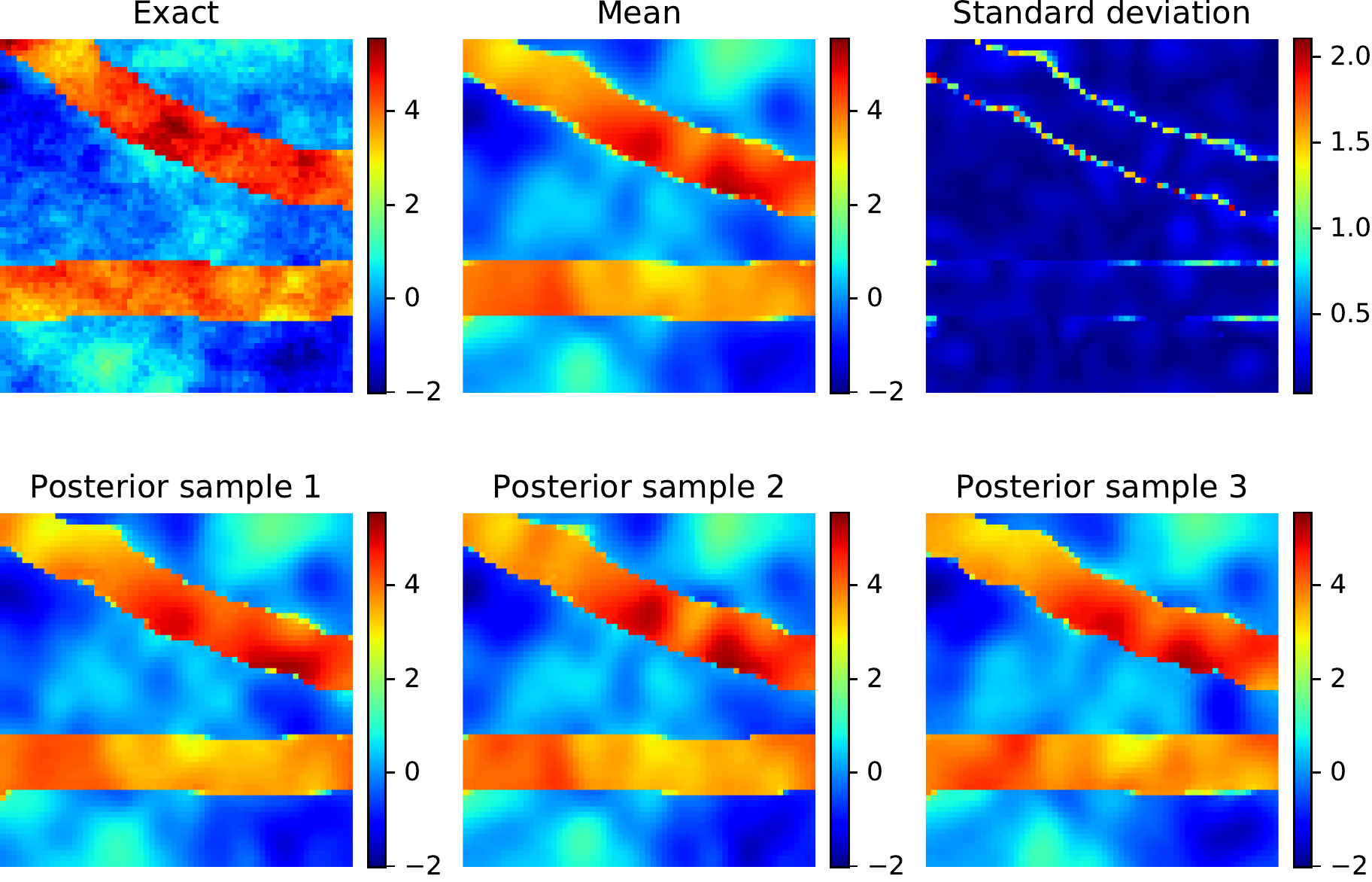}
	\caption{The three-scale estimation result with the desired $64 \times 64$ grid. The results involve  inference across scales. Inference was performed with respect to  $\bz_2 \in \mathbb{R}^{80}$ and $\bz_3^{\star} \in \mathbb{R}^{256}$ is performed using Algorithm~\ref{multi_mcmc}.}
	\label{3scales_channel_64}
\end{figure}

As with the previous examples, we are interested in the posterior estimation in the scale of the $64 \times 64$ grid. The estimated results with uncertainty from the left bottom corner to the right top corner of the log-permeability field are given in Fig.~\ref{Channel_error_bar}. It can be seen that the posterior means of all experiments are close to the exact value, while the local features estimation by the reference method is worse than the results by the multiscale method. For a channelized log-permeability, 
the location of the channels will greatly impact the predicted pressure values in the observation/sensor locations.  Unlike the parameter only decoded from one latent variable in the single scale method, the  latent variables played different roles in MDGM to exploit multiscale characteristics.  The channels were identified from the coarse-scale inference, whereas the fine-scale inference corrects and refines the coarse-scale estimation while exploring and learning local features.

\begin{figure}[h]
	\centering
	\subfloat[]{
    \includegraphics[width=0.3\textwidth]{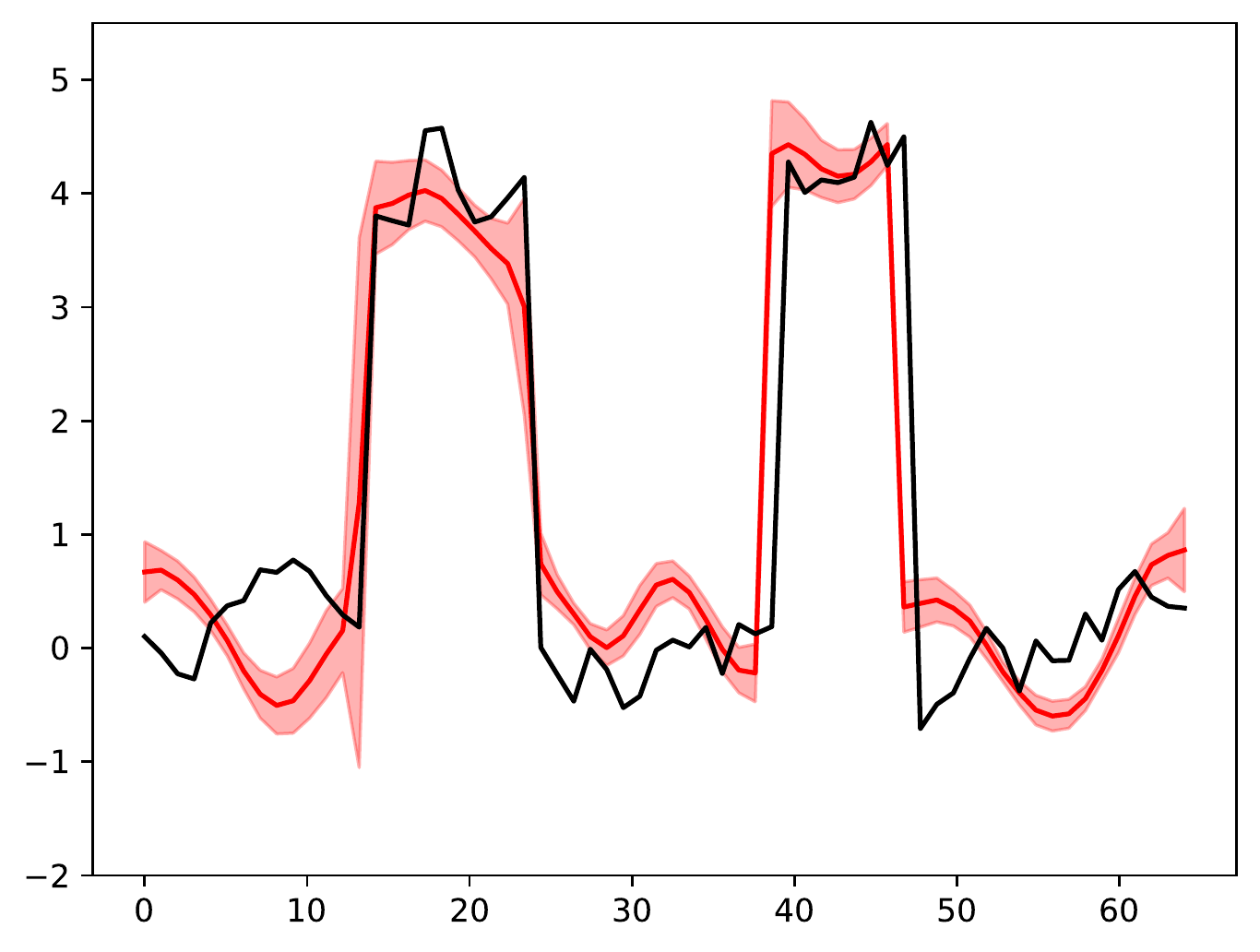}}
    \quad
    \subfloat[]{%
    \includegraphics[width=0.3\textwidth]{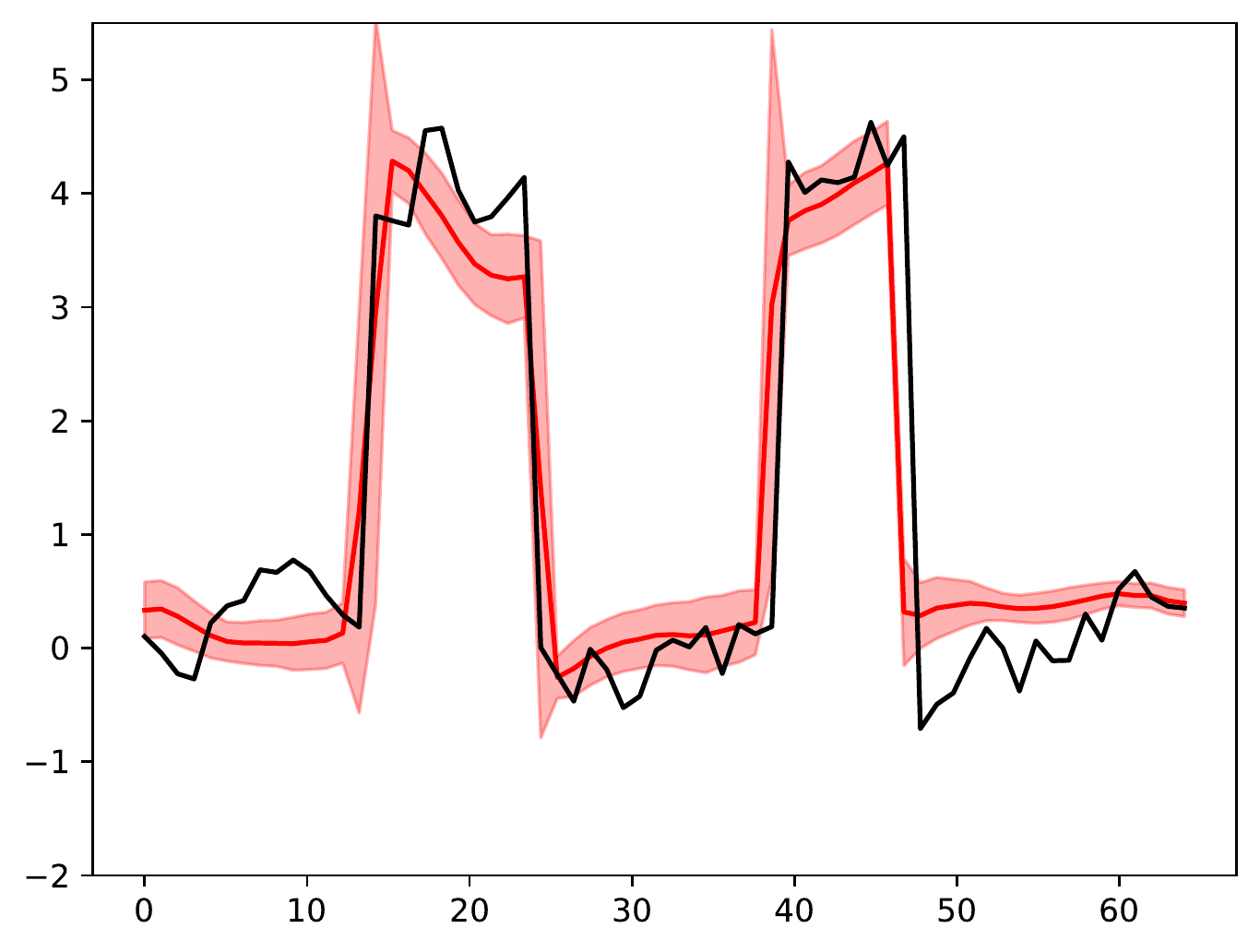}}
    \quad
    \subfloat[]{%
    \includegraphics[width=0.3\textwidth]{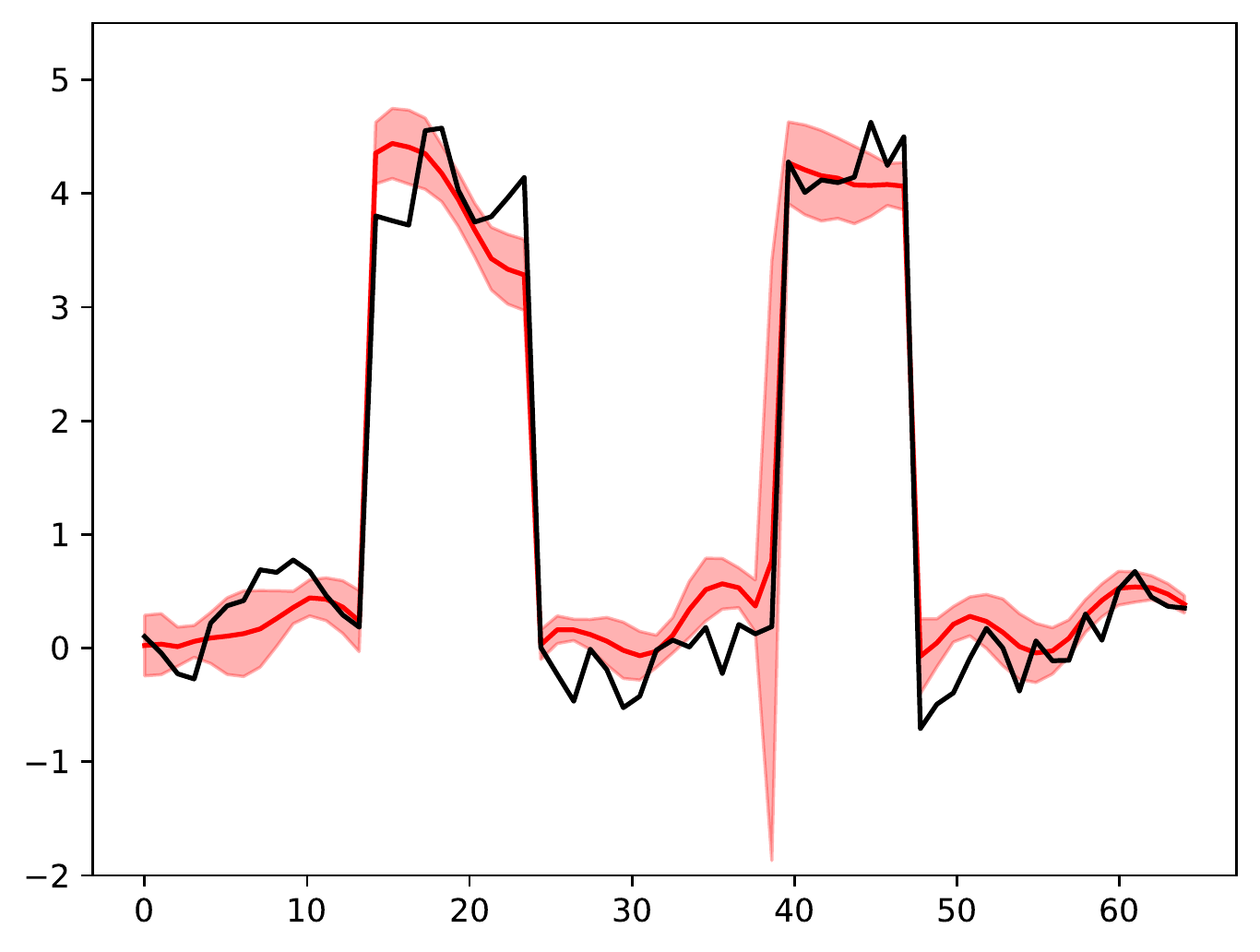}}
    \caption{The estimated log-permeability values in the line from the left bottom corner to the right top corner  on a  $64 \times 64$ grid. The results from left to right are obtained by  (a) reference $(64)$, (b) two-scales $(16-64)$, (c) three-scales $(16-32-64)$ experiment, respectively. The black and red lines show the true and the posterior mean log-permeability, respectively. The shaded region shows values within two standard deviations of the mean.}
    \label{Channel_error_bar}
\end{figure}

The computational cost of the forward model in each experiment is given in Table~\ref{Channel_cost}. For non-Gaussian random fields, the superiority of the proposed method is more prominent than the Gaussian random field case. The reference method takes about $2.9$  and $4.4$ more CPU time than  the two-scales and three-scales methods, respectively. The main difference among them is the computational cost in the desired scale. We designed the multiscale scheme to reduce the  computational cost in the fine-scale with some additional coarse-scale forward evaluations. As a consequence, this leads to significant computational savings during Bayesian inference. One can apply the proposed method to a computationally more intensive model. Correspondingly, more reduced computational time can be expected. The acceptance rate of each MCMC implementation is shown in Table~\ref{Channel_acceptance_ratio}. For the reference single-scale experiment, each dimension of the latent variable has equivalent importance in the generation of the log-permeability. At the same time, channels are the most salient features in this Bayesian inversion example. To infer a high-dimensional latent variable one needs to explore a high-dimensional state space, which will lead to a very low acceptance rate. For the multiscale method, the acceptance rate in the coarsest-scale is still very low. Based on the coarse-scale estimation, applying 
 Algorithm~\ref{multi_mcmc} to refine and correct the estimation will become much easier since two proposal distributions with different step sizes provide an informed and efficient exploration. Using a small step size for the low-dimensional latent variable to correct the global features and using a big step size for the high-dimensional latent variable to explore local features is goal-oriented, which not only improves the acceptance rate but also leads to a better estimation. 

\begin{table}[h]
	\caption{The iterations (its) and approximated cpu time in seconds(s) for solving the forward model in different experiments for the non-Gaussian random field  test problem.} 
	\centering	
	\begin{tabular}{ccccc}  
		\hline
		Experiment &  $16 \times 16$   & $32 \times 32$ &  $64 \times 64$ & Total\\ \hline
		\multirow{2}{*}{one scale ($64$)$^{\star}$} & - & - & $30000$ its & $30000$ its \\
		& - & - & $93000$ s & $93000$ s  \\\hline
		\multirow{2}{*}{two scales ($16-64$)} & $7000$ its & - & $10000$ its & $17000$ its \\
		& 910 s & - & $31000$ s & $31910$ s  \\\hline
		\multirow{2}{*}{three scales ($16-32-64$)} & $7000$ its & $7000$ its & $5000$ its & $19000$ its \\
		& $910$ s & $4690$ s & $15500$ s & $21100$ s \\\hline
	\end{tabular}
	\label{Channel_cost}
\end{table}

\begin{table}[h]
	\caption{The acceptance ratio of the MH algorithm in different experiments for the non-Gaussian random field test problem.} 
	\centering	
	\begin{tabular}{cccc}  
		\hline
		Experiment & $16 \times 16$   & $32 \times 32$ & $64 \times 64$\\ \hline
		one scale ($64$) & - & - &  5.47\%\\ \hline
		two scales ($16-64$) &2.13\% & - & 12.07\% \\\hline
		three scales ($16-32-64$) &2.13\%& 9.90\% & 40.28\% \\\hline
	\end{tabular}
	\label{Channel_acceptance_ratio}
\end{table}

As with the Gaussian case, we use three evaluation metrics to assess the convergence in the desired scale with the $64 \times 64$ grid. Fig.~\ref{convergence_Channel} indicates a better performance in the channelized log-permeability estimation. It is obvious that the reference single-scale method takes a long exploration to converge to the stationary distribution. As discussed earlier, the inferred latent variable of the single-scale inference is still high-dimensional, and each dimension keeps equivalent importance in the  log-permeability generation. In such scenarios, it is easy to get trapped in local modes. Fig.~\ref{Channel-state} provides the state evolution of the Markov chain to infer the log-permeability on the $64 \times 64$ grid. The single-scale method captures well the true solution at about the $15000$--th iteration, while the multiscale method only needs to refine the coarse-scale estimation for  fine-scale inference. By greatly reducing the fine-scale forward model evaluations, the computational burden  in Bayesian inverse problems is reduced. For a parameter like the  channelized log-permeability with obvious multiscale characteristics, it can readily be seen that the three-scales experiment performs better than the two-scales experiment with respect to stability, efficiency and accuracy.

\begin{figure}[h]
	\centering
	\subfloat[]{
    \includegraphics[width=0.3\textwidth]{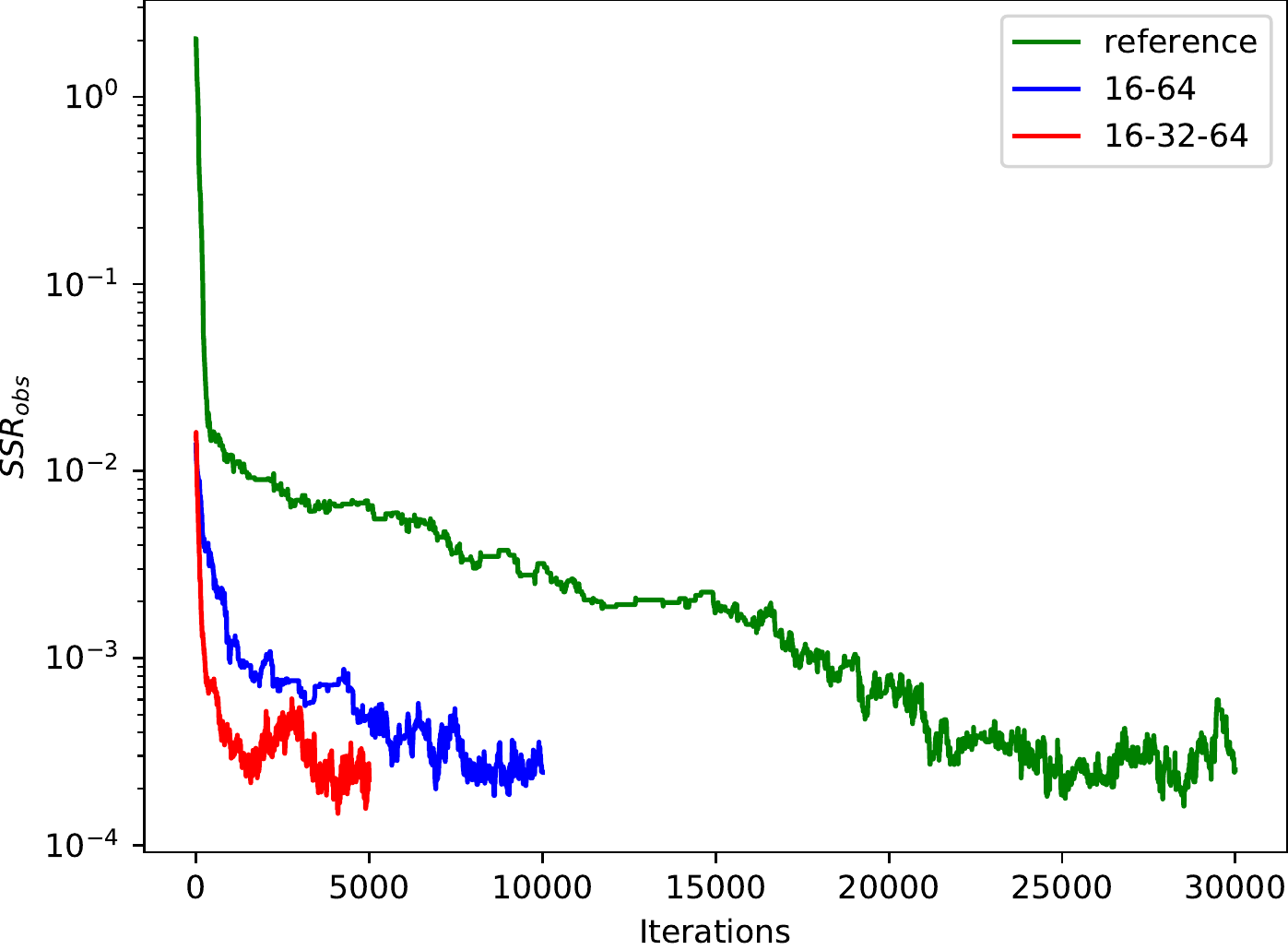}}
    \quad
    \subfloat[]{
    \includegraphics[width=0.3\textwidth]{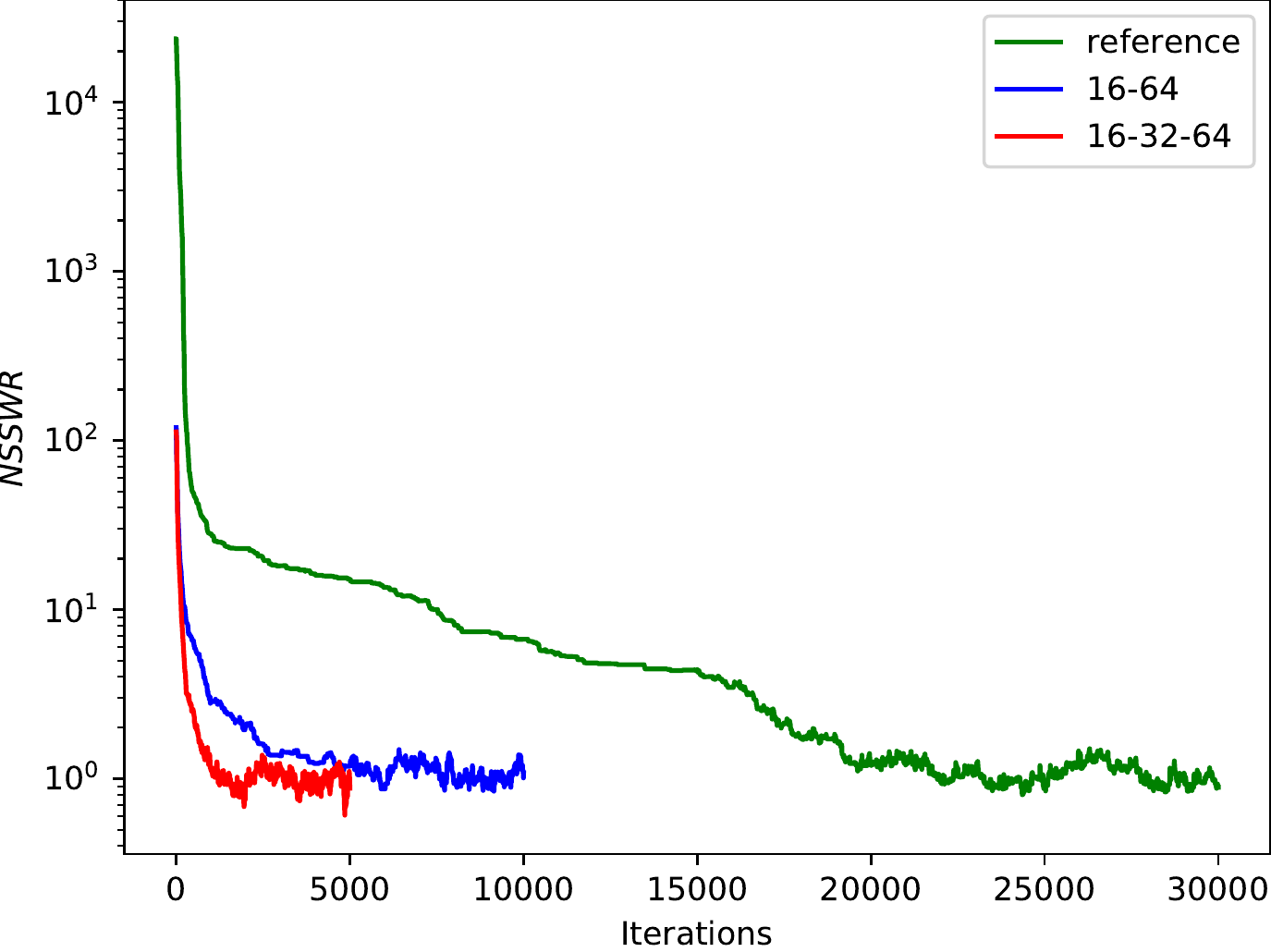}}
    \quad
    \subfloat[]{
    \includegraphics[width=0.3\textwidth]{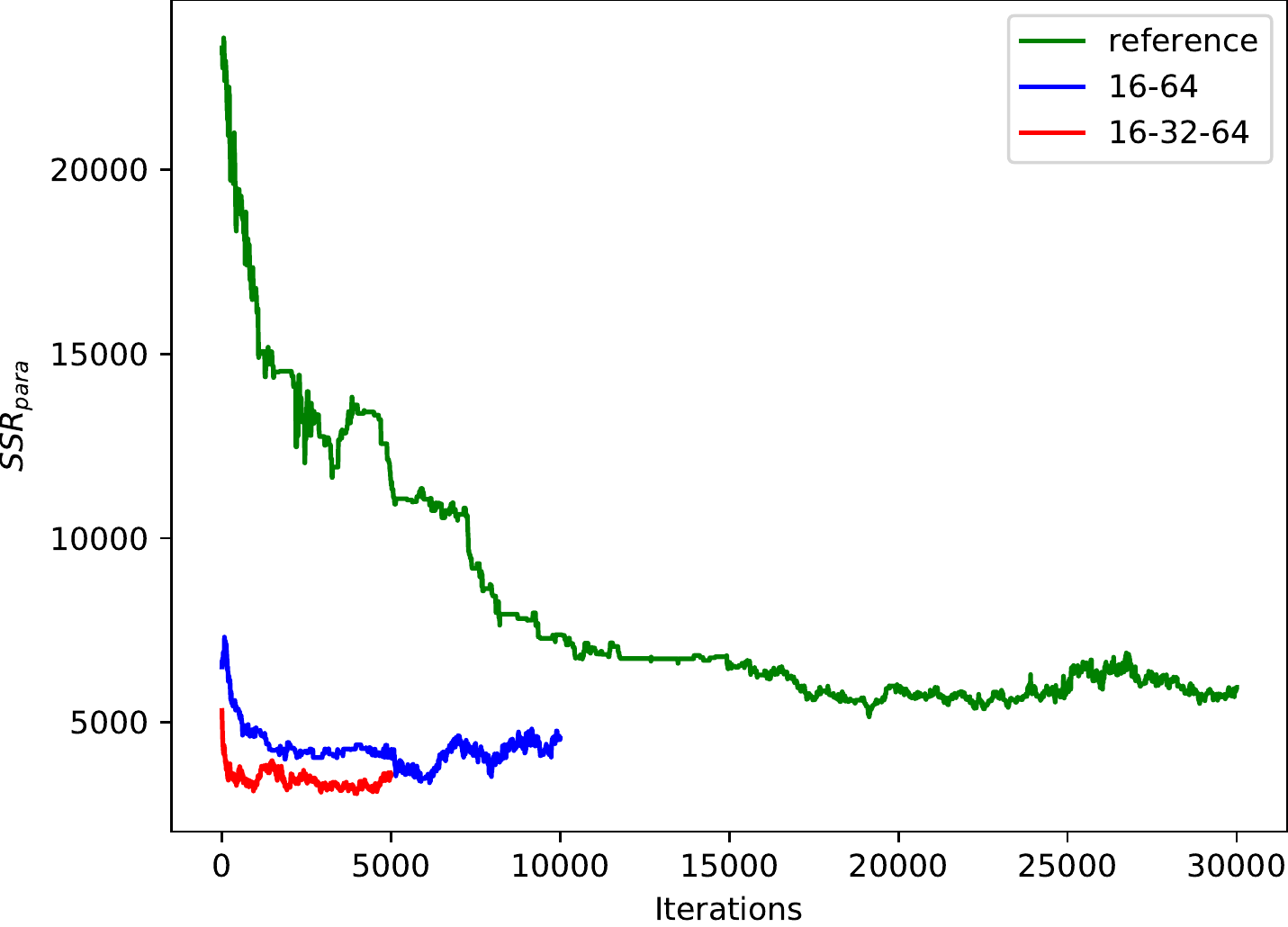}}
	\caption{The convergence of the Markov chain used for the non-Gaussian log-permeability field estimation with a $64 \times 64$ grid. The evaluation metrics from left to right are (a) the SSR of observable pressure values  (b) NSSWR values (c) the SSR of parameter field, respectively.}
	\label{convergence_Channel}
\end{figure}

\begin{figure}[h]
	\centering
	\includegraphics[width=1.0\linewidth, height = 0.6\linewidth]{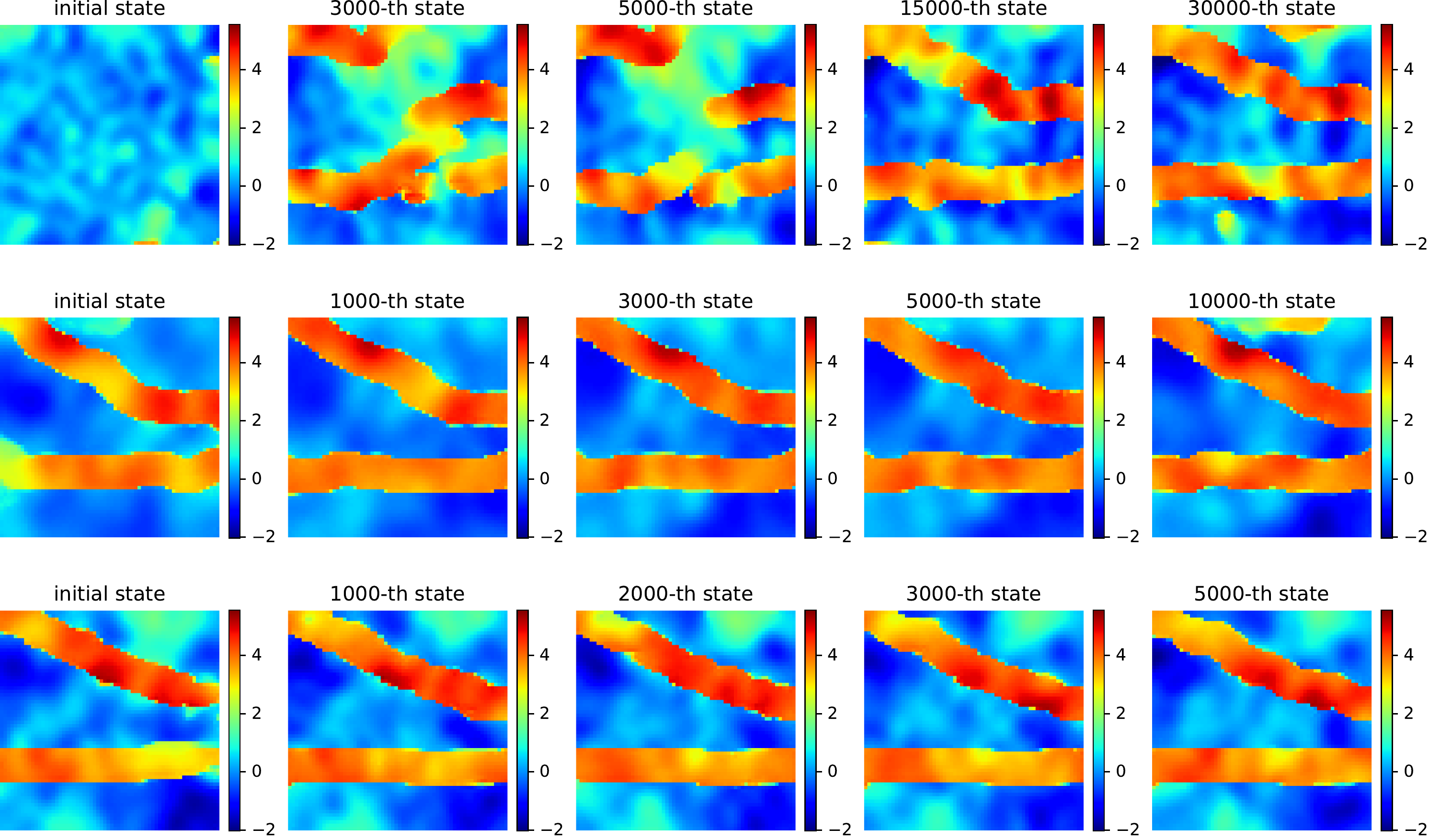}
	\caption{The states of the non-Gaussian log-permeability field with $64 \times 64$ grid in the Markov chain. The results from top to bottom row are obtained from (a) reference $(64)$, (b) two scales $(16-64)$, (c) three scales $(16-32-64)$ experiment, respectively. }
	\label{Channel-state}
\end{figure}

\section{Conclusions}\label{sec:Conclusions}

 In this work, we introduced a novel multiscale parameter estimation framework for Bayesian inverse problems based on a  multiscale deep generative model.   
 The deep generative model has been proven to be promising for the characterization of  complex spatially varying parameters. To exploit the multiscale characteristics, we extended the existing VAE-based deep generative model into a multiscale framework with multiple latent variables. Endowing the latent variables with different missions using training data at various scales, the low-dimensional latent variables can generate coarse-scales parameters and dominate the global features in finer-scale parameter generation, while the high-dimensional latent variables can enrich local details. We demonstrated the model with Gaussian and non-Gaussian parameter estimation. Combining pre-trained multiscale deep generative models with a multiscale inference strategy, we hierarchically performed inference from coarse- to fine-scale. 
 
 Benefited from the construction of the latent space in the multiscale generative model, the coarse-scale estimation explores in the low-dimensional latent space and searches for all possible global patterns by invoking the extremely cheap forward model. Using previous estimation results, the fine-scale estimation refines the parameters by correcting the global features and enriching the local features using the expensive fine-scale forward model. It was demonstrated that coarse-scale estimation information could pass across scales via the designed latent space, which plays an important role in accelerated convergence. In the two test cases, the proposed method shows superior performance over the reference single-scale method in computational cost and accuracy. We also discussed in the non-Gaussian case, the importance of the number of scales considered in the generative model and parameter estimation.

 Some challenges and extensions are worthy to explore in the future. The fundamental requirement for the proposed method is to train a stable and desired multiscale deep generative model, which involves different setups for various types of parameters, like the number of training data, the number of scales, hyperparameter selection, and so on. Further study of  the multiscale generative model has promising applications on  super resolution, multiscale uncertainty quantification, and so on. In addition, note that we use the simple Metropolis-–Hastings algorithm with pCN proposal distribution as the Bayesian inference method. Enhanced sampling techniques like sequential MC (SMC) that can realize parallel computation will result in accelerated exploration and high-efficiency.

\section*{Acknowledgements}
\noindent
N.Z. acknowledges support  from the Defense Advanced Research Projects Agency (DARPA) under the Physics of Artificial Intelligence (PAI) program (contract HR$00111890034$). 
Computing resources were provided by the AFOSR Office of Scientific Research through the DURIP program and by the University of Notre Dame's Center for Research Computing (CRC).

\appendix

\section{Neural network architectures for the encoder and decoder networks}\label{app:nn}
In this work, we use convolutional neural networks (CNNs)~\cite{krizhevsky2012imagenet} for the encoder and decoder models. CNNs are more effective in capturing multiscale features than fully-connected neural networks and allow modeling of the hierarchical nature of the features~\cite{zeiler2014visualizing}.  The implemented encoder and decoder neural networks~\cite{huang2017densely,zhang2018residual,he2016deep,mo2019integration}  are illustrated in Fig.~\ref{neural_networks}. The batch size is $64$ for all implementations.

\begin{figure}[!htp]
	\centering
	\includegraphics[width=1\linewidth, height = 0.89\linewidth]{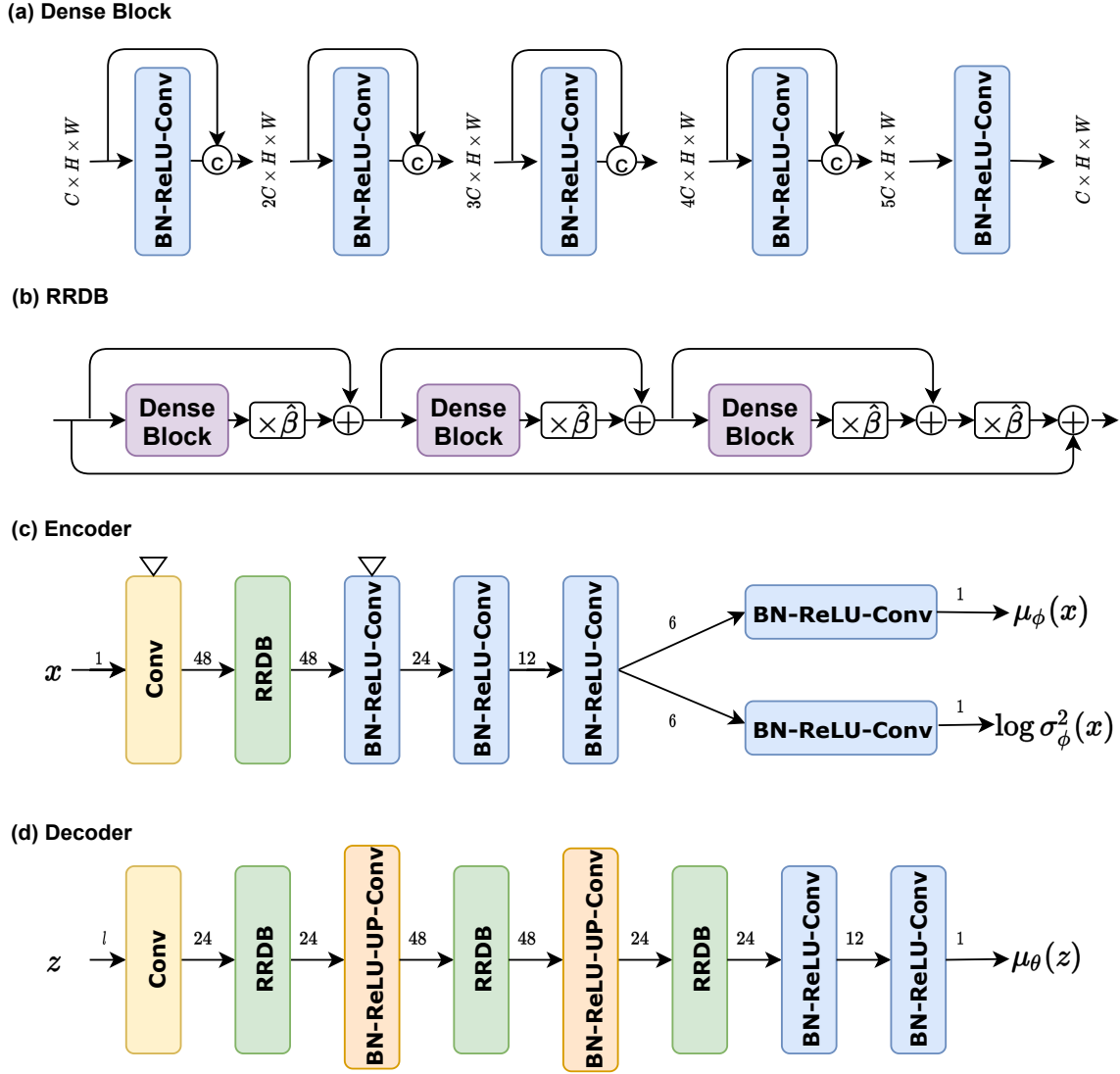}
	\caption{(a) Dense block with five layers. Its input consists of $C$ feature maps/channels with size $H \times W$. In each layer, its output is computed successively by three operators, i.e. Batch Normalization (BN), Rectified Linear Units (ReLU), and Convolution (Conv), where $C$ is specified above the arrows in the   encoder and decoder architectures in sub-figures (c) and (d). The output feature maps are  concatenated with the input feature maps. The concatenated feature maps are the input to the next layer.  (b) A residual-in-residual dense block (RRDB) using $3$ residual dense blocks. In each block, the output is multiplied by a constant $\hat{\beta}$ and then is added to the input with the result serving as the input for the next dense block. We let $\hat{\beta}$ be  $0.2$ in this paper.  (c) Encoder neural network architecture. The feature map size $H \times W$ is halved by Conv operator in $\triangledown$  with a stride $2$. (d) Decoder neural network architecture. The number of feature maps of the input is equal to the scale number $l$.  The feature map size $H \times W$ is doubled by applying the nearest upsampling (UP) operator. }
	\label{neural_networks}
\end{figure}

\section{Concatenation of latent variables}\label{app:cat}
In the MDGM, the $l$-th scale encoder network includes two parts i.e. the augmented encoder network and the $(l-1)$-th scale encoder network (see Fig.~\ref{multi_vae1}). Since the training  is recursive, the $(l-1)$-th encoder network also includes the augmented encoder network and the $(l-2)$-th scale encoder networks and so on. The latent variable in the $l$-th scale is $\bz_l = (\bz_1, \bz_2^{\star}, \dots, \bz_l^{\star})$. In this paper, all the augmented encoder  and decoder networks employ the same architectures  described in \ref{app:nn}, so the elements $(\bz_1, \bz_2^{\star}, \dots, \bz_l^{\star})$ in $\bz_l$ have proportional sizes  depending on their input size. For example, we can obtain $\bz_1 = q_{\bphi_1}(\bz_1 | \bx_1)$ and $\bz_2^{\star} = q_{\bphi_2^{\star}}(\bz_2^{\star} | \bx_2)$ using the encoder model, where $\bz_1 \in \mathbb{R}^{4\times4}$, $\bz_2^{\star} \in \mathbb{R}^{8\times8}$, $\bx_1 \in \mathbb{R}^{16\times16}$, and $\bx_2 \in \mathbb{R}^{32\times32}$.  The input size of the  $l$-th scale decoder network  should be $C \times H \times W$. For example, in the previous example, the size of $\bz_2$ is $2 \times 8 \times 8$), where the number of channels is equal to $C=l$ since $\bz_l$ is stacked by the outputs of $l$ encoders. Also, $H \times W$ is the size of $\bz_l^{\star}$, and $\bz_l$ must be reshaped as a tensor in such size.

To make  the other $(l-1)$ elements (i.e. $\bz_1, \bz_2^{\star}, \dots, \bz_{l-1}^{\star}$) in $\bz_l$ to 
be of consistent size with $\bz_l^{\star}$, we use the  \textsl{Upsample} operator \footnote{https://pytorch.org/docs/master/generated/torch.nn.Upsample.html} in the Pytorch library~\cite{paszke2019pytorch} over these elements and then concatenate\footnote{https://pytorch.org/docs/master/generated/torch.cat.html} all of them as input $\bz_l \in \mathbb{R}^{C \times H \times W}$ for the $l$-th scale decoder networks. The scale factor in the  \textsl{Upsample} operator depends on the output  and input sizes. Their sizes satisfy the following relationship:

\begin{equation}
    \begin{aligned}
    &H_{out} = H_{in} \times \text{scale factor}, \\
    &W_{out} = W_{in} \times \text{scale factor}, 
    \end{aligned}
\end{equation}
where $[H_{in} \times W_{in}]$ and $[H_{out} \times W_{out}]$ are the input and output sizes, respectively. We used the nearest mode in \textsl{Upsample} operator. A simple example is given in Fig.~\ref{cat_1} to illustrate this process   (the scale factor here is $2$).

\begin{figure}[H]
	\centering
	\includegraphics[width=0.42\linewidth, height = 0.2\linewidth]{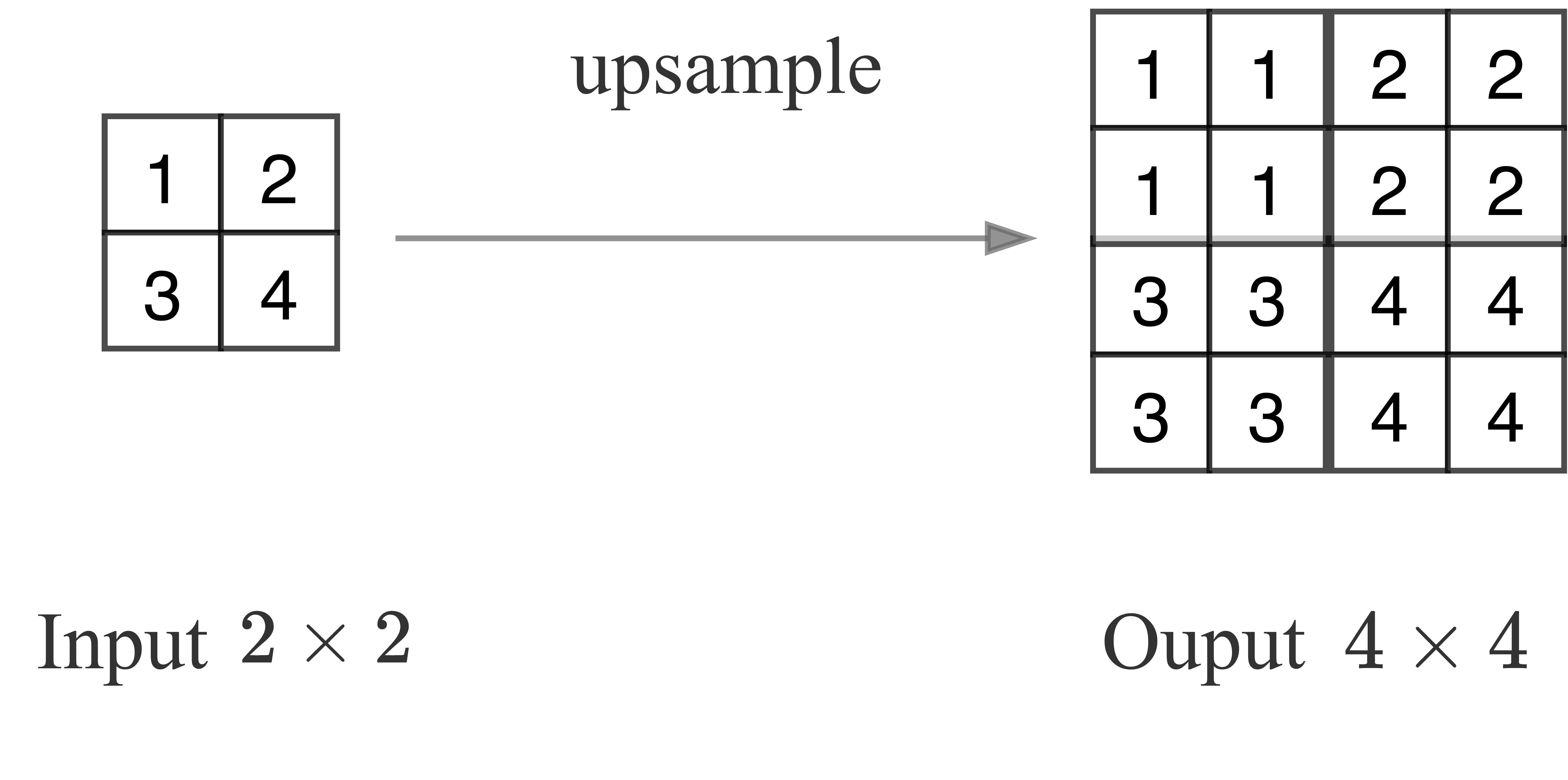}
	\caption{The \textsl{Upsample} operator example with nearest mode, where the input image size is $2 \times 2$  and the output size is $4 \times 4$.}
	\label{cat_1}
\end{figure}

\bibliography{Bibliography}

\end{document}